\definecolor{bananamania}{rgb}{0.98, 0.91, 0.71}
\colorlet{codebg}{bananamania!30!white}
\definecolor{mydarkblue}{rgb}{0,0.08,0.45}
\colorlet{hl}{purple}
\colorlet{treefactor}{teal!50!white}
\definecolor{budgetfactor}{rgb}{0.82, 0.1, 0.26}
\newcommand\Comment[1]{\hfill \textcolor{gray}{\#~\textit{#1}}}
\newcommand{\reals}{\mathbb{R}}
\newcommand{\simplex}{\triangle}
\newcommand{\p}{\bm{p}}  %
\newcommand{\pt}{\bm{\theta}}  %
\newcommand{\ppr}{\eta}
\newcommand{\pr}{\bm{\ppr}}  %
\newcommand{\mmg}{\mu}
\newcommand{\mg}{\bm{\mmg}}
\newcommand{\lbd}{\bm{\lambda}}
\newcommand{\DP}[2]{\langle#1,#2\rangle}
\newcommand{\pfrac}[2]{\frac{\partial #1}{\partial #2}}
\newcommand{\Mpo}{\mathcal{M}}
\newcommand{\Mb}{\bar{\bm{M}}}
\newcommand\setto{\stackrel{!}{=}}
\newcommand{\YY}{\mathcal{Y}}
\newcommand{\FF}{\mathcal{F}}
\renewcommand{\aa}{\bm{a}}
\newcommand{\mm}{\bm{m}}
\newcommand{\nn}{\bm{n}}
\renewcommand{\AA}{\bm{A}}
\newcommand{\MM}{\bm{M}}
\newcommand{\NN}{\bm{N}}
\newcommand{\CC}{\bm{C}}
\newcommand{\CCs}{\widetilde{\CC}}
\newcommand{\MMs}{\widetilde{\MM}}
\newcommand{\ZZ}{\bm{Z}}
\newcommand{\BB}{\bar{\bm{B}}}
\newcommand{\QQ}{\bm{Q}}
\newcommand{\JJ}{\bm{J}}
\newcommand{\MQM}{\Mb \QQ \Mb^\top}
\newcommand{\RR}{\mathbb{R}}
\newcommand{\smap}{SparseMAP\xspace}
\newcommand{\lpsmap}{LP-SparseMAP\xspace}
\newcommand{\adq}{AD\textsuperscript{3}\xspace}
\newcommand\hlf{\nicefrac{1}{2}~}
\newcommand*{\wrt}{\textit{w.\nobreak\hspace{.07em}r.\nobreak\hspace{.07em}t.}\@\xspace}
\newcommand*{\eg}{\textit{e.\nobreak\hspace{.07em}g.}\@\xspace}
\newcommand*{\ie}{\textit{i.\nobreak\hspace{.07em}e.}\@\xspace}
\newcommand*{\cf}{\textit{cf.}\@\xspace}
\DeclarePairedDelimiter{\iv}{\llbracket}{\rrbracket}
\DeclareMathOperator{\diag}{diag}
\DeclareMathOperator{\proj}{\mathbf{proj}}
\DeclareMathOperator{\bdiag}{bdiag}
\DeclareMathOperator{\conv}{conv}
\DeclareMathOperator{\map}{\mathsf{MAP}}
\DeclareMathOperator{\marg}{\mathsf{Marginals}}
\DeclareMathOperator{\softmax}{\mathsf{softmax}}
\DeclareMathOperator*{\argmax}{arg\,max}
\DeclareMathOperator*{\argmin}{arg\,min}
\DeclareMathOperator{\clip}{clip}
\newcommand{\agr}[3]{#1_{(#2)} = #3}
\newcommand\eqnref[1]{Eq.\,\,\ref{#1}}
\newcommand\suppref[1]{App.\,\,\ref{supp:#1}}
\newtheorem{proposition}{Proposition}
\newtheorem{corollary}{Corollary}[proposition]
\newtheorem{lemma}{Lemma}
\icmltitlerunning{LP-SparseMAP}
\begin{document}

\twocolumn[
\icmltitle{LP-SparseMAP: Differentiable Relaxed Optimization \\for Sparse
Structured Prediction}

\begin{icmlauthorlist}
\icmlauthor{Vlad Niculae}{it}
\icmlauthor{Andr\'e F.~T.~Martins}{it,un,ist}
\end{icmlauthorlist}
\icmlaffiliation{it}{Instituto de Telecomunica\c{c}\~oes, Lisbon, Portugal}
\icmlaffiliation{ist}{Instituto Superior T\'ecnico, University of Lisbon,
Portugal}
\icmlaffiliation{un}{Unbabel, Lisbon, Portugal}
\icmlcorrespondingauthor{Vlad Niculae}{vlad@vene.ro}
\icmlcorrespondingauthor{Andr\'e
F.~T.~Martins}{andre.t.martins@tecnico.ulisboa.pt}
\icmlkeywords{Structured Prediction, Differentiable Optimization}

\vskip 0.3in
]

\printAffiliationsAndNotice{}

\begin{abstract}
Structured predictors require solving a combinatorial optimization problem over
a large number of structures, such as dependency trees or alignments.
When embedded as
structured hidden layers in a neural net, argmin differentiation and
efficient gradient computation are further required.
Recently, SparseMAP has been proposed as a differentiable,
sparse alternative to maximum a posteriori (MAP) and marginal inference.
SparseMAP returns an interpretable combination of a small number of structures;
its sparsity being the key to efficient optimization.
However, SparseMAP requires access to an exact MAP oracle in the structured
model, excluding, \eg, loopy graphical models or logic constraints, which generally require approximate inference.
In this paper, we introduce \emph{LP-SparseMAP}, an extension of SparseMAP
addressing this limitation via a {local polytope} relaxation.
LP-SparseMAP uses the flexible and powerful {%domain specific
language} of {factor graphs} to define
expressive hidden structures, supporting coarse decompositions,
hard logic constraints, and higher-order correlations.
We derive the forward and backward algorithms needed for using LP-SparseMAP as a
structured hidden or output layer. Experiments in three structured tasks show
benefits versus SparseMAP and Structured SVM.
\end{abstract}

\section{Introduction}
\begin{figure}[t]
\centering
\begin{tikzpicture}[
varon/.style={fill=gray},
vrbl/.style={draw,circle,fill=white,node distance=20pt and 55pt},
arclbl/.style 2 args={label={[fill=white,fill opacity=.9,text opacity=1]below:{\footnotesize #1$\rightarrow$#2}}},
fct/.style={fill},
treeedge/.style={color=treefactor,thick},
budedge/.style={color=budgetfactor,thick}
]

\pgfdeclarelayer{fg}
\pgfsetlayers{main,fg}

\begin{pgfonlayer}{fg}
\node[vrbl,arclbl={sleep}{the}]                         (st) at (0, 0) {};
\node[vrbl,arclbl={sleep}{clock}, right=of st]          (sc) {};
\node[vrbl,varon,arclbl={sleep}{around}, right=of sc]   (sa) {};
\node[vrbl,arclbl={the}{sleep}, below=of st]            (ts) {};
\node[vrbl,arclbl={the}{clock}, right=of ts]            (tc) {};
\node[vrbl,arclbl={the}{around}, right=of tc]           (ta) {};
\node[vrbl,arclbl={clock}{sleep}, below=of ts]          (cs) {};
\node[vrbl,varon,arclbl={clock}{the}, right=of cs]      (ct) {};
\node[vrbl,,arclbl={clock}{around}, right=of ct]        (ca) {};
\node[vrbl,arclbl={around}{sleep}, below=of cs]         (as) {};
\node[vrbl,arclbl={around}{the}, right=of as]           (at) {};
\node[vrbl,varon,arclbl={around}{clock}, right=of at]   (ac) {};
\end{pgfonlayer}

\node[fct, color=treefactor, left=40pt of ts] (tree) {};
\draw[treeedge] (tree) to[bend left=20] (st);
\draw[treeedge] (tree) to[bend left=30] (sc);
\draw[treeedge] (tree) to[bend left=40] (sa);
\draw[treeedge] (tree) to[bend right=5] (ts);
\draw[treeedge] (tree) to[bend right=5] (tc);
\draw[treeedge] (tree) to[bend right=5] (ta);
\draw[treeedge] (tree) to[bend right=10] (cs);
\draw[treeedge] (tree) to[bend right=15] (ct);
\draw[treeedge] (tree) to[bend right=18] (ca);
\draw[treeedge] (tree) to[bend right] (as);
\draw[treeedge] (tree) to[bend right=20] (at);
\draw[treeedge] (tree) to[bend right=28] (ac);

\node[fct, color=budgetfactor, above right=5pt and 25pt of st] (budget1) {};
\draw[budedge] (budget1) to (st);
\draw[budedge] (budget1) to (sc);
\draw[budedge] (budget1) to (sa);

\node[fct, color=budgetfactor, above right=5pt and 25pt of ts] (budget2) {};
\draw[budedge] (budget2) to (ts);
\draw[budedge] (budget2) to (tc);
\draw[budedge] (budget2) to (ta);

\node[fct, color=budgetfactor, above right=5pt and 25pt of cs] (budget3) {};
\draw[budedge] (budget3) to (cs);
\draw[budedge] (budget3) to (ct);
\draw[budedge] (budget3) to (ca);

\node[fct, color=budgetfactor, above right=5pt and 25pt of as] (budget4) {};
\draw[budedge] (budget4) to (as);
\draw[budedge] (budget4) to (at);
\draw[budedge] (budget4) to (ac);

\node[anchor=base,above left=25pt and 10pt of st] (sleep) {sleep};
\node[above=15pt of sleep] (rtn) {};
\node[anchor=base,right=0pt of sleep] (the) {the};
\node[anchor=base,right=0pt of the] (clock) {clock};
\node[anchor=base,right=0pt of clock] (around) {around};

\node[fct,color=budgetfactor, above left=45pt and 20pt of sa,
label=right:{\scriptsize \textsf{BUDGET} factor}] () {};
\node[fct,color=treefactor, above left=30pt and 20pt of sa,
label=right:{\scriptsize \textsf{TREE} factor}] () {};

\path[-triangle 45]
(rtn.south) edge (sleep.north)
(sleep.north) edge[bend left=30] (around.north)
(around.north) edge[bend right=20] (clock.north)
(clock.north) edge[bend right=20] (the.north);

\end{tikzpicture}
\begin{minted}[bgcolor=codebg,fontsize=\footnotesize]{python}
   fg = TorchFactorGraph()
   u = fg.variable_from(arc_scores)
   fg.add(DepTree(u))
   for k in range(n):
       fg.add(Budget(u[:, k], budget=5))
   fg.solve()
\end{minted}
\caption{\label{fig:valency}Parsing model with valency constraints: each
``head'' word is constrained to have at most $k$ ``modifiers''.
{\lpsmap} is the first method for tractable, differentiable decoding in such a
model. Below: abridged implementation using our library
(more in \suppref{code}).}
\end{figure}
The data processed by machine learning systems often has underlying structure:
for instance, language data has inter-word dependency trees, or alignments,
while image data can reveal object segments.  As downstream models
benefit from the hidden structure, practitioners typically resort to
\emph{pipelines}, training a structure predictor on labelled
data, and using its output as features.
This approach requires annotation, suffers from error propagation, and cannot
allow the structure predictor to adapt to the downstream task.

Instead, a promising direction is to treat structure as latent, or
\textbf{hidden}: learning a structure predictor without supervision,
together with the downstream model in an end-to-end fashion.
Several recent approaches were proposed to tackle this, based on differentiating
through marginal inference \citep{kim-structuredattn,lapata}, noisy gradient
estimates \citep{spigot,rlspinn},
or both \citep{caio-iclr,caio-acl}.
The work in this area
requires specialized, structure-specific algorithms either for
computing gradients or for sampling, limiting the choice of the practitioner
to a catalogue of supported types structure.
A slightly more general approach is \textbf{\smap} \citep{sparsemap}, which
is differentiable and outputs combinations of a small number of structures, requiring only an
algorithm for finding the highest-scoring structure (maximum a posteriori, or
MAP).
When increased expressivity is required, for instance through logic constraints
or higher-order interactions, the search space becomes much more complicated,
and MAP is typically intractable.
For example, adding constraints on the depth of a parse tree makes the problems NP-hard.
Our work improves the hidden structure modeling freedom available to
practitioners, as follows.
\begin{itemize}[nolistsep,itemsep=0.5em]
\item We propose a generic method for differentiable structured hidden layers,
based on the flexible domain-specific language of \textbf{factor graphs},
familiar to many structured prediction practitioners.

\item We derive an efficient and globally-convergent ADMM algorithm for the
forward pass.

\item We prove a compact, efficient form for the backward pass,
reusing quantities precomputed in the forward pass,
without having to unroll a computation graph.

\item Our overall method is \textbf{modular}: new factor types can be added to
our toolkit just by providing a MAP oracle, invoked by the generic forward and
backward pass.

\item
The generic approach may be overridden when specialized algorithms are
available. We derive efficient specialized algorithms
for core building block factors such as pairwise, logical \textsf{OR}, negation, budget constraints, \emph{etc}., ensuring our
toolkit is expressive \emph{out-of-the-box}.
\end{itemize}

We show empirical improvements on inducing latent trees
on arithmetic expressions,
bidirectional alignments in natural language inference, and multilabel
classification.
Our library, with C++ and python frontends,
is available at \url{https://github.com/deep-spin/lp-sparsemap}.

\section{Background}
\subsection{Notation}
We denote scalars, vectors and matrices as $a$, $\bm{a}$, and $\bm{A}$,
respectively.
The set of indices $\{1, \dots, d\}$ is denoted $[d]$.
The Iverson bracket $\iv{C}$ takes the value $1$ if the condition $C$ is true,
otherwise $0$.
The indicator vector $\bm{e}_i$ is defined as $[\bm{e}_i]_k \coloneqq
\iv{i=k}$.
The $i$\textsuperscript{th} column of matrix $\bm{A}$ is
$\bm{a}_i$.
The canonical simplex is
$\simplex \coloneqq \{ \p \in \reals^d~:~\DP{\bm{1}}{\p} = 1, \bm{p} \geq \bm{0}\},$
and the convex hull is
$\conv\{ \bm{a}_1, \dots, \bm{a}_d\} \coloneqq \{ \bm{A}\p : \p \in \simplex\}$.
We denote row-wise stacking of $\bm{A}_i \in \reals^{m_i
\times d}$ as
$[\bm{A}_1, \dots,  \bm{A}_k] \in \reals^{\left(\sum_i m_i\right) \times d}$.
Particularly, $[\bm{a},
\bm{b}]$ is the concatenation of two (column) vectors.
Given a vector $\bm{b} \in \reals^d$, $\diag(\bm{b}) \in \reals^{d \times d}$
is the diagonal matrix with $\bm{b}$ along the diagonal.
Given matrices $\bm{B}_1, \dots, \bm{B}_k$
of arbitrary dimensions $\bm{B}_i \in \reals^{m_i \times n_i}$,
define the block-diagonal matrix\\
{\renewcommand*{\arraycolsep}{1pt}%
$\bdiag(\bm{B}_1, \dots, \bm{B}_k) =
{\scriptsize
\begin{bmatrix}
        \bm{B}_{1} & \cdots & \bm{0}\\
        \vdots & \ddots & \vdots \\
        \bm{0} & \cdots & \bm{B}_{k} \\
    \end{bmatrix}}
\in \reals^{\sum_i\!m_i \times \sum_i\!n_i}$.}

\subsection{Tractable structured problems}

Structured prediction involves searching for valid structures over a large,
combinatorial space $y \in \YY$.  We assign a vector representation $\aa_y$ to
each structure. For instance, we may consider structures to be joint assignments
of $d$ binary variables (corresponding to parts of the structure)
and define $(\aa_y)_i = 1$ if variable $i$ is turned on
in structure $y$, else $0$. The set of \emph{valid structures} $\mathcal{Y}$ is
typically non-trivial. For example, in \emph{matching} problems between $n$
workers and $n$ tasks, we have $d=n^2$ binary variables, but the only legal
assignments give exactly one task to each worker, and one worker to each task.

\paragraph{Maximization (MAP).}
Given a score vector over parts $\pr$,
we assign a score $\pt_y = \DP{\aa_y}{\pr}$ to each structure.
Assembling all $\aa_y$ as columns of a matrix $\AA$,
the highest-scoring structure is the one maximizing
\begin{equation}\label{eqn:map}
\max_{y \in \YY} \DP{\pr}{\aa_y}
= \max_{\p \in \simplex} \DP{\pr}{\AA\p}.
\end{equation}

$\Mpo_{\AA} = \conv \{ \aa_y : y \in \YY \}$ is called the marginal
polytope
\citep{wainwright},
and points $\mg \in \Mpo_{\AA}$
are expectations
$\mathbb{E}_{y \sim \p}[\aa_y]$ under some
$\p \in \simplex$.

In the sequel, we split $\AA = [\MM,\NN]$ such that $\mg = \MM \p$ is the output
of interest, (\eg, variable assignments), while $\NN \p$ captures additional
structures or interactions (\eg, transitions in sequence tagging).
We denote the corresponding division of the score vector as $\pr = [\pr_M,
\pr_N]$.
This distinction is not essential, as we may always take $\MM = \AA$
and $\NN=[]$
(\ie, treat additional interactions as first-class variables),
but it is more consistent with pairwise Markov Random Fields (MRF).

\textbf{Examples.} A model with $3$ variables and an \textsf{XOR}
constraint (exactly one variable may be on) has possible configurations
$\mm_y = \bm{e}_y$ for $y \in \{1, 2, 3\}$, thus $\MM=\bm{I}$, and no
additionals ($\NN=[]$).
A model with the same dimension but without the constraint has all $2^3$ possible
configurations as columns of $\MM$, still with no additionals.
One such configuration is $y=\texttt{011}$, with $\mm_{y} = [0, 1, 1]$.
(Throughout this paper, $y$ is an arbitrary index type with no mathematical
properties; we may as well use an integer base-2 encoding.)
A \emph{sequence} model with no constraints will have the same valid
configurations, but will include additionals for transition potentials: here it
is sufficient to have an additional bit for each consecutive pair of
variables, assigning $1$ if both variables are simultaneously active. For
$y=\texttt{011}$ this gives $\nn_{y} = [0, 1]$.

\paragraph{Optimization as a hidden layer.}
Hidden layers in a neural network are vector-to-vector mappings, and learning is
typically done using stochastic gradients. We may cast structured maximization
in this framework.
Assuming fixed tie-breaking, we may regard the MAP computation as a function
that takes the scores $\pr$ and outputs a vector of variable assignments $\mg
\in [0, 1]^d$,
\begin{equation}
\begin{gathered}
\map_{\AA}(\pr) \coloneqq \mg\\
\quad\text{where}%
\quad\mg\coloneqq \mm_y,\quad y = \argmax_{y \in \YY} \DP{\pr}{\aa_y}.
\end{gathered}
\end{equation}
The solution is always a vertex in $\{0, 1\}^d$, and, for almost all $\pr$,
small changes to $\pr$ do not change what the highest-scoring structure is.
Thus, wherever $\map_{\AA}$ is continuous, its gradients are null,
rendering it unsuitable as a hidden layer in a neural
network trained with gradient-based optimization \citep{spigot}.

\paragraph{Marginal inference.}
In unstructured models (\eg, attention mechanisms),
discrete maximization has the same null gradient issue identified in the
previous paragraph, thus it is commonly replaced by its relaxation $\softmax(\bm{x})$.
Denote the Shannon entropy of a distribution $\p\in\simplex$ by $H(\p) \coloneqq -\sum_j p_j \log p_j$.
The structured equivalent of softmax
is the entropy-regularized problem
\begin{equation}
\max_{\p \in \simplex} \DP{\pr}{\AA\p} + H(\p),
\end{equation}
whose solution is %
$p_y^\star \propto \exp \DP{\aa_y}{\pr}.$
This \emph{Gibbs distribution}
is dense and induces a marginal
distribution over variable assignments \citep{wainwright}:
\begin{equation}
   \marg_{\AA}(\pr) \coloneqq \mg \quad\text{where}\quad\mg \coloneqq
\mathbb{E}_{\p^\star}[\mm_y].
\end{equation}
While generally intractable,
for certain models, such as sequence tagging, one can efficiently
compute $\marg_{\AA}(\pr)$ and $\nabla \marg_{\AA}(\pr)$
\citep[often, with dynamic programming,][]{kim-structuredattn}.
In many, it is intractable, \eg, matching
\citep[Section
3.5]{valiant1979complexity,taskar-thesis},
dependency parsing with valency constraints
\citep{McDonald2007}.

\paragraph{SparseMAP} \citep{sparsemap} is a differentiable
middle ground between maximization and expectation.
It is defined via the quadratic objective
\begin{equation}\label{eqn:smap}
    \max_{\p \in \simplex} \DP{\pr}{\AA\p} - \frac{1}{2} \|\MM\p \|^2.
\end{equation}
where an optimal sparse distribution $\p$ and the unique $\mg = \MM\p$
can be efficiently computed via the \emph{active set} method
\citep[Ch.~16.4 \& 16.5]{nocedalwright},
a generalization of Wolfe's \emph{min-norm point method} \citep{mnp}
and an instance of \emph{conditional gradient}
\citep{fw}.
Remarkably, the active set method only requires
calls to a maximization oracle (\ie, finding the
highest-scoring structure repeatedly, after adjustments),
and has linear, finite convergence.
Thus, {\smap} can be computed efficiently even
when marginal inference is not available, potentially
turning any structured problem with a maximization
algorithm available into a differentiable sparse structured hidden layer.
The sparsity not only brings computational advantages, but also aids
visualization and interpretation.

However, the requirement of an exact maximization algorithm is still a rather
stringent limitation. In the remainder of the section, we look into a flexible family of
structured models where maximization is hard. Then, we extend {\smap} to cover
all such models.

\begin{figure}[t]
\centering
\begin{tikzpicture}[
varon/.style={fill=gray},
vrbl/.style={draw,circle,fill=white,node distance=10pt and 10pt},
arclbl/.style 2 args={label={[fill=white,fill opacity=.9,text opacity=1]below:{\footnotesize #1$\rightarrow$#2}}},
fct/.style={inner sep=2pt, fill},
treeedge/.style={color=treefactor,thick},
budedge/.style={color=budgetfactor,thick}
]

\pgfdeclarelayer{fg}
\pgfsetlayers{main,fg}

\begin{pgfonlayer}{fg}
\node[vrbl,]              (aa) at (0, 0) {};
\node[vrbl, right=of aa]  (ab) {};
\node[vrbl, right=of ab, varon]  (ac) {};
\node[vrbl, below=of aa, varon]  (ba) {};
\node[vrbl, right=of ba]  (bb) {};
\node[vrbl, right=of bb]  (bc) {};
\node[vrbl, below=of ba]  (ca) {};
\node[vrbl, right=of ca, varon]  (cb) {};
\node[vrbl, right=of cb]  (cc) {};

\node[vrbl,right=30pt of ac]   (Xaa) {};
\node[vrbl, right=of Xaa]      (Xab) {};
\node[vrbl, right=of Xab,varon]      (Xac) {};
\node[vrbl, below=of Xaa,varon]      (Xba) {};
\node[vrbl, right=of Xba]      (Xbb) {};
\node[vrbl, right=of Xbb]      (Xbc) {};
\node[vrbl, below=of Xba]      (Xca) {};
\node[vrbl, right=of Xca,varon]      (Xcb) {};
\node[vrbl, right=of Xcb]      (Xcc) {};

\end{pgfonlayer}

\node[fct, color=treefactor, above left=12pt of ba] (match) {};
\draw[treeedge] (match) to[bend right=10] (aa);
\draw[treeedge] (match) to[bend right=10] (ab);
\draw[treeedge] (match) to[bend right=10] (ac);
\draw[treeedge] (match) to[bend right=5] (ba);
\draw[treeedge] (match) to[bend right=5] (bb);
\draw[treeedge] (match) to[bend right=5] (bc);
\draw[treeedge] (match) to[bend right=20] (ca);
\draw[treeedge] (match) to[bend right=20] (cb);
\draw[treeedge] (match) to[bend right=20] (cc);

\node[fct, color=budgetfactor, above right=1pt and 2pt of Xaa] (xor1) {};
\draw[budedge] (xor1) to (Xaa);
\draw[budedge] (xor1) to (Xab);
\draw[budedge] (xor1) to (Xac);

\node[fct, color=budgetfactor, above right=1pt and 2pt of Xba] (xor2) {};
\draw[budedge] (xor2) to (Xba);
\draw[budedge] (xor2) to (Xbb);
\draw[budedge] (xor2) to (Xbc);

\node[fct, color=budgetfactor, above right=1pt and 2pt of Xca] (xor3) {};
\draw[budedge] (xor3) to (Xca);
\draw[budedge] (xor3) to (Xcb);
\draw[budedge] (xor3) to (Xcc);

\node[fct, color=budgetfactor, above right=7pt and 1pt of Xca] (xor4) {};
\draw[budedge] (xor4) to (Xca);
\draw[budedge] (xor4) to (Xba);
\draw[budedge] (xor4) to (Xaa);

\node[fct, color=budgetfactor, above right=7pt and 1pt of Xcb] (xor5) {};
\draw[budedge] (xor5) to (Xcb);
\draw[budedge] (xor5) to (Xbb);
\draw[budedge] (xor5) to (Xab);

\node[fct, color=budgetfactor, above right=7pt and 1pt of Xcc] (xor6) {};
\draw[budedge] (xor6) to (Xcc);
\draw[budedge] (xor6) to (Xbc);
\draw[budedge] (xor6) to (Xac);

\end{tikzpicture}
\caption{\label{fig:match}Matching model under two equivalent decompositions.
Left: a coarse one with a single factor. Right: a fine
one with multiple \textsf{XOR} factors.}
\end{figure}

\subsection{Intractable structured problems\\and factor graph representations}

We now turn to more complicated structured problems, consisting of multiple
interacting subproblems. As we shall see, this covers many interesting problems.

Essentially, we represent the global structure as assignments to $d$ variables,
and posit a \textbf{decomposition} of the problem into local
\emph{factors} $f \in \mathcal{F}$, each encoding locally-tractable scoring and
constraints \citep{fg}.
A factor may be seen as smaller structured
subproblem.
Crucially, factor must agree whenever they \textbf{overlap},
rendering the subproblems interdependent, non-separable.

\paragraph{Examples.}
Figure~\ref{fig:valency} shows a factor graph for
a dependency parsing problem in which prior
knowledge dictates \emph{valency constraints}, \ie, disallowing words to be
assigned more than $k$ dependent modifiers. This encourages depth, preventing trees
from being too flat.
For a sentence with $m$ words, we use $m^2$ binary variables for every possible
arc, (including the root arcs, omitted in the figure). The global tree
factor disallows assignments that are not trees, and the $m$
\emph{budget} constraint factors, each governing $m-1$ different variables,
disallow more than $k$ dependency arcs out of each word.
Factor graph representations are often \textbf{not unique}. For instance,
consider a matching (linear assignment) model (Figure~\ref{fig:match}).
We may employ a
coarse factorization consisting of a single \emph{matching} factor,  for which
maximization is
tractable thanks to the Kuhn-Munkres algorithm \citep{km}.
This problem can also be represented using multiple \textsf{XOR} factors,
constraining that each row and each column must have exactly (exclusively) one
selected variable.

Denote the variable assignments as $\mg \in [0, 1]^d$.
We regard each factor $f$ as a separate structured model in its own right,
encoding its permissible assignments as columns of a matrix
$\AA_f = [\MM_f, \NN_f]$, and define a \emph{selector matrix}
$\CC_f$ such that $\CC_f \mg$ ``selects'' the variables from the global vector $\mg$ covered by
the factor $f$. Then, a \emph{valid} global assignment can be represented
as a tuple of local assignments $y_f$, provided that the
agreement constraints are satisfied:
\begin{equation}
    \YY = \{y = (y_f)|_{f\in\FF}: \exists~\mg,~
    \forall f\in\FF,~\CC_f \mg = \mm_{y_f}\}.
\end{equation}
Finding the highest scoring structure has the same form as in the tractable case,
but the discrete agreement constraints in $\YY$ make it difficult to compute,
even when each factor is computationally friendly:
\begin{equation}\label{eqn:graphmap}
    \max_{y \in \YY} \sum_{f\in\FF} \DP{\pr_f}{\aa_{y_f}}.
\end{equation}
In the tractable case, we were able to relax the discrete maximization into
a continuous one with respect to a distribution over global configurations $\p
\in \simplex$ (\eqnref{eqn:map}). We take the same approach, but \emph{locally}, considering
distributions over \emph{local} configurations $\p_f \in \simplex_f$ for each
factor.
For compactness, we shall use the concatenations
\begin{equation*}
\p \coloneqq [\p_{f_1}, \dots, \p_{f_n}],\quad
\CC \coloneqq [\CC_{f_1}, \dots, \CC_{f_n}]
\end{equation*}
and the block-diagonal matrices
\begin{equation*}
\AA \coloneqq \bdiag(\AA_{f_1}, ..., \AA_{f_n}),
\MM \coloneqq \bdiag(\MM_{f_1}, ..., \MM_{f_n}).
\end{equation*}
We may then write the optimization problem
\begin{equation}\label{eqn:lpmap}
\begin{aligned}
\underset{\mg,~\p}{\text{maximize}}\quad&
\sum_{f \in \FF} \DP{\pr_f}{\AA_f \p_f} \\
\quad\text{subject to}\quad&
\p \in \simplex_{f_1} \times \simplex_{f_2} \times \dots \times \simplex_{f_n},
\\&\CC \mg = \MM \p,
\end{aligned}
\end{equation}
continuously relaxing each factor independently
while enforcing agreement.
The objective in \eqnref{eqn:lpmap} is separable, but the constraints are not.
The feasible set,
\begin{equation}
\mathcal{L} = \{ \AA \p :
\p \in \simplex_{f_1} \times
\dots \times \simplex_{f_n},
~ \CC \mg = \MM \p
\},
\end{equation}
is called the \emph{local polytope} and satisfies %
$\mathcal{L} \supseteq \mathcal{M} = \conv \{ \aa_y : y \in \YY \}$.
Therefore, \eqref{eqn:lpmap} is a relaxation of \eqref{eqn:graphmap},
known as LP-MAP
\citep{wainwright}.
In general, the inclusion $\mathcal{L} \supseteq
\mathcal{M}$ is strict.
Many LP-MAP algorithms exploiting the graphical model
structure have been proposed, from the perspective of
message passing or dual decomposition
\citep{trws,trws-kolmogorov,komodakis,globerson,koo}.
In particular, \adq \citep{ad3} tackles LP-MAP
by solving a \smap-like quadratic subproblem for each factor.

It may be tempting to consider building a differentiable structured hidden layer
by using SparseMAP with an LP-MAP approximate oracle. However, since LP-MAP is
an outer relaxation, solutions are in general not feasible, leading to
divergence.
Instead, in the sequel, we apply the LP relaxation to a smoothed objective,
resulting in a general algorithm for sparse differentiable inference.

\section{LP-SparseMAP}
By analogy to \eqnref{eqn:smap}, we propose the differentiable
LP-SparseMAP inference strategy:
\begin{equation}\label{eqn:lpsmap}
\begin{aligned}
\underset{\mg,~\p}{\text{maximize}}\quad&\Big(
\sum_{f \in \FF} \DP{\pr_f}{\AA_f \p_f}\Big) - \hlf \| \mg \|^2\\
\quad\text{subject to}\quad&
\p \in \simplex_{f_1} \times \simplex_{f_2} \times \dots \times \simplex_{f_n},
\\&\CC \mg = \MM \p.
\end{aligned}
\end{equation}
Unlike LP-MAP (\eqnref{eqn:lpmap}), \lpsmap has a
non-separable $\ell_2$ term in the objective.
The next result reformulated the problem as separable
consensus optimization.
\begin{proposition}\label{prop:separable}
Denote by $\deg(j) = |\{f \in \mathcal{F}: j \in f \} | > 0$,
the number of factors governing $\mu_j$.%
\footnote{Variables not attached to any factor can be removed from
the problem, so we may assume $\deg(j) > 0$.}
Define $\bm{\delta}$ as
$\delta_j = \sqrt{\deg(j)}$,
and $\bm{D} = \diag(\CC\bm{\delta})$.
Denote $\CCs = \bm{D}^{-1} \CC, \MMs = \bm{D}^{-1} \MM$.
Then, the problem below is equivalent to \eqref{eqn:lpsmap}:
\begin{equation}
\label{eqn:lpsmapsep}
\begin{aligned}
\underset{\mg,~\p}{\text{maximize}}\quad&
\sum_{f\in\FF} \Big(\DP{\pr_f}{\AA_f \p_f} - \hlf\| \MMs_f \p_f \|^2\Big)
\\\text{subject to}\quad&
\p \in \simplex_{f_1} \times \simplex_{f_2} \times \dots \times \simplex_{f_n},
\\&\CCs \mg = \MMs \p.
\end{aligned}
\end{equation}
\end{proposition}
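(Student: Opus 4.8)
The plan is to show that \eqnref{eqn:lpsmap} and \eqnref{eqn:lpsmapsep} have identical feasible sets and identical objective values at every feasible point, so that they are equivalent as optimization problems. First I would handle the constraints. Since every variable is governed by at least one factor, $\deg(j) > 0$ for all $j$, so $\bm{\delta}$ has strictly positive entries and $\bm{D} = \diag(\CC\bm{\delta})$ is invertible. Left-multiplying $\CC\mg = \MM\p$ by $\bm{D}^{-1}$ and using $\CCs = \bm{D}^{-1}\CC$, $\MMs = \bm{D}^{-1}\MM$ gives $\CCs\mg = \MMs\p$, and the converse follows by multiplying back by $\bm{D}$. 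Hence the agreement constraints coincide, and since the simplex constraints on $\p$ are untouched, the two feasible sets are equal.

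It remains to show the objectives agree on the feasible set. The linear terms $\sum_f \DP{\pr_f}{\AA_f \p_f}$ are identical in both problems, so it suffices to prove that the regularizers match, i.e. $\|\mg\|^2 = \sum_f \|\MMs_f\p_f\|^2$. Because $\bm{D}$ is diagonal and $\MM$ is block-diagonal, $\MMs = \bm{D}^{-1}\MM$ is block-diagonal with blocks $\MMs_f$, giving $\sum_f \|\MMs_f\p_f\|^2 = \|\MMs\p\|^2$. Using the constraint in the form $\MM\p = \CC\mg$, I would then rewrite $\MMs\p = \bm{D}^{-1}\MM\p = \bm{D}^{-1}\CC\mg = \CCs\mg$, reducing the entire claim to the single identity $\|\CCs\mg\|^2 = \|\mg\|^2$.

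The heart of the argument — and the only step that is not purely mechanical — is this last equality, which is really a counting identity. Expanding, $\|\CCs\mg\|^2 = (\CC\mg)^\top \bm{D}^{-2}(\CC\mg)$. The stacked vector $\CC\mg$ contains one copy of $\mu_j$ for each factor governing $j$, hence exactly $\deg(j)$ copies, while the matching diagonal entry of $\bm{D}^{-2}$ is $1/\deg(j)$. Summing over all factors and their variables therefore yields $\sum_j \deg(j)\cdot \mu_j^2/\deg(j) = \sum_j \mu_j^2 = \|\mg\|^2$. This is precisely why $\bm{\delta}$ is chosen with $\delta_j = \sqrt{\deg(j)}$: the normalization spreads each $\mu_j^2$ evenly across the $\deg(j)$ replicas in the agreement vector so that they recombine into the original squared norm. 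With the feasible sets matched and the objectives shown to be equal pointwise, the two programs share the same optimizers, and the equivalence follows.
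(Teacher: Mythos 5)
Your proof is correct and takes essentially the same route as the paper: you identify the two constraint sets via invertibility of $\bm{D}$, reduce the objective equality to the single identity $\|\CCs\mg\|^2 = \|\mg\|^2$, and establish that identity by the same counting argument ($\deg(j)$ replicas of $\mu_j$, each weighted by $1/\deg(j)$) that the paper packages in its appendix as the orthogonality lemma $\CCs^\top\CCs = \bm{I}$. The only difference is cosmetic — you expand the quadratic form $(\CC\mg)^\top\bm{D}^{-2}(\CC\mg)$ inline instead of stating $\CCs^\top\CCs = \bm{I}$ as a standalone reusable fact.
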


\begin{proof}
The constraints $\CC\mg = \MM\p$ and $\CCs\mg = \MMs\p$ are equivalent since
$\bm{\delta} > 0$ ensures $\bm{D}$ invertible.
It remains to show that, at feasibility, $\|\mg\|^2 = \|\MMs\p\|^2$.
This follows from $\|\mg\|^2 = \|\CCs\mg\|^2$ (shown in \suppref{lpsmap}).
\end{proof}

\subsection{Forward pass}
Using this reformulation, we are now ready to introduce an ADMM algorithm
\citep{glowinski,gabay,admm}
for maximizing \eqnref{eqn:lpsmapsep}.
The algorithm is given in Algorithm~\ref{alg:ad3qp}
and derived in \suppref{ad3qp}. Like \adq, it
iterates alternating between:
\begin{enumerate}[nolistsep,itemsep=0.5em]
\item solving a \smap subproblem for each factor;
(With the active set algorithm, this requires only cheap calls to a MAP oracle.)
\item enforcing global agreement by averaging;
\item performing a gradient update on the dual variables.
\end{enumerate}
\begin{proposition}Algorithm~\ref{alg:ad3qp} converges to a solution of
\eqref{eqn:lpsmap}; moreover, the number of iterations needed to reach $\epsilon$
dual suboptimality is $\mathcal{O}(\nicefrac{1}{\epsilon})$.
\end{proposition}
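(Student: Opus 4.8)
The plan is to recognize \eqref{eqn:lpsmapsep} as an instance of two-block consensus ADMM and then invoke the standard convergence-and-rate theory, exactly as was done for \adq \citep{ad3}, since the two algorithms share the same structure. First I would identify the two blocks: the local distributions $\p = [\p_{f_1},\dots,\p_{f_n}]$, constrained to the product of simplices $\simplex_{f_1}\times\cdots\times\simplex_{f_n}$ and carrying the entire separable objective $\sum_{f\in\FF}(\DP{\pr_f}{\AA_f\p_f} - \hlf\|\MMs_f\p_f\|^2)$, and the consensus variable $\mg$, which carries no objective term and is coupled to $\p$ only through the linear agreement constraint $\CCs\mg = \MMs\p$. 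Writing the augmented Lagrangian for this constraint with multiplier $\lbd$ and penalty $\rho$, the three steps of Algorithm~\ref{alg:ad3qp} are precisely the ADMM updates: minimizing over $\p$ decouples across factors into the per-factor \smap quadratic subproblems; minimizing over $\mg$ is the closed-form step, which the degree rescaling by $\bm{D}$ turns into a well-defined weighted average; and the final step is dual gradient ascent on $\lbd$. The identification of Algorithm~\ref{alg:ad3qp} with these updates is the content of the appendix derivation, which I would assume.

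Next I would verify the hypotheses of the convex ADMM convergence theorem. In minimization form the $\p$-block objective is $\sum_f(-\DP{\pr_f}{\AA_f\p_f} + \hlf\|\MMs_f\p_f\|^2)$ plus the indicator of the product of simplices; this is a closed, proper, convex function (a convex quadratic restricted to a compact convex set), while the $\mg$-block objective is identically zero, hence trivially closed proper convex. The coupling is linear and the feasible set is nonempty, because the local polytope $\mathcal{L}$ is nonempty (it contains the marginal polytope), so a primal optimum exists, and the problem is bounded since the simplices are compact. These are exactly the conditions under which two-block ADMM converges to a primal--dual optimal pair, so the iterates of Algorithm~\ref{alg:ad3qp} converge to a solution of \eqref{eqn:lpsmapsep}, and hence of \eqref{eqn:lpsmap} by Proposition~\ref{prop:separable}.

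For the rate, I would appeal to the known $\mathcal{O}(\nicefrac{1}{t})$ ergodic convergence of ADMM for convex problems, measured through the dual objective (equivalently, a variational-inequality gap), as established for \adq \citep{ad3, admm}. After $t$ iterations the dual suboptimality is bounded by a constant over $t$; inverting this bound shows that reaching dual suboptimality $\epsilon$ requires $\mathcal{O}(\nicefrac{1}{\epsilon})$ iterations. The only structural difference from \adq is the smoothing term $-\hlf\|\mg\|^2$, absorbed into the per-factor quadratics of \eqref{eqn:lpsmapsep}; since it merely adds a convex quadratic to the $\p$-block objective, it preserves every assumption used in the \adq analysis, and the same bound carries over.

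I expect the main obstacle to be not the convergence statement itself---which follows from standard theory once the consensus reformulation is in place---but the bookkeeping required to confirm that Algorithm~\ref{alg:ad3qp} is genuinely ADMM for \eqref{eqn:lpsmapsep} with the correct constants, and in particular that the rate is stated for \emph{dual} suboptimality rather than primal feasibility or primal objective. Note that the $\p$-block objective need not be strongly convex, since distinct structures may share the same $\mg$ and so $\MMs_f$ may have a nontrivial kernel; this is why only a sublinear $\mathcal{O}(\nicefrac{1}{\epsilon})$ rate, and not a linear one, is claimed, and I would take care that the cited ADMM rate result does not tacitly require strong convexity.
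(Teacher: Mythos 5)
Your proposal is correct and follows essentially the same route as the paper: identify Algorithm~\ref{alg:ad3qp} as two-block ADMM applied to the separable reformulation \eqref{eqn:lpsmapsep}, inherit standard convex ADMM convergence, transfer the conclusion to \eqref{eqn:lpsmap} via Proposition~\ref{prop:separable}, and obtain the $\mathcal{O}(\nicefrac{1}{\epsilon})$ dual-suboptimality rate from the \adq analysis, noting the only difference is the added quadratic regularization term. Your extra bookkeeping (verifying closedness/convexity, feasibility, and the caution that no strong convexity is available) merely makes explicit what the paper's citation-based proof leaves implicit.
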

\begin{proof}
The algorithm is an instantiation of ADMM to
\eqnref{eqn:lpsmapsep}, inheriting the proof of convergence of ADMM.
\citep[Appendix A]{admm}.
From Proposition~\ref{prop:separable}, this problem is
equivalent to \eqref{eqn:lpsmap}.
Finally, the rate of convergence is established by
\citet[Proposition 8]{ad3}, as the problems
differ only through an additional regularization term in the objective.
\end{proof}
\begin{algorithm}[t]
\small\setstretch{1.15}
\caption{\label{alg:ad3qp}ADMM for LP-SparseMAP}

 \begin{algorithmic}[1]
    \STATE \textbf{Input:} $\pr$ (scores), $T$ (max.\ iterations), $\gamma$ (ADMM
step size),
$\varepsilon_p, \varepsilon_d$ (primal and dual stopping criteria).
    \STATE \textbf{Output:} ($\mg, \p$) solving \eqnref{eqn:lpsmap}.
\vspace{.5ex}
\STATE \textbf{Initialization:} $\mu^{(0)}_i = \nicefrac{1}{\deg(i)},
\bm{\lambda}^{(0)}=\bm{0}$.
\FOR{$t = 1, \dots,T$}
\FORALL{$f \in \mathcal{F}$
\Comment{SparseMAP subproblem}}
\STATE $\widetilde{\pr}_{f, M} \leftarrow \frac{1}{\gamma+1}\left(\bm{D}_f \pr_{f,M} - \lbd^{(t -
1)}_f + \gamma \CCs_f \mg^{(t - 1)}\right)$
\STATE $\widetilde{\pr}_{f, N} \leftarrow \frac{1}{\gamma+1} \pr_{f, N}$
\STATE $\p_f^{(t)} \leftarrow
\displaystyle \argmin_{\p_f \in \simplex_f}~
\frac{1}{2} \| \widetilde{\pr}_{f, M} - \MMs_f \p_f \|^2 -
\DP{\widetilde{\pr_{f, N}}}{\NN_f \p_f}$ \label{line:ad3qp-smap}%
\ENDFOR
\STATE  $\mg^{(t)} \leftarrow \CCs^\top \MMs \p^{(t)}$
\Comment{agreement by local averaging}
\STATE $\lbd^{(t)} \leftarrow \lbd^{(t-1)} + \gamma \big(\CCs \mg^{(t)} - \MMs \p^{(t)} \big)$
\Comment{dual update}
\IF{$\| \mg^{(t)} - \mg^{(t-1)} \| < \varepsilon_d$ \texttt{\&}
$\|\CCs \mg^{(t)} - \MMs \p^{(t)} \| < \varepsilon_p$}
\STATE return \Comment{converged}
%\Return  \Comment{converged}
\ENDIF
\ENDFOR
\end{algorithmic}
\end{algorithm}

When there is a single factor, \ie, $\mathcal{F}=\{f\}$,
running for one iteration with $\gamma=0$ recovers {\smap}.
In practice, in the inner active set solver we use warm starts and perform
a small number of MAP calls. This leads to an algorithm more similar in spirit
to Frank-Wolfe splitting \citep{pmlr-v84-gidel18a}, with the key difference that
by solving the nested QPs we obtain the necessary quantities to ensure a more
efficient backward pass, as described in the next section.

\begin{algorithm}[t]
\small\setstretch{1.15}
\caption{\label{alg:ad3qp_backward}Backward pass for LP-SparseMAP}

\begin{algorithmic}[1]
    \STATE \textbf{Input:} $\bm{d}$ (the gradient of the loss \wrt $\mg$),
$T$ (the maximum number of iterations), $\varepsilon$ (stopping criterion).
    \STATE \textbf{Output:} $\bm{d}_M, \bm{d}_{N, f}$ (loss gradient
\wrt $\pr_M$ and $\pr_{N,f}$).
\vspace{.5ex}
\FOR{$t = 1, \dots,T$}
\FORALL{$f \in \mathcal{F}$}
\STATE $\bm{d}_f \leftarrow \CCs_f \bm{d}$;
\Comment{split $\bm{d}$ into copies for each factor}%
\label{line:backsplit}
\STATE $\bm{d}_{M, f} \leftarrow  \JJ_{M, f}^\top \bm{d}_f,
~\bm{d}_{N, f} \leftarrow  \JJ_{N, f}^\top \bm{d}_f$;
\Comment{local $\nabla$}%
\label{line:backjacob}
\ENDFOR
\STATE $\bm{d}_M \leftarrow \sum_f \CCs_f^\top\bm{d}_f$.
\Comment{local averaging}%
\label{line:backmerge}
\IF{$\| \bm{d}_M - \bm{d} \| \leq \varepsilon$}
\STATE return $(\bm{d}_M, \bm{d}_{N, f})$. \Comment{converged}
\ELSE
\STATE $\bm{d} \leftarrow \bm{d}_M$
\ENDIF
\ENDFOR
\end{algorithmic}
\end{algorithm}

\subsection{Backward pass}
Unlike marginal inference, \lpsmap encourages the local distribution at each
factor to become sparse,
and yields a simple form for the \lpsmap Jacobian,
defined in terms of the local \smap Jacobians of each
factor (\suppref{smap_grad}).
Denote the local solutions $\mg_f = \MMs \p_f$ and the Jacobians of the SparseMAP
subproblem for each factor as
\begin{equation}
\JJ_{f,M} \coloneqq \pfrac{\mg_f}{\pr_{f, M}},\quad
\JJ_{f,N} \coloneqq \pfrac{\mg_f}{\pr_{f, N}}.
\end{equation}
When using the active set
algorithm for SparseMAP, $\JJ_{f,\{M,N\}}$ are
precomputed in the forward pass \citep{sparsemap}.
The \lpsmap backward pass combines the local Jacobians while taking into account
the agreement constraints, as shown next.
\begin{proposition}
Let
$\JJ_M = \bdiag(\JJ_{f,M})$
and
$\JJ_N = \bdiag(\JJ_{f,N})$
denote the block-diagonal matrices of local SparseMAP Jacobians.
Let $\JJ=\JJ^\top \in \reals^{d \times d}$ satisfying
\begin{equation}\label{eq:fixed_point_eig}
\JJ \coloneqq
\bm{\CCs^\top \JJ_M \CCs}~\JJ.
\end{equation}
\begin{equation}
\text{Then,}\quad
\pfrac{\mg}{\pr_M} = \JJ
\quad\text{and}\quad
\pfrac{\mg}{\pr_N} = \JJ \CCs^\top\JJ_N.
\end{equation}
\end{proposition}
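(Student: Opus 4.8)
The plan is to differentiate the optimality conditions of the separable reformulation \eqref{eqn:lpsmapsep} implicitly, holding fixed the support (active set) of each local distribution $\p_f$, which is locally constant for almost every $\pr$. Three facts drive the argument. (i) By \suppref{smap_grad}, each local SparseMAP Jacobian $\JJ_{f,M}$ is a \emph{symmetric orthogonal projection} (its active-set form is symmetric and idempotent), and $\JJ_{f,M}\JJ_{f,N}=\JJ_{f,N}$, so $\operatorname{range}(\JJ_{f,N})\subseteq\operatorname{range}(\JJ_{f,M})$. (ii) The rescaling of Proposition~\ref{prop:separable} gives the isometry identity $\CCs^\top\CCs=\sum_f\CCs_f^\top\CCs_f=\bm I$. (iii) With $\lbd_f$ the multipliers of $\CCs_f\mg=\MMs_f\p_f$, stationarity of \eqref{eqn:lpsmapsep} in $\mg$ is $\sum_f\CCs_f^\top\lbd_f=\bm 0$, and each $\mg_f:=\MMs_f\p_f=\CCs_f\mg$ is the standalone SparseMAP solution of factor $f$ for an effective score affine in $\pr_{f,M}$, $\pr_{f,N}$ and $\lbd_f$. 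Differentiating gives, for every $f$, a local relation of the form $\CCs_f\,d\mg = \JJ_{f,M}(\text{source}_f - d\lbd_f) + \JJ_{f,N}\,d\pr_{f,N}$ together with the global constraint $\sum_f\CCs_f^\top\,d\lbd_f=\bm 0$.

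Next I would eliminate the dual perturbations $d\lbd_f$. Projecting the local relation onto $\ker(\JJ_{f,M})$ annihilates $d\lbd_f$ and, because every source term already lies in $\operatorname{range}(\JJ_{f,M})$ (here $\JJ_{f,M}\JJ_{f,N}=\JJ_{f,N}$ is used), it forces $\CCs_f\,d\mg\in\operatorname{range}(\JJ_{f,M})$ for all $f$, \ie, $d\mg\in\mathcal E:=\{\bm v:\CCs_f\bm v\in\operatorname{range}(\JJ_{f,M})~\forall f\}$. Projecting instead onto $\operatorname{range}(\JJ_{f,M})$ pins down $\JJ_{f,M}\,d\lbd_f$; substituting into $\sum_f\CCs_f^\top d\lbd_f=\bm 0$ and using $\sum_f\CCs_f^\top\CCs_f=\bm I$ collapses the dual contribution and leaves $d\mg - \bm b\in\mathcal E^\perp$, where the source assembles to $\bm b=\CCs^\top\JJ_N\,d\pr_N$ in the $N$-direction and to $\bm b=\CCs^\top\JJ_M\CCs\,d\pr_M=\ZZ\,d\pr_M$ in the $M$-direction, writing $\ZZ:=\CCs^\top\JJ_M\CCs$. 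Since symmetry of each $\JJ_{f,M}$ yields $\mathcal E^\perp=\operatorname{span}\{\CCs_f^\top\bm k:\bm k\in\ker(\JJ_{f,M})\}$, the two conditions $d\mg\in\mathcal E$ and $d\mg-\bm b\in\mathcal E^\perp$ are exactly the orthogonal splitting of $\bm b$, so $d\mg=\Pi_{\mathcal E}\,\bm b$ with $\Pi_{\mathcal E}$ the orthogonal projector onto $\mathcal E$.

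It then remains to identify $\Pi_{\mathcal E}$ with $\JJ$ and to read off both formulas. A one-line computation gives $\Pi_{\mathcal E}\ZZ=\Pi_{\mathcal E}$: for $\bm z\in\mathcal E$ one has $\JJ_{f,M}\CCs_f\bm z=\CCs_f\bm z$, hence $\DP{\bm z}{\ZZ\bm x}=\DP{\bm z}{\bm x}$ for all $\bm x$, \ie, $\ZZ\bm x-\bm x\perp\mathcal E$. Thus in the $M$-direction $d\mg=\Pi_{\mathcal E}\ZZ\,d\pr_M=\Pi_{\mathcal E}\,d\pr_M$, so $\pfrac{\mg}{\pr_M}=\Pi_{\mathcal E}=:\JJ$; in the $N$-direction $\pfrac{\mg}{\pr_N}=\Pi_{\mathcal E}\CCs^\top\JJ_N=\JJ\CCs^\top\JJ_N$. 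Symmetry of $\JJ$ is then automatic (it is an orthogonal projector), and \eqref{eq:fixed_point_eig} follows by transposing $\Pi_{\mathcal E}\ZZ=\Pi_{\mathcal E}$ to get $\JJ=\ZZ\JJ=\CCs^\top\JJ_M\CCs\,\JJ$. Finally, a contraction/equality-case argument (using again that the $\JJ_{f,M}$ are projections and $\sum_f\CCs_f^\top\CCs_f=\bm I$) shows $\mathcal E$ equals the eigenvalue-one eigenspace of $\ZZ$, which is what the power iteration in Algorithm~\ref{alg:ad3qp_backward} converges to.

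The step I expect to be the main obstacle is the clean elimination of the duals $d\lbd_f$: since $\JJ_{f,M}$ is merely a projection (hence singular), the local relations cannot be inverted, and one must split each $d\lbd_f$ along $\operatorname{range}(\JJ_{f,M})\oplus\ker(\JJ_{f,M})$ and propagate the two parts through the global stationarity $\sum_f\CCs_f^\top d\lbd_f=\bm 0$. Everything closes precisely because the local SparseMAP Jacobians are symmetric idempotents with $\JJ_{f,M}\JJ_{f,N}=\JJ_{f,N}$; establishing these projection properties from the explicit active-set expression of $\JJ_{f,\{M,N\}}$ is the real technical work underpinning the whole derivation.
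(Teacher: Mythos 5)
Your proposal is correct, and it reaches the result by a genuinely different route than the paper. The paper's proof is computational: it writes out the KKT system of \eqref{eqn:lpsmap} (after symmetrizing the quadratic term using $\|\MMs\p\|^2=\|\mg\|^2$ at feasibility), differentiates the resulting block linear system, eliminates blocks by hand to obtain the pseudo-inverse expression $\pfrac{\mg}{\pr_M} = 2\CCs^\top\left(\JJ_M+\CCs\CCs^\top\right)^{+}\JJ_M\CCs$, and only then invokes the external lemma of \citet{projintersect} on projections onto intersections, $\proj_{\mathcal{A}\cap\mathcal{B}} = 2\bm{P}_\mathcal{B}(\bm{P}_\mathcal{A}+\bm{P}_\mathcal{B})^{+}\bm{P}_\mathcal{A} = \lim_{n\rightarrow\infty}\bm{P}_\mathcal{B}(\bm{P}_\mathcal{A}\bm{P}_\mathcal{B})^{n}$, to convert this into the fixed-point form \eqref{eq:fixed_point_eig}. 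You instead differentiate the factor-wise stationarity conditions directly---each $\CCs_f\mg$ being a standalone SparseMAP solution for scores affine in $(\pr_{f,M},\pr_{f,N},\lbd_f)$, coupled only through $\sum_f\CCs_f^\top\lbd_f=\bm{0}$---and eliminate the dual perturbations by an orthogonal range/kernel splitting, which identifies $\pfrac{\mg}{\pr_M}$ outright as the orthogonal projector $\Pi_{\mathcal{E}}$ onto $\mathcal{E}=\{\bm{v}: \JJ_{f,M}\CCs_f\bm{v}=\CCs_f\bm{v}\ \ \forall f\}$; the fixed point, the symmetry of $\JJ$, and $\pfrac{\mg}{\pr_N}=\JJ\CCs^\top\JJ_N$ then each fall out in one line. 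Both arguments rest on the same two structural facts (each $\JJ_{f,M}$ is a symmetric idempotent with $\JJ_{f,M}\JJ_{f,N}=\JJ_{f,N}$, and $\CCs^\top\CCs=\bm{I}$) and the same fixed-support caveat, but your route avoids the pseudo-inverse manipulations and the imported projection-intersection lemma, and it actually sharpens the statement: as written, \eqref{eq:fixed_point_eig} does not pin down $\JJ$ uniquely ($\JJ=\bm{0}$ satisfies it), whereas your characterization $\JJ=\Pi_{\mathcal{E}}$, together with the equality-case argument showing $\mathcal{E}$ is exactly the eigenvalue-one eigenspace of $\CCs^\top\JJ_M\CCs$, specifies which solution is meant and simultaneously explains why the power iteration of Algorithm~\ref{alg:ad3qp_backward} converges to it. What the paper's route buys in exchange is the explicit closed form $2\CCs^\top(\JJ_M+\CCs\CCs^\top)^{+}\JJ_M\CCs$ as an intermediate deliverable. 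One caveat: you attribute the projection property of $\JJ_{f,M}$ to \suppref{smap_grad}, but the identity it requires, $\QQ_f\Mb_f^\top\Mb_f\QQ_f=\QQ_f$, is proven in the paper only inside the LP-SparseMAP backward derivation itself; since your entire argument leans on it (as you acknowledge), you should carry out that short computation explicitly rather than cite it.
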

The proof is given in \suppref{lpsmap_grad}, and $\bm{J}$ may be
computed using an eigensolver. However,
to use LP-SparseMAP as a hidden layer, we don't need a materialized
Jacobian, just its multiplication by
an arbitary vector $\bm{d}\in\reals^d$, \ie,
%access to Jacobian-vector products
\[
\Big(\pfrac{\mg}{\pr_M}\Big)^\top \bm{d}, \qquad\text{and}\qquad
\Big(\pfrac{\mg}{\pr_N}\Big)^\top \bm{d}.
\]
These can be computed iteratively by Algorithm~\ref{alg:ad3qp_backward}.
Since $\CC_f$ are highly sparse and structured selector matrices,
lines 5  %%VN: broken refs
%\ref{line:backsplit}
and 8
%\ref{line:backmerge}
are fast indexing
operations followed by scaling;
the bulk of the computation is line 6,
%\ref{line:backjacob},
which can be seen as
\textbf{invoking the backward pass of each factor}, as if that factor were
alone in the graph. The structure of
Algorithm~\ref{alg:ad3qp_backward} is similar to Algorithm~\ref{alg:ad3qp},
however, our backward is much more efficient than ``unrolling'' Algorithm~\ref{alg:ad3qp}
within a computation graph:
Our algorithm only requires access to the final state of the ADMM solver
(Algorithm~\ref{alg:ad3qp}), rather
than all intermediate states, as would be required for unrolling.

\subsection{Implementation and specializations}
The forward and backward passes of {\lpsmap}, described above,
are appealing from the perspective of modular implementation.
The outer loop interacts with a factor with only two interfaces:
a \texttt{SolveSparseMAP} function and a \texttt{JacobianTimesVector} function.
In turn, both methods can be implemented in terms of a \texttt{SolveMAP}
maximization oracle \citep{sparsemap}.

For certain factors, such as the logic constraints in Table~\ref{tab:logic},
faster direct implementations of
\texttt{SolveSparseMAP} and \texttt{JacobianTimesVector} are available,
and our algorithm easily allows specialization. This is appealing from a testing
perspective, as the specializations must agree with the generic implementation.
For example, the exclusive-or \textsf{XOR} factor requires that
exactly one out of $d$ variables can be on. Its marginal polytope is the
convex hull  of allowed assignments,
$\Mpo_\text{XOR} = \conv \{
\bm{e}_1, \dots, \bm{e}_d\} = \simplex^d$.
The required \smap subproblem with degree corrections is
\begin{equation}\label{eqn:factor_xor_body}
\begin{aligned}
\mathrm{minimize}~& \hlf \| \mg - \pr \|^2_2\\
\mathrm{subject\,to}~&
\sum_{j=1}^d \delta_j \mu_j = 1,~\text{and}~
0 \leq \mu_i \leq \nicefrac{1}{\delta_i}.
\end{aligned}
\end{equation}
When $\bm{\delta} = \bm{1}$ this is a projection onto the
simplex (sparsemax), for which efficient algorithms are well-studied \citep{sparsemax}.
For general $\bm{\delta}$, the algorithm of \citet{Pardalos1990} applies, and the
backward pass involves a generalization of the sparsemax Jacobian.

In \suppref{specialized}, we derive specialized forward
and backward passes for \textsf{XOR}, and the constraint factors in
Table~\ref{tab:logic}, as well as
for negated variables,
\textsf{OR}, \textsf{OR-Output}, \textsf{Knapsack} and pairwise (Ising) factors.

\begin{table}
\caption{Examples of logic constraint factors.\label{tab:logic}}
\centering
\begin{tabular}{l l}
\toprule
name & constraints \\
\midrule
\textsf{XOR} {\footnotesize (exactly one)} & $\sum_{i=1}^d \mmg_i = 1$ \\
\textsf{AtMostOne} & $\sum_{i=1}^d \mmg_i \leq 1$ \\
\textsf{OR} & $\sum_{i=1}^d \mmg_i \geq 1$ \\
\textsf{BUDGET} & $\sum_{i=1}^{d} \mmg_i \leq B$ \\
\textsf{Knapsack} & $\sum_{i=1}^d c_i \mmg_i \leq B$ \\
\textsf{OROut} & $\sum_{i=1}^{d-1} \mmg_i \geq \mmg_d; \mmg_i \leq \mmg_d$ for
all $i$\\
\bottomrule
\end{tabular}
\end{table}

\section{LP-SparseMAP loss for structured outputs}
So far, we described {\lpsmap} for structured
hidden layers.  When supervision is available, either as a downstream objective
or as partial supervision, a natural convex loss
relaxes the SparseMAP loss \citep{sparsemap}:
\begin{equation}
\ell(\pr, y)\!\coloneqq\!%
\max_{\p, \mg} \sum_f \DP{\AA_f^\top \pr_f}{\p_f - \bm{e}_{y_f}}
+ \frac{1}{2}(\| \bm{m}_y \|^2 - \| \mg \|^2),
\end{equation}
under the constraints of \eqnref{eqn:lpsmap}.
Like the \smap loss, this \lpsmap loss
falls into the
recently-proposed class of
Fenchel-Young losses \citep{fylosses},
which confirms its convenient properties, notably the
\emph{margin} property \citep[Proposition~8]{fylossesjmlr}.
Its gradients are obtained from the {\lpsmap} solution $(\mg, \p)$ as
\begin{gather}
\nabla_{\pr_M} \ell(\pr, y) = \mg - \bm{m}_y, \\
\nabla_{\pr_f,N} \ell(\pr, y) = \NN_f \p_f - \bm{n}_{y_f}.
\end{gather}
When already using LP-SparseMAP as a hidden layer, this loss provides a natural
way to incorporate supervision on the latent structure at no additional cost.

\section{Experiments}
In this section, we demonstrate {\lpsmap} for learning complex latent structures
on both toy and real-world datasets, as well as on a structured output task.
Learning hidden structures solely from a downstream objective is challenging
for powerful models that can bypass the latent component entirely. For this
reason, we design our experiments using
simpler, smaller networks where the inferred structure is an un-bypassable
\emph{bottleneck}, ensuring the predictions depend on
it. We use Dynet \citep{dynet} and list hyperparameter configurations
and ranges in \suppref{experimental}.

\subsection{ListOps valency tagging}
\begin{figure}
\centering
\includegraphics[width=.48\textwidth]{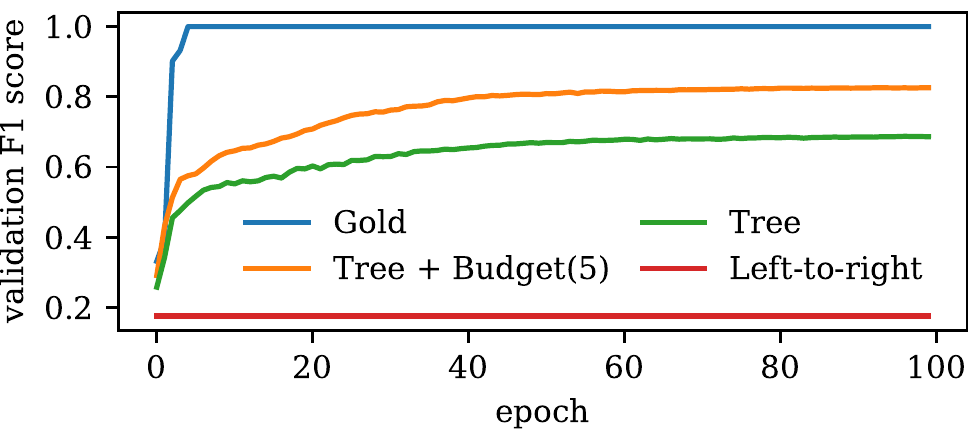}
\caption{$F_1$ score for tagging ListOps nodes with their valency,
using a latent tree. Incorporating inductive bias via budget
constraints improves performance.\label{fig:listops}}
\end{figure}

The ListOps dataset \citep{listops} is a synthetic collection of bracketed
expressions, such as \hl{\texttt{[max 2 9 [min 4 7 ] 0 ]}}. The arguments are lists of integers, and the operators are
set summarizers such as \texttt{median}, \texttt{max}, \texttt{sum}, etc. It was proposed as a litmus
test for studying latent tree learning models, since the syntax is essential to
the semantics. Instead of tackling the challenging task of learning to
\emph{evaluate} the expressions, we follow \citet{caio-acl} and study a
\emph{tagging} task: labeling each operator with the number of arguments it
governs.

\paragraph{Model architecture.} %
We encode the sequence with a BiLSTM, yielding vectors $\bm{h}_1, \dots,
\bm{h}_L$. We compute the score of dependency arc $i
\rightarrow j$ as the dot product between the outputs of two mappings,
one for encoding the head and one for the modifier (target word):
\begin{equation*}
\begin{gathered}
\bm{f}_\text{hd}(\bm{h}) = \bm{W}_\text{hd} \bm{h} + \bm{b}_\text{hd};\quad
\bm{f}_\text{mo}(\bm{h}) = \bm{W}_\text{mo} \bm{h} + \bm{b}_\text{mo};
\\[.5\baselineskip]
\eta_{i \rightarrow j} =
\DP{\bm{f}_\text{hd}(\bm{h}_i)}{\operatorname{ReLU}(\bm{f}_\text{mo}(\bm{h}_j))}.
\end{gathered}
\end{equation*}
We perform {\lpsmap} optimization to get the sparse arc posterior probabilities,
using different factor graph structures $\mathcal{F}$, described in the next
paragraph.
\begin{equation}
\mg = \operatorname{LP-SparseMAP}_{\mathcal{F}}(\pr)
\end{equation}
The arc posteriors $\mg$ correspond to a sparse combination of dependency trees.
We perform one iteration of a Graph Convolutional Network (GCN) along the edges
in $\mg$. Crucially, the input to
the GCN is not the BiLSTM output $(\bm{h}_1, \dots, \bm{h}_L)$ but a
``de-lexicalized'' sequence $(\bm{v}, \dots, \bm{v})$ where $\bm{v}$ is a
learned parameter vector, repeated $L$ times regardless of the tokens.
This forces the predictions to rely on the GCN and
thus on the latent trees, preventing the model from using the global BiLSTM to
``cheat''.
The GCN produces contextualized representations $(\bm{g}_1, \dots, \bm{g}_L)$
which we then pass through an output layer to predict the valency label for each
operator node. %

\paragraph{Factor graphs.} Unlike \citet{caio-acl}, who use projective
dependency parsing, we consider the general non-projective case,
making the problem more challenging.
The MAP oracle is the maximum arborescence algorithm
\citep{Chu1965,Edmonds1967}.
\begin{table}[t]
\small%
\centering%
\caption{ListOps tagging results with non-projective latent trees. The budget
constraints bring improvement.\label{tab:listops}}
\begin{tabular}{l r r r r}
\toprule
& \multicolumn{2}{c}{validation}
& \multicolumn{2}{c}{test} \\
& Acc. & $F_1$
& Acc. & $F_1$ \\
\midrule
left-to-right &
 28.14&
 17.54&
 28.07&
 17.43 \\
tree &
68.23 &
68.74 &
68.74 &
69.12 \\
tree+budget &
{\bf 82.35}  &
{\bf 82.59}  &
{\bf 82.75}  &
{\bf 82.95}  \\
\bottomrule
\end{tabular}
\end{table}

First, we consider a factor graph with a single non-projective \textsf{TREE}
factor: in this case, {\lpsmap} reduces to a {\smap} baseline.
Motivated by multiple observations that {\smap} and similar latent structure learning
methods tend to learn trivial trees \citep{adina}
we next consider overlaying \textbf{constraints} in the
form of \textsf{BUDGET} factors on top of the \textsf{TREE} factor.
For every possible head $i$, we include a \textsf{BUDGET} factor allowing at
most five of the possible outgoing arcs
$(\mmg_{i \rightarrow 1}, \dots, \mmg_{i \rightarrow L})$ to be selected.

\paragraph{Results.} Figure~\ref{fig:listops} confirms that, unsurprisingly, the
baseline with access to gold dependency structure quickly learns to predict
perfectly, while the simple left-to-right baseline cannot progress.
{\lpsmap} with
\textsf{BUDGET} constraints on the modifiers outperforms {\smap} by over 10
percentage points (Table~\ref{tab:listops}).

\subsection{Natural language inference\\with decomposable structured attention}
We now turn to the task of natural language inference, using \lpsmap to uncover
hidden alignments for structured attention networks.
Natural language inference is a pairwise classification task. Given a
\emph{premise} of length $m$, and a \emph{hypothesis} of length $n$, the pair
must be classified into one of
three possible relationships: entailment, contradiction, or neutrality. We use
the English language SNLI and MultiNLI datasets \citep{snli,multinli}, with the
same preprocessing and splits as \citet{sparsemap}.
\paragraph{Model architecture.}
We use the model of \citet{decomp} with no
intra-attention. The model computes a joint attention score matrix $\bm{S}$ of size
$m \times n$, where $s_{ij}$ depends only on $i$th word in the premise
and the $j$th word in the hypothesis (hence \emph{decomposable}).
For each premise word $i$, we apply \textbf{softmax} over the $i$\textsuperscript{th}
row of $\bm{S}$
to get a weighted average of the hypothesis. Then, similarly, for each
hypothesis word $j$, we apply softmax over the $j$\textsuperscript{th} row
of $\bm{S}$ yielding a representation of the premise. From then on,
each word embedding is combined with
its corresponding weighted context using an affine function, the results are
sum-pooled and passed through an output multi-layer perceptron to make a
classification.
We propose replacing the independent softmax attention with structured,
joint attention, normalizing over both rows and columns \emph{simultaneously} in
several different ways, using {\lpsmap} with scores $\ppr_{ij} = s_{ij}$.
We use frozen GloVe embeddings \citep{glove}, and all our models have 130k parameters (\cf \suppref{experimental}).

\paragraph{Factor graphs.} Assume $m \leq n$.
First,
like \citet{sparsemap}, we consider a
\textbf{matching} factor $f$:%
\begin{equation}
\Mpo_f\!=\!
\Big\{ \mg \in [0,1]^{mn};\!\!
\sum_{j \in [n]} \mmg_{ij} = 1,
\sum_{i \in [m]} \mmg_{ij} \leq 1
\Big\}.
\end{equation}%
\begin{table}[t]
\caption{NLI accuracy scores with structured attention. The {\lpsmap} models
perform competitively.\label{tab:nli}}
\small%
\centering%
\begin{tabular}{l r r r r}
\toprule
& \multicolumn{2}{c}{SNLI} & \multicolumn{2}{c}{MultiNLI} \\
& valid & test & valid & test \\
\midrule
softmax & 84.44 & 84.62 & 70.06 & 69.42 \\
matching & 84.57 & 84.16 & 70.84 & 70.36 \\
LP-matching & {\bf 84.70} & {\bf 85.04} & 70.57 & 70.64 \\
LP-sequential &83.96 & 83.67 & {\bf 71.10}& {\bf 71.17} \\
\bottomrule
\end{tabular}
\end{table}%
When $m=n$, linear maximization on this constraint set corresponds to the linear
assignment problem, solved by
%\citet{km,lapjv}.
the Kuhn-Munkres \citep{km} or Jonker-Volgenant \citep{lapjv}
algorithms, and
the solution is a doubly stochastic matrix.
When $m < n$, the scores can be padded with $-\infty$ to a square
matrix prior to invoking the algorithm. A linear maximization thus takes
$\mathcal{O}(n^3)$, and this instantiation of structured matching attention can
be tackled by {\smap}. Next we consider a relaxed equivalent formulation
which we call \textbf{LP-matching}, as shown in Figure~\ref{fig:match},
with one \textsf{XOR} factor per row and one \textsf{AtMostOne} factor per
column:
\begin{equation}
\begin{aligned}
\mathcal{F}=~&\{\textsf{XOR}(\mmg_{i1}, \dots, \mmg_{in}): i \in [m]\}
\\&\cup \{\textsf{AtMostOne}(\mmg_{1j}, \dots, \mmg_{mj}): j \in [n]\}
\label{eqn:atmostone-factors}
\end{aligned}
\end{equation}
Each subproblem can be solved in $\mathcal{O}(n)$ for a total complexity of
$\mathcal{O}(n^2)$ per iteration (\cf Appendix~\ref{supp:specialized}).
While more iterations may be
necessary to converge, the finer-grained approach might make
faster progress, yielding more useful latent alignments.
Finally, we consider a more expressive joint alignment that encourages
continuity. Inspired by the sequential alignment of \citet{sparsemap},
we propose a bi-directional model called \textbf{LP-sequence}, consisting of
a coarse, linear-chain Markov factor
\citep[with MAP provided by the Viterbi algorithm;][]{Rabiner1989}
parametrized by a single transition score
$\eta_N$ for every pair of alignments $(i, j)-(i
+ 1, j \pm 1)$. By itself, this factor may align multiple premise words to
the same hypothesis word.
We symmetrize it by overlaying
$m$ \textsf{AtMostOne} factors, like in
\eqnref{eqn:atmostone-factors}, ensuring each hypothesis word is aligned
on average to at most one premise word. Effectively, this results in a
sequence tagger constrained
to use each of the $m$ states at most once.
For both {\lpsmap} approaches, we rescale the result by row sums to ensure
feasibility.

\begin{figure}[t]%
\centering\includegraphics[width=.48\textwidth]{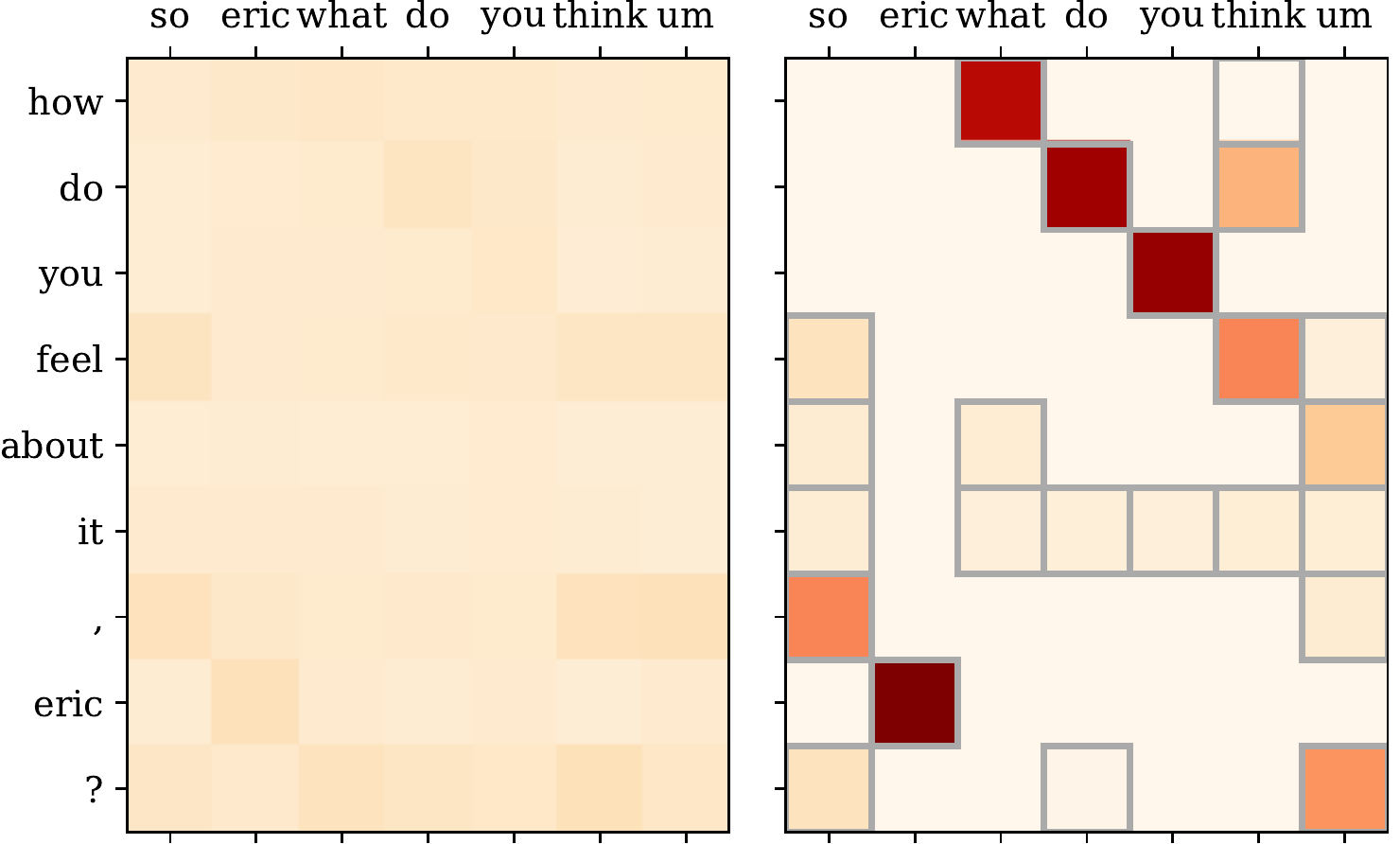}%
\caption{Attention induced using softmax (left) and LP-SparseMAP
sequential (right) on a MultiNLI example.
With this inductive bias, \lpsmap learns
a bi-directional alignment anchoring longer
phrases.\label{fig:multinli}}
\end{figure}

\paragraph{Results.} Table~\ref{tab:nli} reveals that LP-matching
is the best performing mechanism on SNLI, and LP-sequential on MultiNLI.
The $\eta_N$ transition score learned by LP-sequential is 1.6 on SNLI
and 2.5 on MultiNLI, and Figure~\ref{fig:multinli} shows an example of the useful inductive bias it learns.
On both datasets, the relaxed LP-matching outperforms the
coarse matching factor, suggesting that, indeed, equivalent parametrizations of
a model may perform differently when not run until convergence.

\begin{table}[t]
\small%
\centering%
\caption{Multilabel classification test $F_1$ scores.\label{tab:ml}}
\begin{tabular}{l r r}
\toprule
& bibtex & bookmarks \\
\midrule
Unstructured & 42.28 & 35.76 \\
Structured hinge loss & 37.70 & 33.26 \\
{\lpsmap} loss & {\bf 43.43} & {\bf 36.07} \\
\bottomrule
\end{tabular}
\end{table}
\subsection{Multilabel classification}
Finally, to confirm that LP-SparseMAP is also suitable as in the supervised
setting, we evaluate on the task of multilabel classification. Our
factor graph has $k$ binary variables (one for each label), and a pairwise
factor for every label pair:
\begin{equation}
\mathcal{F}=\{ \textsf{PAIR}(\mmg_i, \mmg_j; \ppr_{ij}) : 1 \leq i < j \leq k\}.
\end{equation}
This yields the standard fully-connected pairwise MRF:
\begin{equation}
\DP{\pr}{\mg} = \sum_i \mmg_i \ppr_i + \sum_{i < j} \mmg_{i} \mmg_{j} \ppr_{ij}.
\end{equation}
\paragraph{Neural network parametrization.} We use a 2-layer multi-layer perceptron
to compute the score for each variable. In the structured models, we have an
additional $\hlf k (k - 1)$ parameters for the co-occurrence score of every pair
of classes. We compare an unstructured baseline (using the binary logistic loss
for each label), a structured hinge loss (with LP-MAP inference) and a {\lpsmap} loss model.
We solve LP-MAP using \adq and {\lpsmap} with our proposed algorithm
(\cf Appendix~\ref{supp:experimental}).

\paragraph{Results.} Table~\ref{tab:ml} shows the example $F_1$ score on the test set
for the \emph{bibtex} and \emph{bookmarks} benchmark
datasets \citep{katakis2008multilabel}. The structured hinge loss model
is worse than the unstructured (binary logistic loss) baseline;
the {\lpsmap} loss model
outperforms both. This suggests that the {\lpsmap} loss is
promising for structured output learning.
We note that, in strictly-supervised setting, approaches that
blend inference with learning \citep[\eg,][]{chen2015learning,tang2016bethe}
may be more efficient; however, LP-SparseMAP can work both as a hidden layer and
a loss, with no redundant computation.
\section{Related work}
\paragraph{Differentiable optimization.} The most related research direction
involves bi-level optimization, or \emph{argmin differentiation}
\citep{gould,djolonga};
Typically, such research assumes problems are expressible in a standard form, for instance
using quadratic programs \citep{optnet} or generic disciplined convex programs
\citep[Section 7,][]{amos2019differentiable,amoscvx,diffcone}.
We take inspiration from this line of work by developping \lpsmap as a flexible
domain-specific language for defining latent structure.
The generic approaches are not applicable for the typical optimization problems arising
in structured prediction, because of the intractably large number of constraints
typically necessary, and the difficulty of formulating many problems in standard
forms.
Our method instead assumes interacting through the problem through local oracle
algorithms, exploiting the structure of the factor graph and allowing for more efficient handling of
coarse factors and logic constraints via \emph{nested}
subproblems.

\paragraph{Latent structure models.}
Our motivation and applications are mostly focused on learning with
latent structure. Specifically, we are interested in global optimization
methods, which require marginal inference or similar
relaxations \citep{kim-structuredattn,lapata,caio-iclr,caio-acl,sparsemap},
rather than incremental methods based on policy gradients \citep{rlspinn}.
Promising methods exist for approximate marginal inference in factor graphs with MAP
calls \citep{belanger2013marginal,barrierfw,tang2016bethe},
relying on entropy approximation
penalties.
Such approaches focus on supervised structure prediction, which is not our main
goal; and their backward passes has not been studied to our knowledge.
Importantly, as these penalties are non-quadratic, the active set algorithm does not
apply, falling back to the more general variants of Frank-Wolfe.
The active set algorithm is a key ingredient of our work, as it exhibits fast finite convergence,
finds sparse solutions and -- crucially -- provides precomputation of the matrix inverse
required in the backward pass \citep{sparsemap}.
In contrast, the quadratic penalty \citep{smooth_and_strong,sparsemap} is more
amenable to optimization, as well as bringing other sparsity benefits.
The projection step of \citet{spigot} can be cast as a \smap problem,
thus our algorithm can be used to also extend their method to arbitrary factor graphs.
For pairwise MRFs (a class of factor graphs), differentiating belief propagation, either through
unrolling or perturbation-based approximation, has been studied \citep{ves,domke}.
Our approach instead computes \emph{implicit} gradients, which is more
efficient, thanks to quantities precomputed in the forward pass, and in some
circumstances has been shown to work better \citep{metalearning}.
Finally, MRF-based approaches have not been explored in the presence of
logic constraints or coarse factors, while our formulation is
built from the beginning with such use cases in mind.

\section{Conclusions}
We introduced \lpsmap, an extension of \smap to sparse
differentiable optimization in any factor graph, enabling neural hidden
layers with arbitrarily complex structure, specified using a familiar
domain-specific language. We have shown \lpsmap to outperform \smap for latent
structure learning, %and its corresponding loss function to outperform the
and outperform the structured hinge for structured output learning.
We hope that our toolkit empowers future research on latent structure,
leading to powerful models based on domain knowledge.
In future work, we shall investigate further applications
where expertise about the domain structure, together with minimal
self-supervision deployed via the \lpsmap loss, may lead to
data-efficient learning, even for more expressive models without artificial bottlenecks.

\section*{Acknowledgements}

We are grateful to
Brandon Amos,
Mathieu Blondel,
Gon\c{c}alo Correia,
Caio Corro,
Erick Fonseca,
Pedro Martins,
Tsvetomila Mihaylova,
Nikita Nangia,
Fabian Pedregosa,
Marcos Treviso,
and the reviewers, for their valuable feedback and discussions.
This work is built on open-source software; we acknowledge the scientific Python
stack \citep{python,numpy,nparray,scipy,cython} and the developers of Eigen \citep{eigen}.
This work was supported by the European Research Council (ERC StG DeepSPIN
758969), by the Funda\c{c}\~ao para a Ci\^encia e Tecnologia
through contracts UID/EEA/50008/2019 and CMUPERI/TIC/0046/2014 (GoLocal),
and by the MAIA project, funded by the P2020 program under contract number 045909.

\bibliographystyle{apalike}

\clearpage
\appendix
\onecolumn
\begin{center}
{\huge \textbf{Supplementary Material}}
\end{center}

\section{Separable reformulation of LP-SparseMAP}\label{supp:lpsmap}

\begin{lemma}\label{lemma:ctilde}
Let $\bm{\delta}$, $\bm{D}$, $\CCs$, $\MMs$ defined as in Proposition~\ref{prop:separable}.
Let $\bm{S} = \diag(\bm{\delta})$.
Then,
\begin{enumerate}[label=(\roman*)]
    \item $\CC^\top \CC = \bm{S}^2$
    \item $\CCs = \CC \bm{S}^{-1}$;
    \item $\CCs^\top\CCs = \bm{I}$;
    \item For any feasible pair $(\mg, \p)$,
        $\mg = \CCs^\top \MMs \p$, and
        $\| \mg \| = \| \MMs\p \|.$
\end{enumerate}
\end{lemma}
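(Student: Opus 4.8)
The plan is to reduce everything to a diagonal computation by exploiting the combinatorial structure of the selector matrices. The crucial observation is that each $\CC_f$ merely extracts the coordinates of $\mg$ belonging to factor $f$: every row of $\CC_f$ is a standard basis row vector $\bm{e}_j^\top$ for some global index $j$ in the scope of $f$. Consequently $\CC_f^\top \CC_f = \sum_{j \in f} \bm{e}_j \bm{e}_j^\top$ is diagonal, and summing over factors gives $\CC^\top \CC = \sum_f \CC_f^\top \CC_f = \diag(\deg(1), \dots, \deg(d))$, since coordinate $j$ is counted exactly once for each factor governing it. Because $\delta_j = \sqrt{\deg(j)}$, this diagonal matrix is precisely $\bm{S}^2$, establishing (i).

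For (ii) I would track the two index conventions carefully: $\CC$ and $\CCs$ have rows indexed by (factor, variable) pairs, whereas $\bm{S}$ acts on the $d$ global coordinates. The row of $\CC$ associated with variable $j$ in factor $f$ is $\bm{e}_j^\top$, so the corresponding entry of $\CC\bm{\delta}$ equals $\delta_j$; hence $\bm{D}$ rescales that row by $\delta_j$, and $\bm{D}^{-1}\CC$ scales it by $1/\delta_j$, producing $\frac{1}{\delta_j}\bm{e}_j^\top$. On the other hand, right-multiplying $\CC$ by $\bm{S}^{-1} = \diag(1/\delta_k)$ scales column $k$ by $1/\delta_k$, which yields the same $\frac{1}{\delta_j}\bm{e}_j^\top$ on that row. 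Since the two operations agree entrywise, $\CCs = \bm{D}^{-1}\CC = \CC\bm{S}^{-1}$.

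Claims (iii) and (iv) then follow by short algebra. Using (ii) and the symmetry of the diagonal $\bm{S}$, I compute $\CCs^\top\CCs = \bm{S}^{-1}(\CC^\top\CC)\bm{S}^{-1} = \bm{S}^{-1}\bm{S}^2\bm{S}^{-1} = \bm{I}$ by (i), which is (iii). For (iv), feasibility of $(\mg,\p)$ means $\CCs\mg = \MMs\p$ (equivalent to $\CC\mg = \MM\p$, as noted after Proposition~\ref{prop:separable}); left-multiplying by $\CCs^\top$ and invoking (iii) gives $\mg = \CCs^\top\CCs\mg = \CCs^\top\MMs\p$, while $\|\MMs\p\|^2 = \|\CCs\mg\|^2 = \mg^\top\CCs^\top\CCs\mg = \|\mg\|^2$ yields the norm identity.

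The step I would treat most carefully, and the only genuinely delicate point, is (ii): because $\bm{D}$ is indexed by the stacked (factor, variable) rows while $\bm{S}$ is indexed by the $d$ global variables, it is tempting to conflate them despite their different sizes. The resolution is that $\bm{D}$ applies the same scalar $\delta_j$ to every copy of variable $j$ across all factors containing it, which is exactly the row-by-row effect of right-multiplication by the global $\bm{S}^{-1}$. Verifying this correspondence entrywise is what makes the remaining claims collapse into routine diagonal identities.
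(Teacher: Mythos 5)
Your proof is correct and follows essentially the same route as the paper's: part (i) by direct computation from the selector structure of $\CC$ (your block-wise sum $\CC^\top\CC=\sum_f \CC_f^\top\CC_f$ is just a reorganization of the paper's entrywise indicator calculation), part (ii) by entrywise comparison of $\bm{D}^{-1}\CC$ and $\CC\bm{S}^{-1}$, and parts (iii)--(iv) by the same diagonal algebra and left-multiplication by $\CCs^\top$. Your closing remark on the index mismatch between $\bm{D}$ and $\bm{S}$ is exactly the point the paper's proof of (ii) handles, so nothing is missing.
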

\begin{proof}
(i)
The matrix $\CC$, which expresses the agreement constraint
$\CC \mg = \MM \p$, is a stack of selector matrices, in other words, its
sub-blocks are either the identity $\bm{I}$ or the zero matrix $\bm{0}$.
We index its rows by
pairs $(f, k): f \in \FF, k \in [d_f]$,
and its columns by $j \in [d]$.
Denote by $\agr{f}{k}{j}$ the fact that the $k$\textsuperscript{th}
variable under factor $f$ is $\mg_j$.  Then,
$(\CC)_{(f, k), j} = \iv{\agr{f}{k}{j}}$. We can then explicitly compute
\[(\CC^\top \CC)_{ij} = \sum_{f \in \FF} \sum_{k \in [d_f]}
\iv{\agr{f}{k}{i}}\iv{\agr{f}{k}{j}}.\] If $i \neq j$,
$\iv{\agr{f}{k}{i}}\iv{\agr{f}{k}{j}}=0$, so
$(\CC^\top\CC)_{ij} = \begin{cases}\deg(j)&i=j,\\0,&\text{o.w.}\end{cases}
=\bm{S}^2$.

(ii) By construction,
$\bm{D}_{(f,k),(f,k)} = (\CC \bm{\delta})_{(f,k)} =
\sum_{i \in [d]}
\iv{\agr{f}{k}{i}}
\sqrt{\deg(i)}
= \sqrt{\deg(j)},$
for the unique variable $j$ with $\agr{f}{k}{j}$.
Thus,
\[
(\bm{D}^{-1}\CC)_{(f,k),j}=
\iv{\agr{f}{k}{j}}
\sqrt{\deg(j)}
=
(\CC\bm{S}^{-1})_{(f,k),j}.
\]

(iii) It follows from (i) and (ii) that $\CCs^\top \CCs = \bm{S}^{-1}\CC^\top\CC \bm{S}^{-1}=
\bm{S}^{-1}\bm{S}^2\bm{S}^{-1}=\bm{I}$.

(iv) Since $\bm{D}$ is full-rank, the feasibility condition is equivalent to
$\CCs \mg = \MMs \p$. Left-multiplying by $\CCs^\top$ yields $\mg = \CCs^\top
\MMs \p$. Moreover, $\|\MMs \p\|^2 = \| \CCs \mg \|^2 = \mg^\top \CCs^\top
\CCs \mg = \|\mg\|^2.$
\end{proof}

\section{Derivation of updates and comparison to LP-MAP}\label{supp:ad3qp}

Recall the problem we are trying to minimize, from \eqnref{eqn:lpsmapsep}:

\begin{equation}
\underset{\mg,~\p}{\text{maximize}}
\sum_{f\in\FF} \DP{\pr_f}{\AA_f \p_f} - .5 \| \MMs_f \p_f \|^2
~~~\text{subject to}~~~
\p \in \simplex_{f_1} \times \simplex_{f_2} \times \dots \times \simplex_{f_n},
~ \CCs \mg = \MMs \p.
\end{equation}

Since the simplex constraints are separable, we may move them to the objective,
yielding
\begin{equation}
\label{eqn:lpsmapsep-admm}
\underset{\mg,~\p}{\text{maximize}}~~
\sum_{f\in\FF} \DP{\pr_f}{\AA_f \p_f} - .5 \| \MMs_f \p_f \|^2
- \iota_{\simplex_f}(\p_f)
\quad\text{subject to}\quad
~ \CCs \mg = \MMs \p.
\end{equation}

The $\gamma$-augmented Lagrangian of problem \ref{eqn:lpsmapsep-admm} is
\begin{equation}
\begin{aligned}
    \mathcal{L}_\gamma(\mg, \p. \lbd) &=
\sum_{f\in \FF} \Big(
    \DP{\pr_f}{\AA_f \p_f} - .5 \| \MMs_f \p_f \|^2 - \iota_{\simplex_f}(\p_f)
\Big)
    - \DP{\lbd}{\CCs \mg - \MMs \p}
    - \frac{\gamma}{2} \| \CCs\mg - \MMs \p \|^2. \\
\end{aligned}
\end{equation}

The solution $\mg^\star, \p^\star, \lbd^\star$ is a saddle point of the
Lagrangian, \ie, a solution of
\begin{equation}\label{eqn:saddle}
\min_{\lbd} \max_{\p, \mg} \mathcal{L}_\gamma(\mg,\p,\lbd)
\end{equation}
ADMM optimizes \eqnref{eqn:saddle} in a block-coordinate fashion;
we next derive each block update.

\subsection{Updating \texorpdfstring{{\boldmath $p$}}{p}}
We update $\p_f$ for each $f \in \FF$ \textbf{independently} by solving:
\begin{equation}
\begin{aligned}
\p_f^{(t)} \leftarrow \argmax_{\p_f}~&
\mathcal{L}_\gamma(\mg^{(t-1)},\p,\lbd^{(t-1)})\\
\end{aligned}
\end{equation}

Denoting $\pr_f = [\pr_{f,M}, \pr_{f,N}]$, we have that
\[
\DP{\pr_f}{\AA_f \p_f} =
\DP{\pr_{f,M}}{\MM_f \p_f} +
\DP{\pr_{f,N}}{\NN_f \p_f} =
\DP{\bm{D}_f \pr_{f,M}}{\MMs_f \p_f} +
\DP{\pr_{f,N}}{\NN_f \p_f}
\]
The $\gamma$-augmented term regularizing the subproblems toward the
current estimate of the global solution $\mg^{(t-1)}$ is
\[
\frac{\gamma}{2} \| \CCs_f \mg^{(t-1)} - \MMs_f \p_f \|^2
= \frac{\gamma}{2} \| \MMs_f \p_f \| - \gamma \DP{\CCs_f \mg^{(t-1)}}{\MMs_f \p_f} +
\text{const}
\]
For each factor, the subproblem objective is therefore:
\begin{equation}\label{eqn:subprob}
\begin{aligned}
f(\p_f) &= \DP{\pr_f}{\AA_f \p_f} - \DP{\lbd^{(t)}_f}{\MMs_f \p_f}
- \frac{\gamma}{2} \| \CCs_f \mg^{(t-1)} - \MMs_f \p \|^2
- \frac{1}{2} \|\MMs_f \p\|^2 \\
&=
\DP{\bm{D}_f \pr_{f,M} - \lbd^{(t-1)}_f + \gamma \CCs_f \mg^{(t-1)}}{\MMs_f \p_f}  +
\DP{\pr_{f,N}}{\NN_f\p_f} -
\frac{1+\gamma}{2} \| \MMs_f \p_f\|^2 + \text{const} \\
&\propto
\DP{\widetilde{\pr}_{f,M}}{\MMs_f \p_f} +
\DP{\widetilde{\pr}_{f,N}}{\NN_f \p_f} -
\frac{1}{2} \| \MMs_f \p_f\|^2 + \text{const}.
\end{aligned}
\end{equation}
This is exactly a SparseMAP instance with
$\widetilde{\pr}_{f,M} = \frac{1}{1+\gamma}\big(\bm{D}_f \pr_{f,M} - \lbd^{(t-1)}_f + \gamma \CCs_f
\mg^{(t-1)}\big)$ and
$\widetilde{\pr}_{f,N} = \frac{1}{1+\gamma}\pr_{f,N}$.

\textbf{Observation.}
For comparison, when solving LP-MAP with \adq, the subproblems minimize the objective
\begin{equation}
\begin{aligned}
f(\p_f) &= \DP{\pr_f}{\AA_f \p_f} - \DP{\lbd^{(t)}_f}{\MM_f \p_f}
- \frac{\gamma}{2} \| \CC_f \mg^{(t)} - \MM_f \p_f \|^2 \\
&= \DP{\pr_{f,M} - \lbd^{(t)}_f + \gamma \CC_f \mg^{(t)}}{\MM_f \p_f}
+ \DP{\pr_{f,N}}{\NN_f \p_f} - \frac{\gamma}{2} \|\MM_f\p_f \|^2,
\end{aligned}
\end{equation}
so the $\p$-update is a SparseMAP instance with
$\widetilde{\pr}_{f,M} = \frac{1}{\gamma} \big(\pr_{f,M} - \lbd^{(t)}_f + \gamma \CC_f \mg^{(t)}\big)$
and
$\widetilde{\pr}_{f,N} = \frac{1}{\gamma} \pr_{f,N}$.
Notable differences is the scaling by $1+\gamma$ instead of $\gamma$
(corresponding to the added regularization), and the diagonal degree
reweighting.

\subsection{Updating \texorpdfstring{\boldmath $\mu$}{the marginals}}
We must solve
\begin{equation}
\begin{aligned}
    \mg^{(t)} &\leftarrow \argmax_{\mg}~
\mathcal{L}_\gamma(\mg,\p^{(t)},\lbd^{(t-1)})\\ %
&= \argmin_{\mg}~ \frac{\gamma}{2} \| \CCs \mg - \MMs \p^{(t)} \|^2
+ \DP{\CCs^\top\lbd^{(t-1)}}{\mg}.
\end{aligned}
\end{equation}
This is an unconstrained problem. Setting the gradient of the
objective to $\bm{0}$, we get
\begin{equation}
\begin{aligned}
    \bm{0} &\setto \gamma \CCs^\top (\CCs \mg - \MMs \p^{(t)}) +
\CCs^\top\lbd^{(t-1)} \\
&= \gamma(\mg - \CCs^\top\MMs\p^{(t)}) + \CCs^\top\lbd^{(t-1)}
\end{aligned}
\end{equation}
with the unique solution
\begin{align}
    \mu^{(t)} &\leftarrow \CCs\MMs\p^{(t)} - \frac{1}{\gamma} \CCs^\top
    \lbd^{(t-1)}\label{eqn:mu_update_orig} \\
    &= \CCs\MMs\p^{(t)},
\end{align}
where the last step follows from the fact that
our resulting algorithm
maintains the invariant $\CCs^\top\lbd^{(\cdot)}=0$,
as we show in the next section.

\subsection{Updating the Lagrange multipliers}
Since $\mathcal{L}_\gamma$ is linear in $\lbd$, $\min_{\lbd}
\mathcal{L}_\gamma(\lbd) = -\infty$, therefore we may not globally minimize \wrt
$\lbd$. Instead, we make only a small gradient step:
\begin{equation}
\lbd^{(t)} \leftarrow \lbd^{(t-1)} + \gamma \big(\CCs \mg^{(t)} - \MMs \p^{(t)}
\big).
\end{equation}
As promised, we inspect below the value of $\CCs^\top\lbd$ under this update
rule.
\begin{equation}
\begin{aligned}
    \CCs^\top\lbd^{(t)} &=
    \CCs^\top\lbd^{(t-1)} + \gamma(\cancel{\CCs}^\top\cancel{\CCs}\mg^{(t)} -
    \CCs^\top\MMs\p^{t}) \\
    &= \CCs^\top\lbd^{(t-1)} + \gamma\big(\mg^{(t)} - (\mg^{(t)} +
\frac{1}{\gamma}\CCs^\top\lbd^{(t-1)})\big)
\quad\text{(from Eq.~\ref{eqn:mu_update_orig})}\\
&= \CCs^\top\lbd^{(t-1)} - \frac{\gamma}{\gamma} \CCs^\top\lbd^{(t-1)} =0.
\end{aligned}
\end{equation}

\section{Backward pass}
\subsection{SparseMAP}\label{supp:smap_grad}

As a reminder, we repeat here the form of the \smap Jacobian \citep{sparsemap},
along with a brief derivation. This result plays an important role in \lpsmap
backward pass.

\begin{proposition}
Given a structured problem with $\AA= [\MM, \NN]$,
denote the \smap solution for input scores $\pr = [\pr_M, \pr_N]$ as
$\mg$ where
\begin{equation}\label{eq:single_smap}
(\mg, \p) =
\argmax_{\substack{\mg = \MM \p\\ \p \in \simplex}}
\DP{\pr}{\AA\p}
- \frac{1}{2} \| \mg \|^2.
\end{equation}

Let $\mathcal{S} = \{y_1, \dots, y_k\} \subset \YY$ denote the support set of
selected structures, and denote
$\Mb \coloneqq \MM_\mathcal{S} \in \reals^{d_M \times |\mathcal{S}|}$,
$\bar{\NN} \coloneqq \NN_\mathcal{S} \in \reals^{d_N \times |\mathcal{S}|}$, and
\begin{equation}
\ZZ = (\Mb^\top \Mb)^{-1},\quad
\bm{z} = \ZZ \bm{1},\quad
\QQ = \ZZ - \frac{\bm{z}\bm{z}^\top}{\bm{1}^\top\bm{z}}.
\end{equation}

Then, we have
\begin{equation}\label{eq:smap_grad}
\pfrac{\mg}{\pr_M}(\pr_M, \pr_N) = \MQM,
\qquad
\pfrac{\mg}{\pr_N}(\pr_M, \pr_N) = \Mb \QQ \bar{\NN}.
\end{equation}
\end{proposition}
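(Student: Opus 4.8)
The plan is to reduce \eqnref{eq:single_smap} to a single concave quadratic program and then differentiate its optimality conditions on the active set. First I would eliminate the constraint $\mg=\MM\p$ and rewrite the objective purely in $\p$ as $\DP{\pr_M}{\MM\p}+\DP{\pr_N}{\NN\p}-\hlf\,\p^\tr\MM^\tr\MM\p$, which is a concave QP over $\simplex$ since its Hessian $-\MM^\tr\MM$ is negative semidefinite. Introducing a scalar multiplier $\tau$ for $\DP{\bm{1}}{\p}=1$ and a nonnegative vector multiplier $\bm{\nu}$ for $\p\geq\bm{0}$, the stationarity condition of the Lagrangian reads $\MM^\tr\pr_M+\NN^\tr\pr_N-\MM^\tr\MM\p-\tau\bm{1}+\bm{\nu}=\bm{0}$.

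Next I would pass to the support $\mathcal{S}=\{i:p_i>0\}$. By complementary slackness the entries of $\bm{\nu}$ indexed by $\mathcal{S}$ vanish, so restricting stationarity to the rows and columns of $\mathcal{S}$ and appending the normalization constraint yields the square linear system
\[
\begin{bmatrix}\Mb^\tr\Mb & \bm{1}\\ \bm{1}^\tr & 0\end{bmatrix}
\begin{bmatrix}\p_\mathcal{S}\\ \tau\end{bmatrix}
=
\begin{bmatrix}\Mb^\tr\pr_M+\bar{\NN}^\tr\pr_N\\ 1\end{bmatrix},
\]
with $\Mb=\MM_\mathcal{S}$ and $\bar{\NN}=\NN_\mathcal{S}$. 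Because the active-set solver maintains a linearly independent set of selected columns, $\Mb^\tr\Mb$ is invertible and $\ZZ=(\Mb^\tr\Mb)^{-1}$ is well defined, so this system has a unique solution.

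I would then solve by block elimination: the first block gives $\p_\mathcal{S}=\ZZ(\Mb^\tr\pr_M+\bar{\NN}^\tr\pr_N-\tau\bm{1})$, and substituting into $\bm{1}^\tr\p_\mathcal{S}=1$ solves for $\tau$ in terms of $\bm{z}=\ZZ\bm{1}$ and $\bm{1}^\tr\bm{z}$. Treating the support as locally constant in $\pr$, I can differentiate this closed form directly: using symmetry of $\ZZ$ (so $\bm{1}^\tr\ZZ=\bm{z}^\tr$) and the definition of $\QQ$, one gets $\pfrac{\p_\mathcal{S}}{\pr_M}=\ZZ\Mb^\tr-\bm{z}\bm{z}^\tr\Mb^\tr/(\bm{1}^\tr\bm{z})=\QQ\Mb^\tr$, and the $\pr_N$ case is identical with $\Mb^\tr$ replaced by $\bar{\NN}^\tr$, giving $\pfrac{\p_\mathcal{S}}{\pr_N}=\QQ\bar{\NN}^\tr$. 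Left-multiplying by $\Mb$, since $\mg=\Mb\p_\mathcal{S}$ on the support, delivers \eqnref{eq:smap_grad}.

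The step I expect to be the main obstacle is not the algebra but the justification that the Jacobian exists and that $\mathcal{S}$ is locally fixed, which is what allows differentiating the reduced linear system rather than the original constrained problem. This rests on a non-degeneracy (strict complementarity) assumption holding for almost every $\pr$, so that the solution map is continuously differentiable off a measure-zero set with constant active set in a neighborhood. I would make this rigorous by invoking the standard piecewise-smoothness of solution maps of strictly convex parametric QPs, and note that at the exceptional scores where the support changes the map remains directionally differentiable, so the stated formula gives the correct Jacobian wherever it exists.
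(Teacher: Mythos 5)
Your proposal is correct and follows essentially the same route as the paper's proof: both reduce to the simplex-constrained QP, write the KKT stationarity plus normalization on the (locally fixed) support as the bordered linear system with matrix $\begin{bmatrix}\Mb^\tr\Mb & \bm{1}\\ \bm{1}^\tr & 0\end{bmatrix}$, and obtain $\pfrac{\bar{\p}}{\bar{\pt}}=\QQ$ before applying the chain rule through $\bar{\pt}=\Mb^\tr\pr_M+\bar{\NN}^\tr\pr_N$ and $\mg=\Mb\bar{\p}$. The only differences are cosmetic: you solve the system by block elimination and differentiate the resulting closed form, whereas the paper differentiates the system and invokes the block-matrix inverse (whose top-left block is $\QQ$), and your justification of the locally constant support via strict complementarity is a slightly more explicit version of the paper's measure-zero/generalized-Jacobian argument.
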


\begin{proof}
Rewrite the optimization problem in Eq.~\ref{eq:single_smap} in terms of
a convex combination of structures:
\begin{equation}\label{eq:simplex_smap}
\text{minimize}~\DP{\pt}{\p} - \frac{1}{2} \| \MM\p \|^2
\quad\text{subject to}~
\p \in \simplex.
\end{equation}
The Lagrangian is given by
\begin{equation}
\mathcal{L}(\p, \bm{\nu}, \tau) =
\frac{1}{2} \| \MM\p \|^2
-\DP{\pt - \tau\bm{1} - \bm{\nu}}{\p}.
\end{equation}

The solution $\p$ is sparse with nonzero coordinates $\mathcal{S}$. Small
changes to $\pt$ only lead to changes in $\mathcal{S}$ on a
measure-zero set of critical tie-breaking points, and there is always a
direction of change that leaves $\mathcal{S}$ unchanged.
We may thus assume that $\mathcal{S}$ does not change with small changes to
$\pt$, yielding the Jacobian at most points, and a generalized Jacobian
otherwise \citep{clarke}.

From complementary slackness, $\bar{\bm{\nu}} = \bm{0}$,
so the conditions $\nabla_{\bar{\p}} \mathcal{L} \setto \bm{0}$ and $\bm{1}^\top \bar{\p}
\setto 1$ can be written as
\begin{equation}
\begin{bmatrix}
\Mb^\top \Mb & \bm{1} \\
\bm{1}^\top  & 0 \\
\end{bmatrix}
~
\begin{bmatrix}
\bar{\p} \\
\tau \\
\end{bmatrix}
~=~
\begin{bmatrix}
\bar{\pt} \\
1 \\
\end{bmatrix}.
\end{equation}
Therefore, differentiating w.r.t.\ $\bar{\pt}$, the Jacobians
$\pfrac{\bar{\p}}{\bar{\pt}}$ and
$\pfrac{\tau}{\bar{\pt}}$ must satisfy
\begin{equation}
\begin{bmatrix}
\Mb^\top \Mb & \bm{1} \\
\bm{1}^\top  & 0 \\
\end{bmatrix}
~
\begin{bmatrix}
\pfrac{\bar{\p}}{\bar{\pt}}\\
\pfrac{\tau}{\bar{\pt}}\\
\end{bmatrix}
~=~
\begin{bmatrix}
\bm{I} \\
0 \\
\end{bmatrix}.
\end{equation}
Denote by $\ZZ \coloneqq (\Mb^\top \Mb)^{-1}, \bm{z} = \ZZ\bm{1}, t \coloneqq
\bm{1}^\top\bm{z},
\QQ = \ZZ - \frac{\bm{zz}^\top}{t}.$
Using block-matrix inversion,
\begin{equation}\label{eq:block_inverse}
\begin{bmatrix}
\Mb^\top \Mb & \bm{1} \\
\bm{1}^\top  & 0 \\
\end{bmatrix}^{-1} =
\begin{bmatrix}
\QQ & \nicefrac{\bm{z}}{t} \\
\nicefrac{\bm{z}^\top}{t} & -\nicefrac{1}{t} \\
\end{bmatrix}.
\end{equation}
Therefore, $\pfrac{\bar{\p}}{\bar{\pt}} = \QQ$.
Since $\mg_M = \Mb{\bar{\p}}$ and $\bar{\pt} = \Mb^\top \pr_M + \bar{\NN}^\top
\pr_N$, the chain rule gives Eq.~\ref{eq:smap_grad}.
Importantly, when using the active set method for computing the \smap solution
\citep{sparsemap}, the inverse in Eq.~\ref{eq:block_inverse}, and thus $\QQ$, is
precomputed incrementally during the forward pass, and thus readily available
for no extra cost..

\end{proof}

\subsection{\lpsmap}\label{supp:lpsmap_grad}

\begin{proof}
Given variable scores $\pr_M$ and factor scores $\pr_{f,N}$, we construct a
vector $\pt = \Mb^\top \CCs \pr_M + \NN \pr_{f,N}$.
To derive the backward pass, we start from the Lagrangian with simplex constraints:
\begin{equation}
    \mathcal{L}(\mg, \p. \lbd. \bm{\tau}, \bm{\nu}) =
    \DP{\pt}{\p}
- \frac{1}{2} \| \MMs \p \|^2
- \DP{\lbd}{\CCs \mg - \MMs \p}
- \DP{\bm{\tau}}{\bm{B}\p - \bm{1}}
- \DP{\bm{\nu}}{\p}.
\end{equation}
where $\bm{B}$ is a matrix with row-vectors $\bm{1}$ along the diagonal (so that
 $\bm{B}\p = [\cdots, \bm{1}\p_f, \cdots]$).
For any feasible $(\p, \mg)$ we have that  $\| \MMs \p \|^2 = \| \mg \|^2$,
so we may rewrite the Lagrangian as:
\begin{equation}
    \mathcal{L}(\mg, \p. \lbd. \bm{\tau}, \bm{\nu}) =
    \DP{\pt}{\p}
- \frac{1}{4} \| \MMs \p \|^2
- \frac{1}{4} \| \mg \|^2
- \DP{\lbd}{\CCs \mg - \MMs \p}
- \DP{\bm{\tau}}{\bm{B}\p - \bm{1}}
- \DP{\bm{\nu}}{\p}.
\end{equation}
The corresponding optimality conditions are
\begin{align}
    \bm{0} \setto \nabla_{\p_f} \mathcal{L} &=
    \pt
    - .5 \MMs_f^\top \MMs_f \p_f
    + \MMs_f^\top \lbd_f
    - \tau_f\bm{1} - \bm{\nu}_f \quad\text{for all}~f \in \FF,\\
    \bm{0} \setto \nabla_{\mg} \mathcal{L} &=
    -.5 \mg - \CCs^\top \lbd \\
    \bm{0} \setto \nabla_{\lbd} \mathcal{L} &= \CCs\mg - \MMs\p\\
    \bm{0} \setto \nabla_{\bm{\tau}} \mathcal{L} &= \bm{B}\p - \bm{1}
\end{align}
along with $\bm{\nu} \geq 0, \p \geq 0,$ and the complementarity slackness
conditions $\DP{\bm{\nu}}{\p} = \bm{0}$.
As in \suppref{smap_grad}, we observe that the support $\mathcal{S}_f$
of each factor $f$ does not change with small changes to $\bm{\eta}$.
Once again, we use the overbar $\bar\cdot$ to denote the
restriction of a vector or matrix to the (block-wise) support $\mathcal{S}_f$,
resulting in, for instance,
\[
\bar\p > 0 \in \RR^{\sum_f |\mathcal{S}_f|},
\quad
\Mb \in \RR^{\left(\sum_f d_f\right)~\times~\left(\sum_f |\mathcal{S}_f|\right)},
\quad\text{etc.}
\]
On the support,  $\bar{\bm{\nu}}_f$ vanishes, so we rewrite the conditions in terms of $\bar{\p}$.
In matrix form,
\begin{equation}
\begin{bmatrix}
.5 \Mb^\top\Mb & \BB^\top& \bm{0}   & -\Mb^\top  \\
\BB            & \bm{0}  & \bm{0}   & \bm{0}     \\
\bm{0}         & \bm{0}  & .5\bm{I} & \CCs^\top  \\
-\Mb           & \bm{0}  & \CCs     & \bm{0}     \\
\end{bmatrix}
\left[\begin{array}{c}\bar\p\\\bm{\tau}\\\mg\\\lbd\end{array} \right]
=\left[ \begin{array}{c} \bar{\pt} \\\bm{1}\\\bm{0}\\\bm{0}\end{array} \right]
\end{equation}
Differentiating w.r.t.\ $\bar{\pt}$ yields
\begin{equation}
\begin{bmatrix}
.5 \Mb^\top\Mb & \BB^\top& \bm{0}   & -\Mb^\top  \\
\BB            & \bm{0}  & \bm{0}   & \bm{0}     \\
\bm{0}         & \bm{0}  & .5\bm{I} & \CCs^\top  \\
-\Mb           & \bm{0}  & \CCs     & \bm{0}     \\
\end{bmatrix}
\left[\begin{array}{c}\JJ_{\bar\p}\\\JJ_{\bm{\tau}}\\\JJ_{\mg}\\\JJ_{\lbd}\end{array} \right]
=\left[ \begin{array}{c} \bm{I} \\\bm{0}\\\bm{0}\\\bm{0}\end{array} \right]
\end{equation}
Observe that the top-left block can be re-organized into a block-diagonal
matrix with blocks with known inverses (similar to Eq.~\ref{eq:block_inverse})
\begin{equation}
\begin{bmatrix}
.5 \Mb_f^\top \Mb_f & \bm{1} \\
\bm{1}^\top  & 0 \\
\end{bmatrix}^{-1} =
\begin{bmatrix}
2\QQ_f & \cdot \\
\cdot & \cdot \\
\end{bmatrix}
\end{equation}
where the values except for the top-left block can be easily obtained in terms
of the blocks of Eq.~\ref{eq:block_inverse}, but this is not necessary,
since all others rows and columns corresponding to $\bm{\tau}$ are zero.

We multiply the top half of the system by this inverse and eliminate
$\bm{\tau}$, leaving
\begin{equation}
\begin{bmatrix}
\bm{I}         & \bm{0}   & -2\QQ\Mb^\top  \\
\bm{0}         & .5\bm{I} & \CCs^\top  \\
-\Mb           & \CCs     & \bm{0}     \\
\end{bmatrix}
\left[\begin{array}{c}\JJ_{\bar\p}\\\JJ_{\mg}\\\JJ_{\lbd}\end{array} \right]
=\left[ \begin{array}{c} 2\QQ \\\bm{0}\\\bm{0}\end{array} \right].
\end{equation}
Multiplying the first row of blocks by $\Mb$, the second by $-2\CC$, gives
\begin{equation}
\begin{bmatrix}
\Mb            & \bm{0}   & -2\Mb\QQ\Mb^\top  \\
\bm{0}         & -\CCs & -2 \CCs \CCs^\top  \\
-\Mb           & \CCs     & \bm{0}     \\
\end{bmatrix}
\left[\begin{array}{c}\JJ_{\bar\p}\\\JJ_{\mg}\\\JJ_{\lbd}\end{array} \right]
=\left[ \begin{array}{c} 2 \Mb \QQ \\\bm{0}\\\bm{0}\end{array} \right].
\end{equation}
Finally, we may add up all rows to reach the expression
\[
\JJ_{\lbd} = -\left(\MQM + \CCs\CCs^\top\right)^{+} \Mb\QQ.
\]
and, since $\JJ_{\mg} = -2\CCs^\top \JJ_{\lbd}$, then
\[
\JJ_{\mg} = 2\CCs^\top\left(\MQM + \CCs\CCs^\top\right)^{+} \Mb\QQ.
\]
The Jacobians we have been solving for so far are w.r.t.\ $\pr$. We first apply the
chain rule to get the Jacobian w.r.t.\ $\pt_M$, giving
\begin{equation}
\begin{aligned}
\pfrac{\mg}{\pr_M} &= \JJ_{\mg} \Mb^\top\CCs \\
&= 2\CCs^\top\left(\MQM + \CCs\CCs^\top\right)^{+} \MQM \CCs \\
&= 2\CCs^\top\left(\JJ_M + \CCs\CCs^\top\right)^{+} \JJ_M \CCs, \\
\end{aligned}
\end{equation}
where $\JJ_M$ is the block-wise Jacobian of each SparseMAP subproblem.

Now, observe that $\CCs \CCs^\top$ and $\JJ_M$ are orthogonal projection
matrices: the former because $\CCs$ is orthogonal, the latter because
$\QQ \Mb^\top\Mb \QQ = \QQ$, since for each block
\begin{equation}
\begin{aligned}
\QQ_f \Mb_f^\top\Mb_f \QQ_f
&=
\left(\ZZ_f - \frac{\bm{z}_f \bm{z}_f^\top}{t_f}\right)
\Mb_f^\top\Mb_f
\left(\ZZ_f - \frac{\bm{z}_f \bm{z}_f^\top}{t_f}\right) \\
&=
\left(\ZZ_f - \frac{\bm{z}_f \bm{z}_f^\top}{t_f}\right)
\left(\bm{I} - \frac{\bm{1} \bm{z}_f^\top}{t_f}\right) \\
&=
\ZZ_f - \frac{\bm{z}_f \bm{z}_f^\top}{t_f}
-\ZZ_f \frac{\bm{1} \bm{z}_f^\top}{t_f}
+\frac{\bm{z}_f \bm{z}_f^\top}{t_f} \frac{\bm{1} \bm{z}_f^\top}{t_f} \\
&= \ZZ_f - \frac{\bm{z}_f \bm{z}_f^\top}{t_f}
-\frac{\bm{z}_f \bm{z}_f^\top}{t_f}
+\frac{t_f \bm{z}_f \bm{z}_f^\top}{t_f^2} \\&= \QQ_f.\\
\end{aligned}
\end{equation}

Orthogonal projection matrices are projection operators onto affine subspaces.
We next invoke a result about the projection onto an \emph{intersection} of
affine subspaces:
\begin{lemma} \citep{projintersect}
Let $\mathcal{A},\mathcal{B}$ denote the affine spaces such that
$\proj_\mathcal{A}(\bm{x}) = \bm{P}_\mathcal{A}\bm{x}$ and
$\proj_\mathcal{B}(\bm{x}) = \bm{P}_\mathcal{B}\bm{x}$.
Then, the projection onto their intersection has the following expressions:
\begin{align}
\proj_{\mathcal{A}\cap\mathcal{B}} &=
\lim_{n \rightarrow \infty}
\bm{P}_\mathcal{B} (\bm{P}_\mathcal{A} \bm{P}_\mathcal{B})^n,
\label{eq:intersect_power}\\
&= 2 \bm{P}_\mathcal{B} (\bm{P}_\mathcal{A} +  \bm{P}_\mathcal{B})^{+} \bm{P}_\mathcal{A}
\label{eq:intersect_pinv}
\end{align}
\end{lemma}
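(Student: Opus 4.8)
The plan is to reduce the statement to the \emph{simultaneous canonical form} of a pair of orthogonal projectors and then verify both displayed identities block by block. Note first that, since $\proj_\mathcal{A}(\bm{x}) = \bm{P}_\mathcal{A}\bm{x}$ acts linearly, I treat $\mathcal{A}, \mathcal{B}$ as linear subspaces and $\bm{P}_\mathcal{A}, \bm{P}_\mathcal{B}$ as the symmetric idempotents onto them. The engine is the classical fact that any two orthogonal projectors on $\reals^d$ can be \emph{simultaneously block-diagonalized}: there is an orthonormal basis in which $\reals^d = \bigoplus_i V_i$ splits into mutually orthogonal subspaces of dimension one or two, each invariant under both $\bm{P}_\mathcal{A}$ and $\bm{P}_\mathcal{B}$. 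On each genuine two-dimensional block there is a principal angle $\theta \in (0, \tfrac{\pi}{2})$ for which, in suitable coordinates, $\bm{P}_\mathcal{A} = \left[\begin{smallmatrix}1&0\\0&0\end{smallmatrix}\right]$ and $\bm{P}_\mathcal{B} = \left[\begin{smallmatrix}c^2 & cs\\ cs & s^2\end{smallmatrix}\right]$ with $c = \cos\theta,\ s = \sin\theta$, while on the one-dimensional blocks each projector is $0$ or $1$. Because $\proj_{\mathcal{A}\cap\mathcal{B}}$, the limit in \eqnref{eq:intersect_power}, and the pseudoinverse expression in \eqnref{eq:intersect_pinv} are all built from $\bm{P}_\mathcal{A}$ and $\bm{P}_\mathcal{B}$ (and products, limits, and Moore--Penrose inverses all respect this orthogonal block decomposition), it suffices to check each identity on a single block.

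Next I would read off the target. A block contributes to $\mathcal{A}\cap\mathcal{B}$ exactly when both projectors restrict to the identity there; on every other block type ($\bm{P}_\mathcal{A}=\bm{P}_\mathcal{B}=0$, one equal to $\bm{I}$ and the other $0$, or a two-dimensional block with $\theta\in(0,\tfrac\pi2)$) the intersection meets the block only in $\{\bm{0}\}$, so $\proj_{\mathcal{A}\cap\mathcal{B}}$ restricts to $0$. Thus the goal is to show that both the limit and the pseudoinverse expression restrict to $\bm{I}$ on the both-identity blocks and to $0$ on all others.

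For the power formula \eqnref{eq:intersect_power} I would rewrite $\bm{P}_\mathcal{B}(\bm{P}_\mathcal{A}\bm{P}_\mathcal{B})^n = (\bm{P}_\mathcal{B}\bm{P}_\mathcal{A})^n \bm{P}_\mathcal{B}$ and note that on a two-dimensional block $\bm{P}_\mathcal{A}\bm{P}_\mathcal{B}$ has eigenvalues $c^2\in(0,1)$ and $0$, so its powers and hence the whole product decay to $0$; on the degenerate blocks the product is manifestly $\bm{I}$ or $0$. This is precisely von Neumann's alternating-projection theorem applied on each invariant block, so \eqnref{eq:intersect_power} may alternatively be cited outright. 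For the pseudoinverse formula \eqnref{eq:intersect_pinv} (the Anderson--Duffin identity) I would observe that on each block $\bm{P}_\mathcal{A}+\bm{P}_\mathcal{B}$ is either $0$ (the both-zero blocks, which lie in $(\mathcal{A}+\mathcal{B})^\perp$, where the Moore--Penrose inverse is $0$) or invertible, so that $(\bm{P}_\mathcal{A}+\bm{P}_\mathcal{B})^{+}$ acts block-wise as the ordinary inverse on the nonzero blocks; the only genuine computation is the $2\times 2$ one, where $\det(\bm{P}_\mathcal{A}+\bm{P}_\mathcal{B}) = s^2 > 0$ and a direct evaluation gives $\bm{P}_\mathcal{B}(\bm{P}_\mathcal{A}+\bm{P}_\mathcal{B})^{-1}\bm{P}_\mathcal{A}=0$, hence $2\bm{P}_\mathcal{B}(\bm{P}_\mathcal{A}+\bm{P}_\mathcal{B})^{+}\bm{P}_\mathcal{A}=0$, matching $\proj_{\mathcal{A}\cap\mathcal{B}}$. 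Reassembling the blocks then yields both identities.

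The main obstacle is bookkeeping rather than depth: I must justify the simultaneous two-subspace decomposition (via the spectral decomposition of the symmetric PSD matrix $\bm{P}_\mathcal{A}\bm{P}_\mathcal{B}\bm{P}_\mathcal{A}$, equivalently the principal-angle/CS decomposition) and, more delicately, track the pseudoinverse across the blocks on which $\bm{P}_\mathcal{A}+\bm{P}_\mathcal{B}$ is singular. The key point to confirm there is that $\operatorname{range}(\bm{P}_\mathcal{A}+\bm{P}_\mathcal{B}) = \mathcal{A}+\mathcal{B}$ for a sum of projectors, so the Moore--Penrose inverse annihilates exactly $(\mathcal{A}+\mathcal{B})^\perp$ and behaves block-diagonally as claimed. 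With that established, the block-diagonal structure turns every assertion into a finite check and both formulas fall out at once.
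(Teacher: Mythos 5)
The paper offers no proof of this lemma: it is imported verbatim from \citet{projintersect}, and the appendix only verifies that the matrices it will be applied to, $\JJ_M$ and $\CCs\CCs^\top$, are orthogonal projections. Your proposal must therefore stand on its own, and it does — it is correct. Reducing affine to linear subspaces is legitimate (linearity of $\proj_{\mathcal{A}}$ forces $\bm{0}\in\mathcal{A}$ by projecting $\bm{0}$), and the two-projections/CS decomposition into mutually orthogonal invariant blocks of dimension at most two is the right engine; products, limits, and Moore--Penrose inverses all commute with an orthogonal block decomposition (for the pseudoinverse this follows by checking the four Penrose equations blockwise, and your observation that $\operatorname{range}(\bm{P}_\mathcal{A}+\bm{P}_\mathcal{B})=\mathcal{A}+\mathcal{B}$ for PSD summands is standard). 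The block computations check out: on a $2\times 2$ block with principal angle $\theta\in(0,\tfrac{\pi}{2})$, $\bm{P}_\mathcal{A}\bm{P}_\mathcal{B}$ has eigenvalues $\cos^2\theta\in(0,1)$ and $0$, so $\bm{P}_\mathcal{B}(\bm{P}_\mathcal{A}\bm{P}_\mathcal{B})^n\to\bm{0}$, while $\det(\bm{P}_\mathcal{A}+\bm{P}_\mathcal{B})=\sin^2\theta>0$ and a direct product gives $\bm{P}_\mathcal{B}(\bm{P}_\mathcal{A}+\bm{P}_\mathcal{B})^{-1}\bm{P}_\mathcal{A}=\bm{0}$, matching $\proj_{\mathcal{A}\cap\mathcal{B}}=\bm{0}$ on such blocks; the $1\times 1$ blocks yield $1$ exactly when both projectors are $1$ (and the factor of two is right: $2\cdot 1\cdot(1+1)^{+}\cdot 1=1$). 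What your route buys over the paper's bare citation is a self-contained, purely finite-dimensional argument that proves the von Neumann alternating-projection limit and the Anderson--Duffin pseudoinverse identity in one stroke, which is exactly the generality the backward-pass derivation needs; the cited reference establishes these results in general Hilbert space, which is more than the paper uses.
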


Using this lemma, we may apply
Eq.~\ref{eq:intersect_pinv},
to rewrite the Jacobian as
\begin{equation}
\begin{aligned}
\pfrac{\mg}{\pr_M}
&= 2\CCs^\top\left(\JJ_M + \CCs\CCs^\top\right)^{+} \JJ_M \CCs \\
&= \CCs^\top \Big( 2 \CCs \CCs^\top\left(\JJ_M + \CCs\CCs^\top\right)^{+} \JJ_M \Big) \CCs \\
&= \CCs^\top \bm{P}_{\mathcal{A} \cap \mathcal{B}}~\CCs.
\end{aligned}
\end{equation}
where $\bm{P}_\mathcal{A} = \JJ_M$ and $\bm{P}_\mathcal{B}=\CCs\CCs^\top$.
Then, using the power iteration expression
(Eq.~\ref{eq:intersect_power}),
\begin{equation}
\begin{aligned}
\pfrac{\mg}{\pr_M} &= \lim_{n\rightarrow\infty} \CCs^\top \Big(\CCs\CCs^\top (\JJ_M \CCs\CCs^\top)^n\Big)\CCs \\
&= \lim_{n \rightarrow \infty}
\underbrace{\CCs^\top \CCs}_{\bm{I}}\CCs^\top  (\JJ_M \CCs\CCs^\top)^{n-1} \JJ_M
\CCs\underbrace{\CCs^\top\CCs}_{\bm{I}}\\
&= \lim_{n \rightarrow \infty}
(\CCs^\top \JJ_M\CCs)^n \\
\end{aligned}
\end{equation}
Multiplying both sides by $\CCs^\top \JJ_M \CCs$ leaves the r.h.s.\ unchanged, so
\begin{equation}\label{eq:fixed_point_eig_supp}
\CCs^\top \JJ_M \CCs
\pfrac{\mg}{\pr_M} =
\pfrac{\mg}{\pr_M}.
\end{equation}
Finally, we compute the gradient w.r.t.\ $\pr_N$.
Thus we have

\begin{equation}
\begin{aligned}
\pfrac{\mg}{\pr_N} &= \JJ_{\mg} \Mb^\top\CCs \\
&= 2\CCs^\top\left(\JJ_M + \CCs\CCs^\top\right)^{+} \Mb \QQ \bar{\NN}. \\
&= 2\CCs^\top\left(\JJ_M + \CCs\CCs^\top\right)^{+} \Mb \overbrace{\QQ \Mb^\top
\Mb \QQ}^{\QQ} \bar{\NN}. \\
&= \CCs^\top \bm{P}_{\mathcal{A}\cap\mathcal{B}} \Mb \QQ \bar{\NN}. \\
&= \underbrace{\CCs^\top \bm{P}_{\mathcal{A}\cap\mathcal{B}} \CCs}_{\pfrac{\mg}{\pr_M}}\CCs^\top
\underbrace{\Mb \QQ \bar{\NN}}_{\JJ_N},
\end{aligned}
\end{equation}
\end{proof}

If the actual Jacobians are desired, observe that
Eq.~\ref{eq:fixed_point_eig_supp}
says that the columns of $\pfrac{\mg}{\pr_M}$ are eigenvectors of $\CCs^\top
\JJ_M \CCs$ corresponding to eigenvalue 1. We know that the spectrum commutes,
so the spectrum of $\CCs^\top \JJ_M \CCs$ is equal to that of
$\JJ_u \CCs\CCs^\top$, which is a product of two orthogonal projections, thus its eigenvalues
are between $0$ and $1$ \citep{spectrum1,spectrum2}. (This also shows why power
iteration in Eq.~\ref{eq:intersect_power} converges, since all eigenvalues strictly less than $1$ shrink to
$0$.) We may use Arnoldi iteration to obtain the largest eigenvectors of
$\CCs^\top\JJ_M\CCs$.

\section{Specialized algorithms for common factors} \label{supp:specialized}
Like in AD$^3$, any local quadratic subproblem can be solved via the active set
method provided a local
linear oracle (MAP). However, for some special factors, we can derive more
efficient direct algorithms. Many such factors involve logical operations
and constraints which are essential building blocks for expressive inference
problems. We extend the derivations for logic and pairwise factors of
\adq \citep{ad3}, nontrivially, in two ways: first, to accommodate the
\textbf{degree reweighting} needed for \lpsmap{}, and second, to derive \textbf{efficient
expressions for the local backward passes}. Indeed, a useful check is that
our expressions in the case of $\delta_j = 1$ for all $j$ (\ie, when the factor
is alone in the graph) correspond exactly to the non-reweighted QP solutions
derived by \citet{ad3}.

Consider a constraint factor $f$ over $d$ boolean variables. In this case there
are no additional variables, so that the subproblem on line
\ref{line:ad3qp-smap} of Algorithm~\ref{alg:ad3qp} becomes simply:
\begin{equation}\label{eqn:subprob_nov}
\begin{aligned}
\mathrm{minimize}\quad& \hlf \| \widetilde{\pr_f} - \MMs_f \p_f \|^2_2\\
\mathrm{subject\,to}\quad& \p_f \in \simplex_f.
\end{aligned}
\end{equation}
Since it enforces constraints over boolean variables, the allowable set
of assignments (\ie, columns of $\MM_f$) is a subset of $\{0, 1\}^d$.
Therefore, for any $\p_f \in \simplex_f$, we have $\MM_f \p_f \in [0,
1]^d$ as a convex combination of zero-one vectors. Recalling that $\MMs_f
= \bm{D}^{-1}_f \MM_f$ with $\bm{D}_f = \diag(\bm{\delta}_f)$, with
$(\bm{\delta}_f)_i =\sqrt{\deg(i)}$,
we introduce the variable $\mg_f = \MMs_f \p_f.$ We have that $\bm{D}_f \mg_f =
\MM_f \p_f \in [0, 1]^d$.
Since we are focusing on a single factor, we will next drop the subscript $f$.
\textbf{Warning:} this notation should not be confused with the use of $\mg$ in
the context of the full \lpsmap algorithm: consider the remainder of the section
self-contained. Equation~\ref{eqn:subprob_nov} becomes
\begin{equation}\label{eqn:degreeadjqp}
\begin{aligned}
\mathrm{minimize}\quad& \hlf\| \mg - \pr \|^2_2\\
\mathrm{subject\,to}\quad& \bm{D}\mg \in \Mpo \subset [0, 1]^d,
\end{aligned}
\end{equation}
where $\Mpo \coloneqq \left\{ \MM \p \mid \p \in \simplex \right\}$ denotes the
set of local constraints over the binary variables.

For any nonempty convex $\Mpo$, this problem has a unique solution, which we denote by $\mg^\star \eqqcolon
F_\Mpo(\pr)$. We will study several specific cases where we can derive
efficient algorithms for computing $F_\Mpo(\pr)$ and its Jacobian
$\pfrac{F_\Mpo}{\pr}$.

\subsection{Preliminaries}

\subsubsection{Projection onto box constraints}
Consider the projection where there are no additional constraints beyond boolean
variables, \ie $\Mpo = [0, 1]^d$.
The constraint $\bm{D}\mg \in [0, 1]^d$ can be equivalently written
\begin{equation}
\mg \in \mathcal{B} \coloneqq \{
\bm{u} \in \reals^d \mid 0 \leq u_i \leq \delta_i^{-1}\}.
\end{equation}
Consider the more general problem:
\begin{equation}\label{eqn:box}
\begin{aligned}
\mathrm{minimize}\quad& \hlf \| \mg - \pr \|^2_2\\
\mathrm{subject\,to}\quad& \alpha_i \leq \mu_i \leq \beta_i.
\end{aligned}
\end{equation}
Its solution is obtained by
noting that it decomposes into $d$ independent one-dimensional problems
\citep[Section~6.2.4]{parikhboyd}
\begin{equation}
\mu_i^\star
= \clip_{[\alpha_i, \beta_i]}(\eta_i)
= \begin{cases}
\alpha_i, & \eta_i \leq \alpha_i; \\
\eta_i, & \alpha_i < \eta_i < \beta_i; \\
\beta_i, & \eta_i \geq \beta_i.\\
\end{cases}
\end{equation}
The derivative of the solution can be obtained by considering all the cases and
is therefore
\begin{equation}
    \frac{\mathrm{d} \mu^\star_i}{\mathrm{d} \eta_i} =
\begin{cases}
1, & \alpha_i < \mu_i^\star < \beta_i \\
0, & \text{otherwise}.\\
\end{cases}
\end{equation}
The Jacobian of the vector-valued mapping is therefore simply the diagonal
matrix with
$\frac{\mathrm{d} \mu^\star_i}{\mathrm{d} \eta_i}
$ along the diagonal;
\begin{equation}
\pfrac{\mg^\star}{\pr} = \diag(\iv{\alpha_i < \mu_i^\star < \beta_i}).
\end{equation}

\subsubsection{Sifting lemma}
This result allows us to break down an otherwise complicated
inequality-constrained optimization problem into two cases which may be simpler
to solve. This turns out to be the case for many factors over relaxed boolean
variables, since the projection onto the set $\mathcal{B}$ can be done in linear
time.
\begin{lemma}
Consider the constraint convex optimization problem
\begin{equation}\label{eqn:constrainedqp}
\begin{aligned}
\mathrm{minimize}\quad& f(\bm{x}) \\
\mathrm{subject\,to}\quad& \bm{x} \in \mathcal{X} \\
&g(\bm{x}) \leq 0.
\end{aligned}
\end{equation}
where $f, g$ are convex and $\mathcal{X} \subset \reals^d$ is nonempty.
Suppose the problem~\ref{eqn:constrainedqp} is feasible and bounded below.
Consider the set of solutions of the relaxed problem obtained by dropping the
inequality constraint, \ie
$\mathcal{A} = \argmin_{\bm{x} \in \mathcal{X}} f(\bm{x}).$
Then
\begin{enumerate}
\item If some $\tilde{\bm{x}} \in \mathcal{A}$ is feasible for
problem (\ref{eqn:constrainedqp})---\ie, $g(\tilde{\bm{x}}) \leq 0$---then $\tilde{\bm{x}}$
is a solution of problem~(\ref{eqn:constrainedqp}).
\item If for all $\tilde{\bm{x}}\in\mathcal{A}, g(\tilde{\bm{x}})>0$,
then the inequality constraint must be active, \ie,
problem~(\ref{eqn:constrainedqp}) is equivalent to
\begin{equation}
\begin{aligned}
\mathrm{minimize}\quad& f(\bm{x}) \\
\mathrm{subject\,to}\quad& \bm{x} \in \mathcal{X} \\
&g(\bm{x}) = 0.
\end{aligned}
\end{equation}
\end{enumerate}
\end{lemma}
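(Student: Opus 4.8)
The plan is to handle the two cases separately: part~1 is a one-line comparison of feasible sets, while part~2 is a contradiction argument that moves a supposedly interior solution toward an unconstrained minimizer. Throughout I write $\mathcal{C} = \{\bm{x}\in\mathcal{X} : g(\bm{x})\le 0\}$ for the constrained feasible set, $p^\star = \inf_{\bm{x}\in\mathcal{C}} f(\bm{x})$ and $q^\star = \inf_{\bm{x}\in\mathcal{X}} f(\bm{x})$, and I use that $\mathcal{X}$ is convex (implicit in ``convex optimization problem''), so that segments between feasible points stay in $\mathcal{X}$.

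\textbf{Part 1.} Since $\mathcal{C}\subseteq\mathcal{X}$, dropping the inequality can only lower the optimum, so $q^\star \le p^\star$. A feasible $\tilde{\bm{x}}\in\mathcal{A}$ attains $f(\tilde{\bm{x}}) = q^\star \le p^\star$ yet lies in $\mathcal{C}$, whence $f(\tilde{\bm{x}}) \ge p^\star$; the two inequalities force $f(\tilde{\bm{x}}) = p^\star$, so $\tilde{\bm{x}}$ solves problem~(\ref{eqn:constrainedqp}).

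\textbf{Part 2.} Here I would argue by contradiction. Let $\bm{x}^\star$ be any solution and suppose the constraint is slack, $g(\bm{x}^\star) < 0$. The hypothesis that $g(\tilde{\bm{x}}) > 0$ for every $\tilde{\bm{x}}\in\mathcal{A}$ shows $\bm{x}^\star\notin\mathcal{A}$, so $f(\bm{x}^\star) > q^\star$ strictly. Fix any $\tilde{\bm{x}}\in\mathcal{A}$ and set $\bm{x}_\theta = (1-\theta)\bm{x}^\star + \theta\tilde{\bm{x}}$. Convexity of $f$ gives $f(\bm{x}_\theta) \le f(\bm{x}^\star) - \theta\big(f(\bm{x}^\star) - q^\star\big) < f(\bm{x}^\star)$ for every $\theta\in(0,1]$. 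Meanwhile, continuity of the finite convex function $g$ together with $g(\bm{x}^\star) < 0$ keeps $g(\bm{x}_\theta) \le 0$ for all small $\theta$, and convexity of $\mathcal{X}$ keeps $\bm{x}_\theta\in\mathcal{X}$. Thus $\bm{x}_\theta$ is feasible with strictly smaller objective, contradicting optimality of $\bm{x}^\star$. Hence every solution satisfies $g(\bm{x}^\star) = 0$. Since the equality-constrained feasible set contains every solution of the inequality problem yet is itself contained in $\mathcal{C}$, both problems share the value $p^\star$ and the same solution set, which is the asserted equivalence.

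\textbf{Main obstacle.} The delicate point is that part~2 silently assumes $\mathcal{A}\neq\emptyset$ and uses the strictness $q^\star < p^\star$. I would make explicit that an unconstrained minimizer $\tilde{\bm{x}}$ exists---guaranteed in all our applications, where $f$ is the strongly convex squared distance and $\mathcal{X}$ is closed, so $\mathcal{A}$ is a nonempty (in fact singleton) set---because otherwise the segment argument has no endpoint to move toward and the conclusion can fail. The continuity of $g$ invoked to preserve feasibility for small $\theta$ is the standard continuity of a finite-valued convex function on $\reals^d$, but is worth flagging since it is what lets the slack constraint absorb a small perturbation.
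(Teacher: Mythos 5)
Your proof is correct, and it is worth noting that the paper does not actually prove this lemma at all: the in-paper ``proof'' is a pointer to Lemma~17 of the \adq paper \citep{ad3}. Your argument essentially reconstructs that cited proof: part~1 by comparing infima over nested feasible sets, and part~2 by sliding a hypothetical slack solution $\bm{x}^\star$ along the segment toward an unconstrained minimizer $\tilde{\bm{x}}$, using convexity of $f$ for strict descent and continuity of $g$ to preserve feasibility for small steps (convexity of $g$ alone also suffices here, giving the explicit threshold $\theta \le -g(\bm{x}^\star)/\bigl(g(\tilde{\bm{x}})-g(\bm{x}^\star)\bigr)$, which avoids any appeal to continuity). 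Your closing caveat flags a genuine imprecision in the statement rather than a defect of your proof: if $\mathcal{A} = \emptyset$, the hypothesis of part~2 holds vacuously while its conclusion can fail---take $\mathcal{X}=\reals$, $f(x)=e^x$, $g(x)=x-1$, where the infimum under $g(x)\le 0$ is $0$ but under $g(x)=0$ it is $e$---so attainment of the relaxed problem must indeed be assumed, as it is in every factor subproblem in the paper, where $f$ is a strongly convex quadratic over a closed set. One further nit: your equivalence conclusion in part~2 presumes the constrained problem attains its minimum (``let $\bm{x}^\star$ be any solution''); if you wanted equivalence of the two problems' optimal values even without attainment, the same segment argument combined with the intermediate value theorem shows that every feasible point with $g<0$ is dominated by a boundary point with $g=0$, so the infima coincide regardless.
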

For a proof, see \citep[Lemma 17]{ad3}.

\subsubsection{Singly-constrained bounded quadratic programs}\label{supp:scbqp}
Consider the quadratic program
\begin{equation}\label{eqn:prob_knap_tight}
\begin{aligned}
\mathrm{minimize}\quad& \hlf \| \mg - \pr \|^2_2\\
\mathrm{subject\,to}\quad& \alpha_i \leq \mu_i \leq \beta_i\quad\text{for } i \in
[d] \\
& \sum_{j=1}^d w_j \mu_j = B.
\end{aligned}
\end{equation}
Unlike the box constraints above, this problem is rendered more complicated by
the sum constraint which couples all variables together. An efficient algorithm
can be derived due to the following observation.
\begin{proposition}\citep{Pardalos1990}
Let $\mg$ be a feasible point of (\ref{eqn:prob_knap_tight}). Then, $\mg$
is the global minimum if and only if there exists a scalar $\tau \in \reals$
such that, for all $i \in [d]$,
\begin{equation}
\mu_i(\tau) = \clip_{[\alpha_i, \beta_i]}(w_i \tau + \eta_i).
\end{equation}
\end{proposition}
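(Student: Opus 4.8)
The plan is to recognize \eqnref{eqn:prob_knap_tight} as a strictly convex quadratic program with affine constraints, so that the Karush--Kuhn--Tucker (KKT) conditions are both necessary and sufficient for global optimality, and then to show that, coordinate by coordinate, these conditions collapse into the claimed clipping formula while sharing a single multiplier $\tau$. First I would form the Lagrangian, introducing an unconstrained multiplier $\tau$ for the single coupling equality $\sum_j w_j \mu_j = B$ and nonnegative multipliers $\nu_i^-, \nu_i^+ \geq 0$ for the box constraints $\alpha_i - \mu_i \leq 0$ and $\mu_i - \beta_i \leq 0$. Stationarity in $\mu_i$ then reads $\mu_i = \eta_i + w_i\tau + \nu_i^- - \nu_i^+$, while complementary slackness forces $\nu_i^- = 0$ whenever $\mu_i > \alpha_i$ and $\nu_i^+ = 0$ whenever $\mu_i < \beta_i$, and dual feasibility requires $\nu_i^\pm \geq 0$.

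The heart of the argument is a three-way case analysis at each coordinate for a fixed $\tau$, writing $c_i \coloneqq w_i\tau + \eta_i$. If $\alpha_i < \mu_i < \beta_i$, both box multipliers vanish, so stationarity gives $\mu_i = c_i$, which then lies strictly inside $[\alpha_i,\beta_i]$, i.e.\ $\mu_i = \clip_{[\alpha_i,\beta_i]}(c_i)$. If $\mu_i = \alpha_i$, then $\nu_i^+ = 0$ and $\nu_i^- = \alpha_i - c_i$, and dual feasibility $\nu_i^- \geq 0$ is exactly $c_i \leq \alpha_i$, again matching the clip; the symmetric case $\mu_i = \beta_i$ gives $\nu_i^+ = c_i - \beta_i \geq 0$, \ie $c_i \geq \beta_i$. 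Thus, for a given $\tau$, one can choose nonnegative $\nu_i^\pm$ satisfying stationarity and complementarity if and only if $\mu_i = \clip_{[\alpha_i,\beta_i]}(w_i\tau + \eta_i)$; the reverse direction simply reads the required multipliers off the same three cases.

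To finish I would assemble these pieces. Since $\mg$ is assumed feasible (satisfying both the box and the sum constraint), the remaining KKT requirements are stationarity, dual feasibility, and complementarity, and for this convex program with affine constraints their satisfiability is equivalent to global optimality. By the coordinatewise equivalence above, this satisfiability is precisely the existence of a single scalar $\tau$ with $\mu_i = \clip_{[\alpha_i,\beta_i]}(w_i\tau + \eta_i)$ for all $i$, which is the stated characterization. I expect the only point demanding care to be justifying that KKT is \emph{sufficient}, not merely necessary, for the global minimum; this holds because the objective is strictly convex and every constraint is affine, so the affine constraint qualification applies and no Slater condition is needed. The appearance of a single shared $\tau$ is then automatic, since it is exactly the multiplier of the one coupling equality, requiring no additional work to tie the coordinates together.
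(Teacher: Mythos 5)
Your proof is correct, but it takes a genuinely different route from the paper, which never proves this proposition internally: the paper defers entirely to \citet{Pardalos1990}, giving only a footnoted change of variables ($x_i = \nicefrac{(\mu_i - \eta_i)}{w_i}$, $c_i = w_i^2$, etc.) that exhibits \eqnref{eqn:prob_knap_tight} as an instance of problem (2) in that reference, so the characterization is inherited rather than derived. You instead give a self-contained, elementary argument: form the Lagrangian with a free multiplier $\tau$ on the single coupling equality and nonnegative multipliers on the box constraints, then show by a three-way coordinatewise case analysis that stationarity, complementarity, and dual feasibility are jointly satisfiable at a feasible $\mg$ precisely when $\mu_i = \clip_{[\alpha_i,\beta_i]}(w_i\tau + \eta_i)$ for all $i$. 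This is the right mechanics, and it is arguably more informative than the paper's citation: it makes transparent why one scalar parameterizes the entire solution, which is also the structural fact underlying the bisection/selection algorithms mentioned in the paper and the support-restricted Jacobian of Proposition~\ref{prop:scbqp_jacob}. One correction to your closing commentary: you have the roles of the two ingredients swapped. The affine (linearity) constraint qualification is what makes the KKT conditions \emph{necessary} at the global minimum, i.e., it justifies the ``only if'' direction; their \emph{sufficiency} follows from convexity alone and needs no constraint qualification whatsoever. Since the constraints here are affine and the objective is (strictly) convex, both directions do hold, so your proof stands; also note that your case $\mu_i = \alpha_i$ tacitly assumes $\alpha_i < \beta_i$, but the degenerate box $\alpha_i = \beta_i$ satisfies the clipping identity trivially, so nothing is lost.
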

Proof is provided by \citet{Pardalos1990}.%
\footnote{Our formulation recovers
problem (2) of \citet{Pardalos1990} under
the change of variable
$x_i = \frac{\mu_i - \eta_i}{w_i}$
and choice of constants
$c_i = w_i^2,~
d = B - \left(\sum_{j=1}^d w_i \eta_i\right),~
a_i = \frac{\alpha_i - \eta_i}{w_i},~
b_i = \frac{\beta_i - \eta_i}{w_i}.$}
This proposition reduces the optimization
problem to a one-dimensional search, which can be solved iteratively by
bisection, in $\mathcal{O}(d \log d)$ via sorting, or in $\mathcal{O}(d)$ using
selection \citep[as proposed in][]{Pardalos1990}.
Its sparse Jacobian can be computed efficiently, as shown by the following
original result, resembling the result of \citet{entmax}.
\begin{proposition}\label{prop:scbqp_jacob}
Let $G : \reals^d \rightarrow \reals^d$ denote the solution mapping of
problem~\ref{eqn:prob_knap_tight}, \ie, $\mg^\star = G(\pr)$.
Denote the set $\mathcal{I} =
\{ i \in [d] \mid \mg_i^\star \not\in \{\alpha_i, \beta_i\}$. Then,
\begin{enumerate}
\item $(\bm{J})_{ij} = 0$ whenever $i \not\in \mathcal{I}$ or $j \not\in \mathcal{I}$.
\item Denoting $\bar{\bm{J}}_G$ the restriction of the Jacobian to the rows and
columns in $\mathcal{I}$, $\bar{\bm{J}}_G = \bm{I} -
\frac{\bm{ww}^\top}{\bm{w}^\top\bm{w}}$.
\end{enumerate}
Then, $\bm{J}_G \in \pfrac{G}{\pr}$, \ie, it is a generalized Jacobian.
\end{proposition}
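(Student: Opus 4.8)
The plan is to differentiate the closed-form optimality characterization supplied by the preceding proposition. By that result, the solution satisfies $\mu_i^\star = \clip_{[\alpha_i,\beta_i]}(w_i \tau^\star + \eta_i)$ for the unique dual scalar $\tau^\star$ enforcing $\sum_j w_j \mu_j^\star = B$. First I would partition the coordinates into the free set $\mathcal{I}$, on which the clip is inactive so that $\mu_i^\star = w_i\tau^\star + \eta_i \in (\alpha_i,\beta_i)$, and its complement, on which $\mu_i^\star$ is pinned to one of the constants $\alpha_i,\beta_i$. Exactly as in the SparseMAP derivation (\suppref{smap_grad}), for almost all $\pr$ a small perturbation leaves $\mathcal{I}$ unchanged; at the measure-zero set where a coordinate sits exactly on a bound I would treat that coordinate as clamped, which amounts to selecting one element of the Clarke generalized Jacobian \citep{clarke} and yields the stated $\bm{J}_G$.

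Next I would establish part~1. Since a clamped coordinate $i\notin\mathcal{I}$ takes a value independent of $\pr$ throughout the active-set neighborhood, its row of the Jacobian vanishes. For the columns, perturbing a clamped input $\eta_j$ with $j\notin\mathcal{I}$ does not move $\mu_j^\star$ and hence leaves the constraint balance---and therefore $\tau^\star$---unchanged, so the $j$\textsuperscript{th} column is also zero. Together these give $(\bm{J})_{ij}=0$ whenever $i\notin\mathcal{I}$ or $j\notin\mathcal{I}$.

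For part~2 I would differentiate the active form on $\mathcal{I}$. Substituting $\mu_i^\star = w_i\tau^\star + \eta_i$ for $i\in\mathcal{I}$ into the equality constraint and isolating the dual gives $\tau^\star\sum_{k\in\mathcal{I}} w_k^2 = B - \sum_{i\notin\mathcal{I}} w_i\mu_i^\star - \sum_{i\in\mathcal{I}} w_i\eta_i$, and differentiating with respect to $\eta_j$ for $j\in\mathcal{I}$ yields $\pfrac{\tau^\star}{\eta_j} = -w_j / \sum_{k\in\mathcal{I}} w_k^2$. Applying the chain rule to $\mu_i^\star = w_i\tau^\star + \eta_i$ then gives, for $i,j\in\mathcal{I}$,
\[
\pfrac{\mu_i^\star}{\eta_j} = \iv{i=j} - \frac{w_i w_j}{\sum_{k\in\mathcal{I}} w_k^2},
\]
which is precisely $\bm{I} - \bm{ww}^\top/(\bm{w}^\top\bm{w})$ once $\bm{w}$ is read as its restriction to $\mathcal{I}$.

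The main obstacle is not the algebra but the justification that the matrix so obtained is a legitimate element of the generalized Jacobian at the nondifferentiable points. I would handle this as in \suppref{smap_grad}: the configurations where a coordinate transitions in or out of $\mathcal{I}$ form a measure-zero set, and on each open region where $\mathcal{I}$ is locally constant $G$ is genuinely differentiable with the Jacobian computed above. Taking the appropriate one-sided limit---freezing coordinates that lie on a bound---then produces a valid Clarke generalized Jacobian, establishing $\bm{J}_G \in \pfrac{G}{\pr}$.
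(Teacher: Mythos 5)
Your proof is correct and follows essentially the same route as the paper's: freeze the active set $\mathcal{I}$, argue the clamped coordinates contribute zero rows and columns, and differentiate the optimality conditions on the support, handling boundary cases as elements of the Clarke generalized Jacobian. The only cosmetic difference is that you eliminate $\tau^\star$ explicitly from the equality constraint and apply the chain rule, whereas the paper writes the KKT conditions as a bordered linear system $\bigl[\begin{smallmatrix}\bm{I} & \bar{\bm{w}}\\ \bar{\bm{w}}^\top & 0\end{smallmatrix}\bigr]$ and solves it by Gaussian elimination---both yield $\bar{\bm{J}}_G = \bm{I} - \bm{ww}^\top/(\bm{w}^\top\bm{w})$ identically.
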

\begin{proof}
If $\mu_i^\star = \alpha_i$ (respectively $\beta_i$), then decreasing (respectively
increasing) $\eta_i$ by any amount does not change the solution, therefore a
subgradient is zero. It remains to consider the support.
Let $\bar{\mg}, \bar{\pr}, \bar{\bm{w}}$ denote the restrictions of those
vectors to the indices in $\mathcal{I}$. The KKT conditions on the
support form a linear system
\begin{equation}
 \begin{bmatrix}
     \bm{I} & \bar{\bm{w}} \\
     \bar{\bm{w}}^\top & 0
 \end{bmatrix}
 \begin{bmatrix}
    \bar{\mg} \\
    \tau
 \end{bmatrix}
 =
 \begin{bmatrix}
     \bar\pr \\ B
 \end{bmatrix}.
\end{equation}
Differentiating \wrt $\bar\pr$ yields
\begin{equation}
 \begin{bmatrix}
     \bm{I} & \bar{\bm{w}} \\
     \bar{\bm{w}}^\top & 0
 \end{bmatrix}
 \begin{bmatrix}
    \bar{\bm{J}}_G \\
    \bm{J}_\tau
 \end{bmatrix}
 =
 \begin{bmatrix}
\bm{I} \\ \bm{0}
 \end{bmatrix}.
\end{equation}
Gaussian elimination readily gives
\begin{equation}
\bar{\bm{J}}_G = \bm{I} - \frac{\bm{ww}^\top}{\bm{w}^\top\bm{w}}.
\end{equation}
\end{proof}

\subsection{Logic factors}
\subsubsection{XOR factor (exactly one of d)}\label{sec:factor_xor}
The exclusive OR (XOR) factor over $d$ boolean variables only accepts
assignments in which exactly one is turned on.
The accepted bit vectors are thus indicator vectors $\bm{e}_1, \dots, \bm{e}_d$,
so the matrix $\MM = \bm{I}$ and the constraint set is $\Mpo_\text{XOR} = \conv \{
\bm{e}_1, \dots, \bm{e}_d\} = \simplex^d = \{ \mg \in [0, 1]^d \mid
\bm{1}^\top\mg = 1 \}$. Rewriting the constraint $\bm{D}\mg \in
\Mpo_\text{XOR}$ more explicity, the optimization problem becomes
\begin{equation}\label{eqn:factor_xor}
\begin{aligned}
\mathrm{minimize}\quad& \hlf \| \mg - \pr \|^2_2\\
\mathrm{subject\,to}\quad& 0 \leq \mu_i \leq \nicefrac{1}{\delta_i} \quad\text{for } i \in
[d] \\
& \sum_{j=1}^d \delta_j \mu_j = 1.
\end{aligned}
\end{equation}
Therefore, we may invoke the algorithm from \S\ref{supp:scbqp}, with
$\alpha_i=0, \beta_i = \nicefrac{1}{\delta_i}, w_i = \delta_i, B=1$.  Note that
when all $\delta_i = 1$ (\eg, if the XOR factor is the only factor in the factor
graph), this recovers the differentiable sparsemax transform \citep{sparsemax},
commonly used in neural networks as a sparse attention mechanism.

\subsubsection{OR factor (at least one of d)}
A logical OR factor over $d$ boolean variables encodes the constraint that at
least one variable is turned on; in other words, it permits all assignments
\emph{except} the one where all variables are off. Such a factor is useful for
encoding existential constraints.
Its constraint set is
$\Mpo_\text{OR} = \conv \big(\{0, 1\}^d - \{ \bm{0} \} \big) =
\{ \mg \in [0, 1]^d \mid  \bm{1}^\top\mg \geq 1 \}$, leading to
\begin{equation}
\begin{aligned}
\mathrm{minimize}\quad& \hlf \| \mg - \pr \|^2_2\\
\mathrm{subject\,to}\quad& 0 \leq \mu_i \leq \nicefrac{1}{\delta_i} \quad\text{for } i \in
[d] \\
& \sum_{j=1}^d \delta_j \mu_j \geq 1.
\end{aligned}
\end{equation}
Using the sifting lemma with set $\mathcal{X} = \{ \mg \in \reals^d \mid 0 \leq
\mu_i \leq \nicefrac{1}{\delta_i} \}$, we reduce this problem to either a
simple clipping operation or the XOR problem (\ref{eqn:factor_xor}), as shown in
Algorithm~\ref{alg:factor_or}. In practice, since we don't need the full
Jacobian but just access to Jacobian-vector products, we just need to store an
indicator of \emph{which branch was taken} as well as the set of indices
$\mathcal{I} = \{i \mid 0 < \mu^\star_i < \nicefrac{1}{\delta_i} \}$.
\begin{algorithm}
\small\setstretch{1.15}
\caption{OR factor: forward and backward pass.\label{alg:factor_or}}
\begin{algorithmic}[1]
\STATE $\tilde\mu_i = \clip_{[0, \delta_i^{-1}]}(\eta_i)$
\Comment{compute solution candidate}
\IF{$\sum_j \delta_j\tilde\mu_j \geq 1$
\Comment{by the sifting lemma, we found the solution}}
\STATE $\mg^\star \leftarrow \tilde\mg$
\STATE $\bm{J} \leftarrow \diag(\iv{0 < \mu^\star_i <\nicefrac{1}{\delta_i}})$
\ELSE
\STATE $\mg^\star \leftarrow F_\text{XOR}(\pr)$ \Comment{from \S\ref{sec:factor_xor}}
\STATE $\bm{J} \leftarrow \bm{J}_{F_\text{XOR}}$ \Comment{from Proposition~\ref{prop:scbqp_jacob}}
\ENDIF
\STATE return $\mg^\star, \bm{J}$
\end{algorithmic}
\end{algorithm}

\subsubsection{Knapsack factor}
The knapsack constraint factor is parameterized by a non-negative cost assigned
to each variable $\bm{w} \in \reals^d_{+}$, and a budget $B \in \reals$.
Its marginal polytope is
\begin{equation}
\Mpo_{\text{K}(\bm{c}, B)} = \left\{ \mg \in [0, 1]^d \mid \bm{c}^\top\mg \leq B \right\}.
\end{equation}

The degree-adjusted quadratic subproblem required in the \lpsmap algorithm can
be written as
\begin{equation}\label{eq:prob_knap}
\begin{aligned}
\mathrm{minimize}\quad& \hlf\| \mg - \pr \|^2_2\\
\mathrm{subject\,to}\quad& 0 \leq \mu_i \leq \nicefrac{1}{\delta_i} \quad \text{for } i \in
[d] \\
& \sum_{j=1}^d c_j \delta_j \mu_j \leq B
\end{aligned}
\end{equation}
We may solve this problem again using the sifting lemma, noting that, when the
inequality constraint is tight, we may
invoke the algorithm from \S\ref{supp:scbqp}, with
$\alpha_i=0, \beta_i = \nicefrac{1}{\delta_i}, w_i = \delta_i c_i, B=B$.
The procedure is specified in Algorithm~\ref{alg:factor_knap}.
\begin{algorithm}
\small\setstretch{1.15}
\caption{Knapsack factor: forward and backward pass.\label{alg:factor_knap}}
\begin{algorithmic}[1]
\STATE $\tilde\mu_i = \clip_{[0, \delta_i^{-1}]}(\eta_i)$
\Comment{compute solution candidate}
\IF{$\sum_j c_j \delta_j \tilde\mu_j \leq B$
\Comment{by the sifting lemma, we found the solution}}
\STATE $\mg^\star \leftarrow \tilde\mg$
\STATE $\bm{J} \leftarrow \diag(\iv{0 < \mu^\star_i <\nicefrac{1}{\delta_i}})$
\ELSE
\STATE $\mg^\star \leftarrow G(\pr)$ \Comment{from \S\ref{supp:scbqp}}
\STATE $\bm{J} \leftarrow \bm{J}_{G}$ \Comment{from Proposition~\ref{prop:scbqp_jacob}}
\ENDIF
\STATE return $\mg^\star, \bm{J}$
\end{algorithmic}
\end{algorithm}

\subsubsection{Budget and at-most-one factors}
A special case of the Knapsack factor is useful when we have a budget over the
total number of variables that can be switched on at the same time. In other
words, we take the budget $B$ to be the maximum allowed number of variables, and
the cost $c_i = 1$ for all $i$, leading to
\begin{equation}\label{eq:prob_budget}
\begin{aligned}
\mathrm{minimize}\quad& \hlf\| \mg - \pr \|^2_2\\
\mathrm{subject\,to}\quad& 0 \leq \mu_i \leq \nicefrac{1}{\delta_i} \quad \text{for } i \in
[d] \\
& \sum_{j=1}^d \delta_j \mu_j \leq B.
\end{aligned}
\end{equation}
Perhaps the most commonly encountered version is when $B=1$, meaning at most one
variable can be active (but keeping all variables off is also a legal solution.)
\begin{equation}\label{eq:prob_at_most_one}
\begin{aligned}
\mathrm{minimize}\quad& \hlf\| \mg - \pr \|^2_2\\
\mathrm{subject\,to}\quad& 0 \leq \mu_i \leq \nicefrac{1}{\delta_i} \quad \text{for } i \in
[d] \\
& \sum_{j=1}^d \delta_j \mu_j \leq 1.
\end{aligned}
\end{equation}

\subsubsection{Logical negation}\label{supp:negation}
The ability to impose logical constraints on \emph{negated} boolean variables
opens up many new possiblities, through algebraic manipulation, \eg, DeMorgan's
laws. For instance, we may obtain a negated \emph{conjuction} factor, since
\begin{equation}
\mathcal{Y}_\text{NAND} = \{ \bm{m} \in \{0, 1\}^d \mid
\neg(m_1 \wedge \dots \wedge m_d) \} =
\{ \bm{m} \in \{0, 1\} \mid \neg m_1 \vee \dots \vee \neg m_d\},
\end{equation}
and so $(m_1, \dots, m_d) \in \mathcal{Y}_\text{NAND}$ is equivalent to
$(\neg m_1, \dots, \neg m_d) \in \mathcal{Y}_\text{OR}$. Similarly,
implication may be written as
\begin{equation}
\mathcal{Y}_\text{IMPLY} = \{ \bm{m} \in \{0, 1\}^d \mid m_1 \wedge \dots \
m_{d-1} \implies m_d \},
\end{equation}
and computed using negations and the OR factor, because
\begin{equation}
(m_1, \dots, m_d) \in \mathcal{Y}_\text{IMPLY} \quad\text{is equivalent to}
\quad
(\neg m_1, \dots, \neg m_{d-1}, m_d) \in \mathcal{Y}_\text{OR}.
\end{equation}

\begin{proposition}\label{prop:negation}
Denote by $F_\Mpo(\pr)$ the solution of the relaxed boolean QP
in \eqnref{eqn:degreeadjqp}.
Consider the set obtained from $\Mpo$ by negating the interpretation of
the $k$\textsuperscript{th} boolean variable in the constraints, \ie
\begin{equation}\label{eqn:negation}
\bm{\nu} \in \Mpo^{\neg k} \iff (\nu_1, \dots, 1 - \nu_k, \dots,
\nu_d) \in \Mpo
\end{equation}

Define the weight-aware transformation $\operatorname{flip}_k(\bm{x}) =
\left(x_1, x_2, \dots, \frac{1}{\delta_k} - x_k, \dots, x_d\right)$. Then, we have
\begin{equation}F_{\Mpo^{\neg k}}(\pr) =
\operatorname{flip}_k(F_\Mpo(
\operatorname{flip}_k(\pr))).\end{equation}
\end{proposition}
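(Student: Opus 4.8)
The plan is to prove the identity by an involutive change of variables induced by $\operatorname{flip}_k$. First I would record that $\operatorname{flip}_k$ is an involution: the coordinate map $x_k \mapsto \nicefrac{1}{\delta_k} - x_k$ is its own inverse and every other coordinate is left alone, so $\operatorname{flip}_k(\operatorname{flip}_k(\bm{x})) = \bm{x}$. This makes $\operatorname{flip}_k$ an invertible substitution that I can apply both to the optimization variable and to the input $\pr$.

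The first key step is to translate the feasible set. Unfolding the definition of $\Mpo^{\neg k}$ in \eqnref{eqn:negation}, the constraint $\bm{D}\mg \in \Mpo^{\neg k}$ is equivalent to $(\delta_1 \mu_1, \dots, 1 - \delta_k \mu_k, \dots, \delta_d \mu_d) \in \Mpo$. Writing $\mg' \coloneqq \operatorname{flip}_k(\mg)$ and using the degree-aware identity $\delta_k(\nicefrac{1}{\delta_k} - \mu_k) = 1 - \delta_k \mu_k$, I obtain $\bm{D}\mg' = (\delta_1 \mu_1, \dots, 1 - \delta_k \mu_k, \dots, \delta_d \mu_d)$, so the constraint is exactly $\bm{D}\mg' \in \Mpo$. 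Hence $\operatorname{flip}_k$ is a bijection between the feasible set of the $\Mpo^{\neg k}$ problem and that of the $\Mpo$ problem.

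The second key step is to check that the objective is invariant. Only the $k$-th coordinate is affected: with $\mg = \operatorname{flip}_k(\mg')$ we have $(\mu_k - \eta_k)^2 = (\nicefrac{1}{\delta_k} - \mu'_k - \eta_k)^2 = (\mu'_k - (\nicefrac{1}{\delta_k} - \eta_k))^2$, the last expression being precisely the $k$-th squared term measured against $\operatorname{flip}_k(\pr)$; all remaining terms coincide. Therefore $\|\operatorname{flip}_k(\mg') - \pr\|^2 = \|\mg' - \operatorname{flip}_k(\pr)\|^2$, and the problem defining $F_{\Mpo^{\neg k}}(\pr)$ becomes, under $\mg \leftrightarrow \mg'$, exactly the problem defining $F_\Mpo(\operatorname{flip}_k(\pr))$.

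Finally, since each of these relaxed boolean QPs has a unique minimizer (as established above for nonempty convex $\Mpo$), the feasible-set bijection together with the objective equality forces the minimizer of the $\Mpo^{\neg k}$ problem to equal $\operatorname{flip}_k$ applied to the minimizer of the flipped problem, i.e.\ $F_{\Mpo^{\neg k}}(\pr) = \operatorname{flip}_k(F_\Mpo(\operatorname{flip}_k(\pr)))$. I do not expect a genuine obstacle; the only place demanding care is the bookkeeping of the offset $\nicefrac{1}{\delta_k}$, making sure the same weighted flip that transforms the constraint also cancels correctly inside the squared objective, so that the two applications of $\operatorname{flip}_k$ (one on $\pr$, one on the returned solution) are mutually consistent.
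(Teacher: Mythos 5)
Your proof is correct and takes essentially the same route as the paper's: both perform the weighted change of variables $\mg' = \operatorname{flip}_k(\mg)$, observe that it carries the constraint $\bm{D}\mg \in \Mpo^{\neg k}$ to $\bm{D}\mg' \in \Mpo$ and the objective $\hlf\|\mg - \pr\|^2$ to $\hlf\|\mg' - \operatorname{flip}_k(\pr)\|^2$, and then undo the substitution on the (unique) minimizer. Your explicit mention of the involution property and of uniqueness of the QP solution simply spells out steps the paper leaves implicit.
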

\begin{proof}
We are looking for the solution $\bar\mg^\star$ of the ``flipped'' problem
\begin{equation}\label{eqn:logicqpflip}
\begin{aligned}
\mathrm{minimize}\quad& \| \bar\mg - \pr \|^2_2\\
\mathrm{subject\,to}\quad& \bm{D}\bar\mg \in \Mpo^{\neg k}.
\end{aligned}
\end{equation}
Denote $\bar{\bm{\nu}} \coloneqq \bm{D}\bar\mg = (\delta_1 \bar\mu_1, \cdots,
\delta_d
\bar\mu_d)$. Applying \eqnref{eqn:negation} we consider the un-flipped variable
\begin{equation}\label{eqn:chgvar}
\bm{\nu} \coloneqq (\delta_1 \bar\mu_1, \cdots, 1 - \delta_k \bar\mu_k, \cdots,
\delta_d \bar\mu_d) \in \Mpo.
\end{equation}
To go back to the form of \eqref{eqn:degreeadjqp}, we make the change of variable
into $\mg$ such that
$\bm{D}\mg = \bm{\nu}$, \ie
\begin{equation*}
\bar\mg \coloneqq \left(\bar\mu_1, \cdots, \frac{1}{\delta_k} - \bar\mu_k, \cdots,
\bar\mu_d\right) = \operatorname{flip}_k(\mg).
\end{equation*}
The objective value after this change of variable becomes
\begin{equation}
\begin{aligned}
\sum_{j} (\bar\mu_j - \eta_j)^2 &=
\sum_{j\neq k} (\mu_j - \eta_j)^2 + \left( \frac{1}{\delta_k} - \mu_k -
\eta_k \right)^2 \\
&= \sum_{j\neq k} (\mu_j - \eta_j)^2 + \left(\mu_k - \left(\frac{1}{\delta_k} -
\eta_k\right)\right)^2 \\
\end{aligned}
\end{equation}
Under the constraints $\bm{D}\mg \in \Mpo$, this is an instance of
\eqref{eqn:degreeadjqp} with modified potentials $\bar{\pr} =
\operatorname{flip}_k(\pr)$,
thus its minimizer is $\mg^\star = F_{\Mpo}(\operatorname{flip}_k(\pr))$.
Undoing the change of variable from \eqnref{eqn:chgvar} yields $\bar\mg^\star =
\operatorname{flip}_k\big(F_{\Mpo}(\operatorname{flip}_k(\pr))\big)$.
\end{proof}
%\begin{corollary}
%\end{corollary}
\begin{corollary}
The Jacobian of $F_{\Mpo^{\neg k}}$ can be obtained from the Jacobian of
$F_{\Mpo}$ by flipping the sign of the $k$\textsuperscript{th} row and
column, \ie,
\begin{equation}
\pfrac{F_{\Mpo^{\neg k}}}{\pr} =
\bm{L}_k
\pfrac{F_{\Mpo}}{\bar\pr}
\bm{L}_k
\quad\text{where}\quad
\bm{L}_k = \diag(1, \dots, \underbrace{-1}_{k}, \dots, 1).
\end{equation}
\end{corollary}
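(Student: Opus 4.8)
The plan is to differentiate the identity established in Proposition~\ref{prop:negation}, namely $F_{\Mpo^{\neg k}}(\pr) = \operatorname{flip}_k\big(F_\Mpo(\operatorname{flip}_k(\pr))\big)$, by a direct application of the chain rule. The key observation is that $\operatorname{flip}_k$ is an \emph{affine} map: its $k$\textsuperscript{th} output coordinate is $\nicefrac{1}{\delta_k} - x_k$ while every other coordinate is copied unchanged. Under differentiation the additive constant $\nicefrac{1}{\delta_k}$ vanishes, so the Jacobian of $\operatorname{flip}_k$ is the constant matrix $\bm{L}_k = \diag(1,\dots,-1,\dots,1)$, the same at every evaluation point. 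Note also that $\bm{L}_k$ is its own inverse, so $\operatorname{flip}_k$ is an affine bijection.

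First I would write $F_{\Mpo^{\neg k}}$ as the three-fold composition $\operatorname{flip}_k \circ F_\Mpo \circ \operatorname{flip}_k$ and apply the chain rule factor by factor. The outermost $\operatorname{flip}_k$ contributes $\bm{L}_k$ (evaluated at $F_\Mpo(\operatorname{flip}_k(\pr))$, but its Jacobian is constant); the innermost $\operatorname{flip}_k$ contributes $\bm{L}_k$; and the middle map $F_\Mpo$ contributes its Jacobian evaluated at the flipped input $\bar\pr = \operatorname{flip}_k(\pr)$. Multiplying the three factors in order yields exactly $\bm{L}_k\,\pfrac{F_\Mpo}{\bar\pr}\,\bm{L}_k$, the claimed expression. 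Because $\bm{L}_k$ is diagonal with a single $-1$ in position $k$, left-multiplication flips the sign of the $k$\textsuperscript{th} row and right-multiplication flips the sign of the $k$\textsuperscript{th} column, matching the verbal description in the statement.

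The only point requiring care is that $F_\Mpo$ is not everywhere differentiable: as discussed in \suppref{smap_grad}, the active set may change on a measure-zero set of tie-breaking inputs, and elsewhere one works with a generalized Jacobian in the sense of \citet{clarke}. This is harmless here, since we are composing $F_\Mpo$ only with the fixed smooth affine bijection $\operatorname{flip}_k$; composition with an affine map of constant Jacobian commutes cleanly with the (generalized) chain rule and does not introduce new critical points, as $\operatorname{flip}_k$ merely reparametrizes the input and reflects the output. I would therefore remark that the identity holds at every point where $\pfrac{F_\Mpo}{\bar\pr}$ is interpreted as the (generalized) Jacobian at $\bar\pr$, with no additional case analysis. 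The algebra itself is immediate once the affine-Jacobian observation is in place, so this subtlety about non-smoothness is the main—indeed essentially the only—obstacle worth addressing.
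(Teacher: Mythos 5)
Your proof is correct and follows exactly the route the paper intends: the corollary is stated without explicit proof precisely because it is the immediate chain-rule consequence of Proposition~\ref{prop:negation}, using the fact that $\operatorname{flip}_k$ is affine with constant self-inverse Jacobian $\bm{L}_k$. Your additional remark on generalized Jacobians is a sound (and welcome) clarification consistent with the paper's treatment in \suppref{smap_grad}, not a deviation from its approach.
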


\subsubsection{OR-with-output factor}
This factor lays the foundation for deterministically defining new binary
variables in a factor graph as a logical function of other variables.
The set of boolean vectors valid according to the OR-with-output factor is
\begin{equation}
    \mathcal{Y}_\text{ORout} = \{ \bm{m} \in \{0, 1\}^d \mid m_d = m_1 \vee m_2 \vee \dots
\vee m_{d-1} \}.
\end{equation}
Its convex hull $\Mpo_\text{ORout} = \conv \mathcal{Y}_\text{ORout}$ can be
shown to be \citep{ad3}
\begin{equation}
\Mpo_\text{ORout} = \left\{ \mg \in [0, 1]^d~\middle|~\sum_{j=1}^{d-1} \mu_j \geq
\mu_d, ~\mu_i \leq \mu_d \text{ for all } i \in [d-1] \right\}.
\end{equation}
This leads to the degree-adjusted QP
\begin{equation}\label{eq:prob_orout}
\begin{aligned}
\mathrm{minimize}\quad& \hlf\| \mg - \pr \|^2_2\\
\mathrm{subject\,to}\quad& 0 \leq \mu_i \leq \nicefrac{1}{\delta_i} \quad \text{for } i \in
[d] \\
& \delta_i \mu_i \leq \delta_d \mu_d \quad \text{for } i \in [d-1], \\
& \sum_{j=1}^{d-1} \delta_j \mu_j \leq \delta_d \mu_d.\
\end{aligned}
\end{equation}
We follow \citet{ad3} and write this as the projection onto the set $\mathcal{A}
= \mathcal{U} \cap \mathcal{A}_2 \cap \mathcal{A}_3$, where the individual sets
are slightly different because of the degree correction:
\begin{align}
\mathcal{U} \coloneqq& [0, \nicefrac{1}{\delta_i}] \times \dots \times [0,
\nicefrac{1}{\delta_d}]\label{eqn:constr_u} \\
\mathcal{A}_1 \coloneqq& \{ \mg \in \reals^d \mid \delta_i \mu_i \leq \delta_d
\mu_d \text{ for } i \in [d-1] \}\label{eqn:constr_a1} \\
\mathcal{A}_2 \coloneqq& \left\{ \mg \in \reals^d~\middle|~\sum_{j=1}^{d-1}
\delta_j \mu_j \leq \delta_d \mu_d \right\}
\end{align}

We may apply the sifting lemma iteratively as such:
\begin{enumerate}
\item Set $\tilde\mg = F_\mathcal{U}(\pr)$. If $\tilde\mg \in \mathcal{A}_1 \cap
\mathcal{A}_2$, then $\mg^\star = \tilde\mg$. Else, if $\tilde\mg \not\in
\mathcal{A}_1$, go to step 2, else (if $\tilde\mg \not\in \mathcal{A}_2$)
go to step 3.
\item Compute $\tilde\mg = F_{\mathcal{U} \cap \mathcal{A}_1}.$ If $\tilde\mg
\in \mathcal{A}_2$, then $\mg^\star = \tilde\mg$, else, go to step 3.
\item  From the sifting lemma, the equality constraint in $\mathcal{A}_2$ must
be tight, so we must solve
\begin{equation}\label{eqn:prob_orout_tight}
\begin{aligned}
\mathrm{minimize}\quad& \hlf\| \mg - \pr \|^2_2\\
\mathrm{subject\,to}\quad& 0 \leq \mu_i \leq \nicefrac{1}{\delta_i} \quad \text{for } i \in
[d] \\
& \delta_i \mu_i \leq \delta_d \mu_d \quad \text{for } i \in [d-1], \\
& \sum_{j=1}^{d-1} \delta_j \mu_j~\textcolor{purple}{\bm{=}}~\delta_d \mu_d.\
\end{aligned}
\end{equation}
\end{enumerate}

Let's start by tackling problem~(\ref{eqn:prob_orout_tight}).
Since the sum inequality is tight, every elementwise inequality becomes
\begin{equation}
\delta_i \mu_i \leq \sum_{j-1}^{d - 1} \delta_j \mu_j \iff
0 \leq \sum_{j \in [d-1] - \{i\}} \delta_j \mu_j
\end{equation}
which is trivially true (since $\delta_j \geq 0$ and $\mu_j \geq 0$) and so
the inequalities in $\mathcal{A}_1$ are redundant.
Next, notice that
\begin{equation}
\sum_{j-1}^{d-1} \delta_j \mu_j = \delta_d \mu_d \iff
\sum_{j-1}^{d-1} \delta_j \mu_j + (1 - \delta_d \mu_d) = 1.
\end{equation}
Therefore, direct application of Proposition~\ref{prop:negation} shows that the
remaining problem,
\begin{equation}
\begin{aligned}
\mathrm{minimize}\quad& \hlf\| \mg - \pr \|^2_2\\
\mathrm{subject\,to}\quad& 0 \leq \mu_i \leq \nicefrac{1}{\delta_i} \quad \text{for } i \in
[d] \\
& \sum_{j=1}^{d-1} \delta_j \mu_j=\delta_d \mu_d,
\end{aligned}
\end{equation}
is equivalent to the XOR problem (\S\ref{sec:factor_xor}) with the last variable
negated.

It remains to show how to project onto the intersection $\mathcal{U} \cap
\mathcal{A}_1$. To this end, we prove the following slight generalization of
\citet[Proposition 19]{ad3}. Furthermore, we provide a more detailed derivation
of the resulting algorithm.
\newcommand{\srt}[1]{{\sigma[#1]}}%
\begin{proposition}
Let $\mathcal{A}_1$ be defined as in
\eqnref{eqn:constr_a1}.
Denote by $\srt{\cdot}$ the permutation that sorts the sequence
$\delta_\srt{j} \mu_\srt{j}$ decreasingly, \ie
\begin{equation}\label{eqn:cone_proj_permutation}
\delta_\srt{1}\eta_\srt{1} \geq \delta_\srt{2}\eta_\srt{2} \geq \dots
\geq \delta_\srt{d-1} \eta_\srt{d-1}.
\end{equation}
For any $\rho \in [d-1]$, define
\begin{align}
\mathcal{S}(\rho) &\coloneqq \{ \srt{1}, \dots, \srt{\rho}\} \cup \{ d \} \\
\tau(\rho) &\coloneqq
\frac{\sum_{j \in \mathcal{S}(\rho)} \nicefrac{\eta_j}{\delta_j}}%
{\sum_{j \in \mathcal{S}(\rho)} \nicefrac{1}{\delta_j^2}}
\end{align}
Let $\bar\rho$ be the smallest $\rho<d-1$ satisfying $\tau(\rho) \geq
\delta_\srt{\rho+1} \eta_\srt{\rho+1}$, or $\rho=d-1$ if none exists.
Then, $F_{\mathcal{A}_1}(\pr)$ is
\begin{equation}
\mu^\star_i = \begin{cases}
\frac{\tau(\bar\rho)}{\delta_i},&i \in \mathcal{S}(\bar\rho); \\
\eta_i,&i \not\in \mathcal{S}(\bar\rho).
\end{cases}
\end{equation}
\end{proposition}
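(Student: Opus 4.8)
The plan is to characterize the unique minimizer of the strictly convex projection onto $\mathcal{A}_1$ through its KKT system, and then to show that the stated sort-and-threshold procedure recovers exactly the active set. First I would attach a multiplier $\lambda_i \geq 0$ to each constraint $\delta_i \mu_i \leq \delta_d \mu_d$, $i \in [d-1]$, and form the Lagrangian $\hlf\sum_j (\mu_j - \eta_j)^2 + \sum_{i=1}^{d-1}\lambda_i(\delta_i\mu_i - \delta_d\mu_d)$. Stationarity gives $\mu_i = \eta_i - \lambda_i \delta_i$ for $i \in [d-1]$ and $\mu_d = \eta_d + \delta_d \sum_{i} \lambda_i$. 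Writing $\mathcal{S}$ for the active set together with $d$, every active coordinate satisfies $\delta_i \mu_i = \delta_d \mu_d =: \tau$, so $\mu_i = \tau/\delta_i$, while inactive coordinates keep $\mu_i = \eta_i$.

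The key structural observation is that complementary slackness becomes a threshold on the quantities $v_j := \delta_j \eta_j$. Indeed, for an active $i$ one has $\lambda_i = \eta_i/\delta_i - \tau/\delta_i^2$, so dual feasibility $\lambda_i \geq 0$ is equivalent to $v_i \geq \tau$; for an inactive $i$, primal feasibility $\delta_i \eta_i \leq \delta_d \mu_d$ reads $v_i \leq \tau$. Hence the active set is exactly a superlevel set $\{i : v_i > \tau\}$ of $v$, which justifies sorting the $v_j$ decreasingly and taking the active set to be a prefix $\mathcal{S}(\rho)$. Substituting $\mu_i = \tau/\delta_i$ and $\delta_d\mu_d = \tau$ into the stationarity equation for $\mu_d$ leaves a single scalar equation in $\tau$, whose solution is $\tau(\rho) = \bigl(\sum_{j\in\mathcal{S}(\rho)} \eta_j/\delta_j\bigr)/\bigl(\sum_{j\in\mathcal{S}(\rho)} 1/\delta_j^2\bigr)$; using $\eta_j/\delta_j = v_j/\delta_j^2$ I would rewrite this as the $1/\delta_j^2$-weighted mean of $\{v_j : j \in \mathcal{S}(\rho)\}$.

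With $\tau(\rho)$ recognized as a weighted mean, correctness of the stopping rule follows from a one-line monotonicity computation: appending the next index $\srt{\rho+1}$ changes the mean by a positive multiple of $v_{\srt{\rho+1}} - \tau(\rho)$, so $\tau$ increases exactly while the incoming value exceeds the current mean. I would then verify that $\bar\rho$, the smallest $\rho$ with $\tau(\rho) \geq v_{\srt{\rho+1}}$, produces a consistent active set, i.e. satisfies both threshold inequalities. The inequality $v_{\srt{\rho+1}} \leq \tau(\rho)$ holds by definition; for the other direction, minimality means the rule failed at $\rho-1$, giving $\tau(\rho-1) < v_{\srt{\rho}}$, and the monotonicity identity then places $\tau(\rho)$ strictly between $\tau(\rho-1)$ and $v_{\srt{\rho}}$, so $\tau(\rho) \leq v_{\srt{\rho}}$ as required ($\bar\rho = d-1$ being the fallback where no crossover occurs). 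Feeding $\mathcal{S}(\bar\rho)$ and $\tau(\bar\rho)$ back into the stationarity relations recovers the claimed $\mu^\star$, and since the objective is strictly convex this KKT point is the unique minimizer.

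I expect the main obstacle to be exactly this threshold-finding step: the argument that minimality of $\bar\rho$ in the single stated inequality automatically forces the complementary inequality $\tau(\bar\rho) \leq v_{\srt{\bar\rho}}$, together with the boundary cases of an empty active set (where $\tau$ reduces to $v_d$, so $\mu_d = \eta_d$) and a fully active set at $\rho = d-1$. Care is also needed to confirm that the recovered point satisfies the dropped primal constraints and that no sign is lost in the degree reweighting, which is the genuinely new ingredient relative to the unweighted derivation in \citet{ad3}.
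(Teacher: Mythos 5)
Your proof is correct, and it takes a genuinely different route from the paper's. The paper never introduces dual variables: it fixes $\tau = \delta_d\mu_d$, observes that each inner minimization over $\mu_j$ subject to $\delta_j\mu_j \leq \tau$ is solved by clipping, and thereby collapses the problem to minimizing the one-dimensional sum of squares $F(\tau;\pr) = \sum_{j\in\mathcal{S}}\hlf(\tau/\delta_j - \eta_j)^2$ over the nested, sorted candidate supports; the same weighted-mean formula for $\tau(\rho)$ falls out of setting $F'(\tau)=0$. Where the two arguments genuinely part ways is the selection of $\bar\rho$: the paper only checks primal feasibility of each candidate ($\tau(\rho) \geq \delta_{\sigma[\rho+1]}\eta_{\sigma[\rho+1]}$) and then argues that among feasible candidates the objective value is nondecreasing in $\rho$ (dropping squared terms plus optimality of $\tau(\rho)$), so the smallest feasible $\rho$ wins. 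You instead verify the full KKT system at $\mathcal{S}(\bar\rho)$, and your weighted-mean monotonicity lemma --- $\tau(\rho+1)$ is a strict convex combination of $\tau(\rho)$ and $v_{\sigma[\rho+1]}$ --- is exactly what converts minimality of $\bar\rho$ into the dual feasibility $\tau(\bar\rho) \leq v_{\sigma[\bar\rho]}$, a condition the paper's objective-comparison argument never needs to state. Your route buys explicit multipliers and a self-contained optimality certificate in the style of sparsemax/simplex-projection proofs (and the multipliers are the natural objects behind the Jacobian computed right after this proposition); the paper's route buys elementarity, avoiding duality altogether. One caveat applies to both: the candidate $\rho = 0$ with $\mathcal{S}(0) = \{d\}$ and $\tau(0) = \delta_d\eta_d$ must be admitted, despite the proposition restricting $\rho \in [d-1]$ --- otherwise the result is simply false when $\pr$ is already feasible (e.g.\ $d=2$, $\eta_1 = 0$, $\eta_2 = 1$, $\delta_1=\delta_2=1$, where forcing $\bar\rho = 1$ would output $(\hlf,\hlf)$ instead of $(0,1)$) --- and your chaining argument needs $\tau(\bar\rho-1)$ to exist when $\bar\rho = 1$. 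Your ``empty active set'' boundary case covers precisely this, and the paper's own proof likewise enumerates supports starting from $\mathcal{S}(0)$, so the sloppiness lies in the proposition statement, not in either argument.
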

\begin{proof}
The problem we are trying to solve is
\begin{equation}\label{eqn:cone_proj}
\begin{aligned}
\mathrm{minimize}\quad& \hlf\| \mg - \pr \|^2_2\\
\mathrm{subject\,to}\quad& \delta_i\mmg_i \leq \delta_d \mmg_d\quad\text{for } i \in [d-1].
\end{aligned}
\end{equation}
The objective fully decomposes into $d$ subproblems, but they are all coupled
with the last variable $\mu_d$ through the constraints, so we can write the
problem equivalently as
\begin{equation}
\argmin_{\mmg_d \in \RR}\left[
\hlf (\mmg_d - \ppr_d)^2
+ \sum_{j=1}^{d-1} \min_{\delta_j \mmg_j \leq
\delta_d \mmg_d}
\hlf (\mmg_j - \ppr_j)^2
\right],
\end{equation}
or, after making the change of variable
$\tau \coloneqq \delta_d \mmg_d$,
\ie, $\mmg_d = \frac{\tau}{\delta_d}$,
\begin{equation}
\argmin_{\tau \in \RR}\left[
\hlf \left(\frac{\tau}{\delta_d} - \ppr_d\right)^2
+ \sum_{j=1}^{d-1} \min_{\delta_j \mmg_j \leq
\tau}
\hlf (\mmg_j - \ppr_j)^2
\right].
\end{equation}
Consider one of the nested minimizations,
\begin{equation}
\min_{\delta_j \mmg_j \leq \tau}
\hlf(\mmg_j - \ppr_j)^2.
\end{equation}
Ignoring the constraints for a moment, the solution would be $\mmg_j^\star =
\ppr_j$ with an objective value of $0$. If this solution is infeasible, the
constraint must be tight, leading to the two cases:
\begin{equation}\label{eqn:cone_proj_sol_wrt_tau}
\mmg_j^\star = \begin{cases}
    \ppr_j, & \text{if}~
\delta_j \ppr_j \leq \tau, \\
\frac{\tau}{\delta_j}, & \text{otherwise}.\\
\end{cases}
\end{equation}
The contribution of the $j$\textsuperscript{th} term to the objective value is
\begin{equation}
    \hlf(\mmg_j^\star - \eta_j)^2 =
    \begin{cases}
    0, & \text{if}~\delta_j \eta_j \leq \tau, \\
    \hlf \left(\frac{\tau}{\delta_j} - \ppr_j\right)^2,
    & \text{otherwise}.\\
    \end{cases}
\end{equation}
Assume for now that we know upfront the support
$\mathcal{S}^\star \coloneqq \{j : \delta_j \ppr_j > \tau\}
\cup \{ d \}$.
The optimum objective value is
\begin{equation}\label{eqn:cone_proj_objective}
    F(\tau;\pr) = \hlf\left(\frac{\tau}{\delta_d} - \ppr_d\right)^2 +
\sum_{j : \delta_j \ppr_j > \tau} \hlf \left(\frac{\tau}{\delta_j}
- \ppr_j\right)^2
= \sum_{j \in \mathcal{S}^\star}
\hlf \left(\frac{\tau}{\delta_j} - \ppr_j\right)^2,
\end{equation}
so we can solve for $\tau^\star$ given $\mathcal{S}^\star$ by setting the
gradient to zero:
\begin{equation}
    0 \setto F(\tau;\pr) = \sum_{j \in \mathcal{S}^\star}
    \frac{1}{\delta_j} \left( \frac{\tau}{\delta_j} - \eta_j \right)
\end{equation}
which leads to the expression
\begin{equation}\label{eqn:cone_proj_tau_wrt_s}
    \tau^\star =
\left(\sum_{j \in \mathcal{S}^\star} \frac{1}{\delta_j^2}\right)^{-1}
\left(\sum_{j \in \mathcal{S}^\star} \frac{\eta_j}{\delta_j}\right).
\end{equation}
The entire solution $\mg^\star$ minimizing \eqnref{eqn:cone_proj}
is therefore uniquely determined by its $\mathcal{S}^\star$, since the
support lets us identify $\tau^\star$ (\eqnref{eqn:cone_proj_tau_wrt_s})
and the remaining variables are a function of $\tau^\star$
(Equation~\ref{eqn:cone_proj_sol_wrt_tau}).
At a glance, there appear to be exponentially many choices for $\mathcal{S}$.
We next prove a few results that, taken together, simplify this search to a
linear sweep over a sorted set, corresponding to the procedure described in the
proposition.

\paragraph{The possible supports are ordered.} Pick $i, j \in [d-1]$
such that $\delta_i\eta_i \leq \delta_j \eta_j$. If $i \in \mathcal{S}^\star$,
we have $\tau < \delta_i \eta_i \leq \delta_j \eta_j$, therefore
$j\in\mathcal{S}^\star$ as well. Consequently, defining $\sigma$ as in
Equation~\ref{eqn:cone_proj_permutation}, the possible supports are:
\begin{equation}
\mathcal{S}(0) = \{ d \}; \quad \mathcal{S}(1) = \{\sigma[1], d \};
\quad\dots;\quad \mathcal{S}(d-1) = \{ \sigma[1], \sigma[2], \dots, \sigma[d-1], d \} = [d].
\end{equation}

\paragraph{Not all of the $d$ sets above are feasible.} For each $\rho \in \{0,
\dots, d-1 \}$, Equation~\ref{eqn:cone_proj_tau_wrt_s} yields the $\tau(\rho)$ that would be
obtained if $\mathcal{S}(\rho)$ were the true support. But if
$\mathcal{S}(\rho)$ is the true support $\mathcal{S}^\star$, then by definition
$\tau \geq \delta_j \eta_j$ for any $j \not\in \mathcal{S}(\rho)$. If $\rho =
d-1$, $\mathcal{S}(\rho-1)=[d]$ so this is vacuously true. For $\rho < d - 1$ we have to check
that $\tau(\rho) \geq \delta_j \eta_j$ for $j \in \mathcal{S}^C(\rho) = \{\sigma[\rho+1], \dots,
\sigma[d-1] \}$. This is equivalent to checking $\tau(\rho) > \max_{j \in
\mathcal{S}^C(\rho)} \delta_j \eta_j = \delta_{\sigma[\rho+1]} \eta_{\sigma[\rho+1]}$.

\paragraph{Smaller $\mathcal{S}$ are better.} Inspecting the objective value in
Equation~\ref{eqn:cone_proj_objective}, for any $\rho < \rho'$, the difference
$F(\tau(\rho'); \pr) - F(\tau(\rho); \pr) \geq 0$ as a sum of squares.
Therefore, a smaller $\rho$ is always as least as good in terms of objective
value, so the smallest feasible $\rho$ must be optimal, concluding the proof.
\end{proof}

It remains to show that incorporating the box constraints $\mathcal{U}$ can be
done through simple composition. To this end, we will first prove two
observations about the invariance of projections onto $\mathcal{A}_1$.
\begin{corollary}\label{coro:const} Let $\tilde\eta_j \coloneqq \eta_j +
    \frac{c}{\delta_j}$ for a constant $c \in \RR$. We have
    $\tilde\mu^\star_j = \mu^\star_j + \frac{c}{\delta_j}$,
    $\tilde\tau^\star = \tau^\star + c$, and $\tilde{\mathcal{S}}^\star =
    \mathcal{S}^\star$.
\end{corollary}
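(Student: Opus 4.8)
The plan is to observe that the reparametrization $\eta_j \mapsto \tilde\eta_j = \eta_j + c/\delta_j$ acts as a rigid, uniform shift on every quantity that governs the combinatorial structure of the minimizer in the preceding proposition, so that the support $\mathcal{S}^\star$ and the selected index $\bar\rho$ are untouched while the scalar $\tau^\star$ and each coordinate $\mu_j^\star$ simply translate. Concretely, I would push the shift through the three ingredients of the proposition's solution formula, in order: the sorting permutation $\sigma$, the candidate thresholds $\tau(\rho)$, and the feasibility test that picks $\bar\rho$.

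First, the permutation $\sigma$ sorts the keys $\delta_j \eta_j$ decreasingly; under the shift each key becomes $\delta_j \tilde\eta_j = \delta_j \eta_j + c$, a translation by the \emph{same} constant, which preserves relative order, so $\tilde\sigma = \sigma$ and the candidate supports $\mathcal{S}(\rho)$ coincide for the two problems. Second, substituting $\tilde\eta_j/\delta_j = \eta_j/\delta_j + c/\delta_j^2$ into the definition of $\tau(\rho)$ and factoring out the common denominator $\sum_{j \in \mathcal{S}(\rho)} 1/\delta_j^2$ gives $\tilde\tau(\rho) = \tau(\rho) + c$ for every $\rho$. Third, the selection test $\tau(\rho) \geq \delta_{\sigma[\rho+1]}\eta_{\sigma[\rho+1]}$ has, after the shift, both sides raised by exactly $c$ (the left by the previous step, the right because the key translates by $c$), so the inequality holds for precisely the same set of indices $\rho$; hence the smallest satisfying $\rho$ is unchanged and the ``none exists'' fallback $\rho = d-1$ coincides, yielding $\tilde{\bar\rho} = \bar\rho$, therefore $\tilde{\mathcal{S}}^\star = \mathcal{S}^\star$ and $\tilde\tau^\star = \tau(\bar\rho) + c = \tau^\star + c$. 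The coordinatewise claim then follows by reading off the two-branch formula: on the support $\tilde\mu_j^\star = \tilde\tau^\star/\delta_j = (\tau^\star + c)/\delta_j = \mu_j^\star + c/\delta_j$, and off the support $\tilde\mu_j^\star = \tilde\eta_j = \eta_j + c/\delta_j = \mu_j^\star + c/\delta_j$, so both branches agree.

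There is no deep obstacle here: the statement is essentially a bookkeeping verification that the shift commutes with every step of the proposition. The one point requiring care is the treatment of ties among the sorting keys, since $\sigma$ is only determined up to permutations of equal $\delta_j \eta_j$. I would note that a uniform shift preserves all pairwise key differences exactly, so any fixed tie-breaking convention produces the same $\sigma$; moreover $\tau(\rho)$, $\bar\rho$, and $\mg^\star$ depend only on the sets $\mathcal{S}(\rho)$ and not on the chosen ordering within a tie, so the conclusion is unaffected. I would also confirm the boundary index $\rho = d-1$, where the feasibility test is vacuously true for both problems, so that case is handled identically.
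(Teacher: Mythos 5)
Your proposal is correct and follows essentially the same route as the paper's proof: showing the sorting permutation is preserved under the uniform shift of the keys $\delta_j\eta_j$, computing $\tilde\tau(\rho) = \tau(\rho) + c$ by direct substitution, and observing that both sides of the feasibility test shift by exactly $c$, so the optimal $\bar\rho$ (hence the support) is unchanged and the coordinatewise claims follow from the solution formula. Your additional remarks on tie-breaking and the vacuous boundary case $\rho = d-1$ are careful touches the paper leaves implicit, but they do not constitute a different argument.
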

\begin{proof} For $i, j$, if $\delta_i \eta_i \geq \delta_j \eta_j$, then
    $\delta_i \tilde\eta_i \geq \delta_j \eta_j$, so the permutation $\sigma$
    remains the same. We have
    \begin{equation}
        \tilde\tau(\rho) =
\left(\sum_{j \in \mathcal{S}^\star} \frac{1}{\delta_j^2}\right)^{-1}
\left(\sum_{j \in \mathcal{S}^\star} \frac{\eta_j + \nicefrac{c}{\delta_j}}{\delta_j}
\right)
=
\left(\sum_{j \in \mathcal{S}^\star} \frac{1}{\delta_j^2}\right)^{-1}
\left(\sum_{j \in \mathcal{S}^\star} \frac{\eta_j}{\delta_j} +
c\sum_{j \in \mathcal{S}^\star} \frac{1}{\delta_j^2}\right) =
\tau(\rho) + c.
    \end{equation}
The feasability condition for $\rho$ remains equivalent:
\begin{equation}
\begin{aligned}
    \tilde\tau(\rho) &> \delta_\srt{\rho+1} \tilde\eta_\srt{\rho+1}  \iff \\
        \tau(\rho) + c &> \delta_\srt{\rho+1} \left(\eta_\srt{\rho+1} +
        \frac{c}{\delta_\srt{\rho+1}}\right) = \delta_\srt{\rho+1} \eta_\srt{\rho+1} +
        c.
\end{aligned}
\end{equation}
Therefore, the optimal $\rho$ for $\pr$ is also optimal for $\tilde\pr$. As
$\tilde\tau^\star = \tau^\star+c$, we have $\tilde\mu^\star_j =
\mu^\star_j + \frac{c}{\delta_j}$ for all j.
\end{proof}
\begin{corollary}\label{coro:supp} Let $\mg^\star = F_{\mathcal{A}_1}(\pr)$ with support
    $\mathcal{S}^\star$.
    Define
    \begin{equation}
        \tilde\eta_j \coloneqq \begin{cases}
            \text{any } \tilde\eta_j \leq \frac{\tau}{\delta_j},& j \not\in \mathcal{S}^\star \\
            \eta_j, & j \in \mathcal{S}^\star.
        \end{cases}
    \end{equation}
    Then, $F_{\mathcal{A}_1}(\tilde\pr) \coloneqq \tilde\mg^\star = \mg^\star$.
\end{corollary}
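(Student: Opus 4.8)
The plan is to read the result off the explicit characterization of $F_{\mathcal{A}_1}$ proved just above, in which the projection is completely determined by three quantities: the decreasing sorting permutation $\sigma$ of \eqnref{eqn:cone_proj_permutation}, the chosen cutoff $\bar\rho$ (equivalently the support $\mathcal{S}^\star = \mathcal{S}(\bar\rho)$), and the threshold $\tau^\star = \tau(\bar\rho)$. I would show that perturbing $\pr$ only at off-support coordinates, subject to $\delta_j \tilde\eta_j \le \tau^\star$, leaves all three invariant; the solution formula then returns the same active values $\tau^\star/\delta_j$, giving $\tilde\mg^\star = \mg^\star$. The argument parallels the proof of the preceding corollary, with the feasibility test $\tau(\rho) \ge \delta_{\srt{\rho+1}} \eta_{\srt{\rho+1}}$ doing the work.

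First I would check that the support cannot be reshuffled. Each $j \in \mathcal{S}^\star \setminus \{d\}$ has $\delta_j \eta_j > \tau^\star$ by the definition of the support, and its potential is untouched; each modified coordinate $j \notin \mathcal{S}^\star$ now has $\delta_j \tilde\eta_j \le \tau^\star$ by hypothesis. Hence in the new ordering the on-support indices keep the top $\bar\rho$ slots and their mutual order, so $\sigma$ is unchanged on $\{\srt{1}, \dots, \srt{\bar\rho}\}$. Next, since $\tau(\bar\rho)$ in \eqnref{eqn:cone_proj_tau_wrt_s} averages $\eta_j/\delta_j$ only over $\mathcal{S}(\bar\rho) = \mathcal{S}^\star$, and those entries are fixed, the threshold is preserved: $\tilde\tau(\bar\rho) = \tau(\bar\rho) = \tau^\star$.

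The step I expect to be the main obstacle is showing that the cutoff $\bar\rho$ itself does not move, since lowering an off-support potential could a priori make a smaller support feasible. I would settle this by inspecting the feasibility test on each side of $\bar\rho$. For every $\rho < \bar\rho$, both $\mathcal{S}(\rho)$ and the index $\srt{\rho+1}$ sit inside $\mathcal{S}^\star$, so $\tilde\tau(\rho) = \tau(\rho)$ and $\delta_{\srt{\rho+1}}\tilde\eta_{\srt{\rho+1}} = \delta_{\srt{\rho+1}}\eta_{\srt{\rho+1}}$; the test is then word-for-word the original one, which failed by minimality of $\bar\rho$, so no smaller cutoff becomes feasible. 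At $\rho = \bar\rho$, the new largest off-support index satisfies $\delta_{\srt{\bar\rho+1}}\tilde\eta_{\srt{\bar\rho+1}} \le \tau^\star = \tilde\tau(\bar\rho)$, so $\bar\rho$ remains feasible and hence remains the smallest feasible cutoff.

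With $\sigma$ on the support, $\bar\rho$, $\mathcal{S}^\star$, and $\tau^\star$ all unchanged, I would finish by substituting back into the solution formula of the proposition: every active coordinate $j \in \mathcal{S}^\star$ keeps the value $\tau^\star/\delta_j$, and every $j \notin \mathcal{S}^\star$ stays inactive because $\delta_j \tilde\eta_j \le \tau^\star$ keeps it below the threshold and thus out of the support. Since the inactive coordinates enter neither $\mathcal{S}^\star$ nor $\tau^\star$, the reported solution is pinned entirely by the invariant quantities, yielding $\tilde\mg^\star = \mg^\star$ as claimed. A useful sanity check along the way is that this reduces to the invariance already recorded in the previous corollary when the modification is the special shift $\tilde\eta_j = \eta_j + c/\delta_j$.
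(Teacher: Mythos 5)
Your proposal is correct and takes essentially the same route as the paper's own (much terser) proof: both argue that the sorted prefix, the cutoff $\bar\rho$, and the threshold $\tau^\star$ are invariant under the off-support modification, and your write-up additionally spells out the minimality check for $\rho < \bar\rho$ (that no smaller cutoff becomes feasible), which the paper leaves implicit. One caveat you share with the paper's statement and proof: since off-support coordinates of this projection pass through unchanged, $F_{\mathcal{A}_1}(\tilde\pr)$ equals $\tilde\eta_j$ (not $\eta_j$) at each $j \notin \mathcal{S}^\star$, so the claimed identity $\tilde\mg^\star = \mg^\star$ holds literally only on the support --- but the invariance of $\mathcal{S}^\star$, $\tau^\star$, and the on-support values is exactly the content used later in the decomposition proposition, so your argument, like the paper's, establishes what is actually needed.
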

\begin{proof}
    By construction, the permutation $\tilde\sigma$ is constant for the first
    $\rho^\star$ indices. By choice of $\tilde\eta_\srt{\rho+1}$, the
    feasability condition is satisfied, so $\tilde\rho^\star = \tilde\rho$.
    Since $\tilde\tau^\star$ depends only on the unchanged indices, the solution
    is the same.
\end{proof}
With these observations, we may now prove the following decomposition result.
\begin{proposition}
    For any $\pr \in \reals^d, F_{\mathcal{B} \cap \mathcal{A}_1} =
F_\mathcal{B}\left(F_{\mathcal{A}_1}(\pr)\right)$.
\end{proposition}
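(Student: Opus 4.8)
The plan is to use the first-order variational characterization of the Euclidean projection onto a closed convex set: $\bm{q}=\proj_C(\bm{x})$ if and only if $\bm{q}\in C$ and $\DP{\bm{x}-\bm{q}}{\bm{z}-\bm{q}}\le 0$ for every $\bm{z}\in C$. Writing $\mg^\star=F_{\mathcal{A}_1}(\pr)$ and $\hat\mg=F_{\mathcal{B}}(\mg^\star)$ for the coordinatewise clip of $\mg^\star$ onto the box $\mathcal{B}$, I will show that $\hat\mg$ is feasible for $\mathcal{B}\cap\mathcal{A}_1$ and satisfies this inequality against every $\bm{z}\in\mathcal{B}\cap\mathcal{A}_1$; uniqueness of the projection then forces $\hat\mg=F_{\mathcal{B}\cap\mathcal{A}_1}(\pr)$. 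Throughout I use the explicit form of $\mg^\star$ from the preceding proposition: with support $\mathcal{S}^\star\ni d$ and threshold $\tau^\star$, one has $\mmg^\star_j=\tau^\star/\delta_j$ for $j\in\mathcal{S}^\star$, $\mmg^\star_j=\ppr_j$ otherwise, and $\delta_j\ppr_j\le\tau^\star$ for $j\notin\mathcal{S}^\star$.

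First I would establish feasibility. Membership $\hat\mg\in\mathcal{B}$ holds by construction. For $\mathcal{A}_1$, note that the clip commutes with the positive rescaling by $\delta_j$, so $\delta_j\hat\mmg_j=\clip_{[0,1]}(\delta_j\mmg^\star_j)$; on $\mathcal{S}^\star$ this equals the single scalar $c^\star\coloneqq\clip_{[0,1]}(\tau^\star)$ for every coordinate (including $d$), while off the support $\delta_j\hat\mmg_j=\clip_{[0,1]}(\delta_j\ppr_j)\le\clip_{[0,1]}(\tau^\star)=\delta_d\hat\mmg_d$ by monotonicity of the clip together with $\delta_j\ppr_j\le\tau^\star$. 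Hence $\delta_j\hat\mmg_j\le\delta_d\hat\mmg_d$ for all $j\in[d-1]$, i.e. $\hat\mg\in\mathcal{A}_1$.

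The core is the optimality inequality. For $\bm{z}\in\mathcal{B}\cap\mathcal{A}_1$, I expand
\[
\DP{\pr-\hat\mg}{\bm{z}-\hat\mg}
=\underbrace{\DP{\pr-\mg^\star}{\bm{z}-\mg^\star}}_{\le 0}
+\DP{\pr-\mg^\star}{\mg^\star-\hat\mg}
+\underbrace{\DP{\mg^\star-\hat\mg}{\bm{z}-\hat\mg}}_{\le 0},
\]
where the first bracket is nonpositive by the variational inequality for $\proj_{\mathcal{A}_1}$ (since $\bm{z}\in\mathcal{A}_1$) and the third by that for $\proj_{\mathcal{B}}$ (since $\bm{z}\in\mathcal{B}$). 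The whole claim therefore reduces to showing the cross term vanishes. The vector $\pr-\mg^\star$ is supported on $\mathcal{S}^\star$, where $(\pr-\mg^\star)_j=\ppr_j-\tau^\star/\delta_j$, and there $\mmg^\star_j-\hat\mmg_j=(\tau^\star-c^\star)/\delta_j$ is a constant multiple of $1/\delta_j$. Hence the cross term equals $(\tau^\star-c^\star)\sum_{j\in\mathcal{S}^\star}(\ppr_j-\tau^\star/\delta_j)/\delta_j$, and the sum is exactly zero by the definition of $\tau^\star$ in \eqnref{eqn:cone_proj_tau_wrt_s}, namely $\sum_{j\in\mathcal{S}^\star}\ppr_j/\delta_j=\tau^\star\sum_{j\in\mathcal{S}^\star}1/\delta_j^2$. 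Thus $\DP{\pr-\hat\mg}{\bm{z}-\hat\mg}\le 0$ for all feasible $\bm{z}$, completing the argument.

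The step I expect to be the main obstacle is this cross-term cancellation, and getting it right hinges on the two structural facts isolated in Corollaries~\ref{coro:const} and~\ref{coro:supp}: that $F_{\mathcal{A}_1}$ places a \emph{single} shared threshold $\tau^\star$ across all active coordinates, and that off-support coordinates are left untouched. These are precisely what make $\mg^\star-\hat\mg$ proportional to $(\delta_j^{-1})$ on $\mathcal{S}^\star$ and zero elsewhere, so that the stationarity relation $\sum_{j\in\mathcal{S}^\star}(\ppr_j-\tau^\star/\delta_j)/\delta_j=0$ annihilates it. An alternative, more computational route would verify the KKT system of $\min\tfrac12\|\mg-\pr\|^2$ over $\mathcal{B}\cap\mathcal{A}_1$ directly at $\hat\mg$, but the projection-composition argument above is shorter and reuses the already-derived description of $F_{\mathcal{A}_1}$.
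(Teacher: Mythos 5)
Your proof is correct, but it takes a genuinely different route from the paper's. The paper proves the identity by invoking \citet[Lemma 18]{ad3}, reducing the claim to showing that Dykstra's alternating projection onto $\mathcal{A}_1 \cap \mathcal{B}$ halts after one iteration, \ie, that $F_{\mathcal{A}_1}(\pr + \mg^\star - \mg') = \mg^\star$ where $\mg' = F_{\mathcal{A}_1}(\pr)$ and $\mg^\star = F_{\mathcal{B}}(\mg')$; that hypothesis is then verified through the two invariance results, Corollaries~\ref{coro:const} and~\ref{coro:supp}. You instead verify directly the first-order variational characterization of the projection onto the intersection: feasibility of $\hat\mg$ (via monotonicity of clipping and the off-support condition $\delta_j \ppr_j \leq \tau^\star$), and the three-term expansion of $\DP{\pr-\hat\mg}{\bm{z}-\hat\mg}$, where two terms are nonpositive by the variational inequalities of the individual projections and the cross term $\DP{\pr-\mg^\star}{\mg^\star-\hat\mg}$ vanishes because $\pr-\mg^\star$ is supported on $\mathcal{S}^\star$, there $(\mg^\star-\hat\mg)_j = (\tau^\star - c^\star)/\delta_j$, and stationarity of $\tau^\star$ (Eq.~\ref{eqn:cone_proj_tau_wrt_s}) makes these two vectors orthogonal. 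Your argument is self-contained and more elementary---it needs neither the Dykstra lemma nor the two corollaries---and it exposes the geometric reason the composition works: the box projection displaces the cone-projection output only along a direction orthogonal to its residual. The paper's route buys reusability instead: the two corollaries and the one-step-Dykstra verification pattern are AD\textsuperscript{3} machinery that the paper extends to the degree-weighted setting, and the corollaries are stated as standalone facts.

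One slip in your closing commentary: you assert that the structural facts make $\mg^\star - \hat\mg$ ``zero'' off the support. That is false in general---for $j \notin \mathcal{S}^\star$ one has $\hat\mmg_j = \clip_{[0,\delta_j^{-1}]}(\ppr_j)$, which differs from $\mmg^\star_j = \ppr_j$ whenever $\ppr_j$ lies outside the box. What is zero off the support is $\pr - \mg^\star$, and since the cross term pairs $\mg^\star - \hat\mg$ against that vector, this is all your computation actually uses; the proof body is therefore unaffected.
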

\begin{proof}
We invoke \citet[Lemma 18]{ad3},
in order to show that Dykstra's algorithm for projecting onto $\mathcal{A}_1
\cap \mathcal{B}$
converges after one iteration. This requires showing
\begin{equation}
    F_{\mathcal{A}_1}(\underbrace{\pr + \mg^\star - \mg'}_{\pr'}) = \mg^\star,
\end{equation}
where $\mg' = F_{\mathcal{A}_1}(\pr)$ and $\mg^\star = F_{\mathcal{B}}(\mg')$.

We have
\begin{equation}
    \mu_j' = \begin{cases}
        \frac{\tau}{\delta_j}, j \in \mathcal{S}^\star \\
        \eta_j, j \not\in \mathcal{S}^\star.
    \end{cases}
\end{equation}
We apply Corollary~\ref{coro:const} with $c = \clip_{[0, 1]}(\tau) - \tau$,
yielding
\begin{equation}\label{eqn:mu_tilde}
    \tilde\mu_j = F_{\mathcal{A}_1}(\tilde{\pr}) =
    \begin{cases}
        \frac{\tau}{\delta_j} + \clip_{[0,
        \delta_j^{-1}]}\left(\frac{\tau}{\delta_j}\right) -
        \frac{\tau}{\delta_j}, &j \in \mathcal{S}^\star \\
        \eta_j + \clip_{[0,
        \delta_j^{-1}]}\left(\frac{\tau}{\delta_j}\right) -
        \frac{\tau}{\delta_j}, &j \not\in \mathcal{S}^\star \\
    \end{cases}
    \quad=
    \begin{cases}
        \clip_{[0, \delta_j^{-1}]}\left(\frac{\tau}{\delta_j}\right),
         &j \in \mathcal{S}^\star \\
        \eta_j + \clip_{[0,
        \delta_j^{-1}]}\left(\frac{\tau}{\delta_j}\right) -
        \frac{\tau}{\delta_j}, &j \not\in \mathcal{S}^\star. \\
    \end{cases}
\end{equation}

Now, observe that
\begin{equation}\label{eqn:eta_prime}
    \eta_j' = \begin{cases}
        \eta_j + \frac{\clip_{[0, 1]}(\tau) - \tau}{\delta_j},& j \in
        \mathcal{S}^\star \\
        \clip_{[0, \delta_j^{-1}]}(\eta_j),& \text{otherwise}.
    \end{cases}
    \quad = \begin{cases}
        \tilde\eta_j, &j \in \mathcal{S}^\star \\
        \clip_{[0, \delta_j^{-1}]}(\eta_j),& \text{otherwise}
    \end{cases}.
\end{equation}
We can now apply Corollary \ref{coro:supp} to show that $\tilde\mg^\star =
F_{\mathcal{A}_1}(\pr')$. This requires showing that
$\clip_{[0, \delta_j^{-1}]}(\eta_j) \leq \frac{\tilde\tau}{\delta_j}$ for $j
\not\in \mathcal{S}^\star$. But the latter implies
\begin{equation}
    \begin{aligned}
        \delta_j \eta_j &\leq \tau & \\
        \iff \quad \clip_{[0, 1]}(\delta_j \eta_j) &\leq \clip_{[0, 1]}(\tau) &
        \text{(clipping is non-decreasing)}\\
        \iff \quad \clip_{[0, 1]}(\delta_j \eta_j) &\leq \tau +
            \underbrace{\clip_{[0, 1]}(\tau) - \tau}_{=c} &\\
        \iff \quad \clip_{[0, 1]}(\delta_j \eta_j) &\leq \tilde\tau&\\
        \iff \quad \frac{\clip_{[0, 1]}(\delta_j \eta_j)}{\delta_j} &\leq
            \frac{\tilde\tau}{\delta_j}&\\
        \iff \quad \clip_{[0, \delta_j^{-1}]}(\eta_j) &\leq
            \frac{\tilde\tau}{\delta_j}&
    \end{aligned}
\end{equation}

Putting together the second branch from Equation~\ref{eqn:eta_prime} with the
first branch from Equation~\ref{eqn:mu_tilde}, we get
\begin{equation}
    \tilde\mu_j^\star = \begin{cases}
        \clip_{[0, \delta_j^{-1}]}\left(\frac{\tau}{\delta_j}\right),
         &j \in \mathcal{S}^\star \\
        \clip_{[0, \delta_j^{-1}]}(\eta_j),& \text{otherwise}
    \end{cases}
    \quad=\clip_{[0, \delta_j^{-1}]}(\mu'_j) = \mu^\star_j.
\end{equation}
\end{proof}

\paragraph{Gradient computation} The Jacobian of $F_\text{ORout}$ depends on
which branch was taken. If taking the first branch (\ie, the clipping solution
was feasible), it is simply the Jacobian of clipping,
 $\bm{J}_\text{ORout} = \diag(\iv{0 < \delta_i \mmg_j < 1})$.
If taking the third branch, it is the XOR Jacobian with the last variable
negated, \ie
 $\bm{J}_\text{ORout} = \bm{L}_{d} \bm{J}_\text{XOR} \bm{L}_d$. Otherwise, if
taking the second branch, $\mg^\star = F_\mathcal{B}(F_{\mathcal{A}_1}(\pr))$
and we must work out the Jacobian of $F_{\mathcal{A}_1}$.
Recall that $\mg^\star = F_{\mathcal{A}_1}(\pr)$ has the expression
\begin{equation}
\mmg^\star_j = \begin{cases}\ppr_j,& j \not\in \mathcal{S} \\
\nicefrac{\tau}{\delta_j},& j \in \mathcal{S}.\end{cases}
\end{equation}
For indices $j \not\in \mathcal{S}$, we then have the $j$\textsuperscript{th}
row $\pfrac{\mmg_j}{\pr} =
\bm{e}_j$.
For $j \in \mathcal{S}$, $\pfrac{\mmg_j}{\pr} = \diag(\bm{\delta})^{-1}
\pfrac{\tau}{\pr}$.
Differentiating $\tau^\star$ from Equation~\ref{eqn:cone_proj_tau_wrt_s}
gives
\begin{equation}
\pfrac{\tau}{\ppr_i} = \begin{cases}
0 & i \not \in \mathcal{S} \\
\left(\sum_{k \in \mathcal{S}^\star} \frac{1}{\delta_k^2}\right)^{-1}
\frac{1}{\delta_i} & i \in \mathcal{S}.
\end{cases}
\quad\text{so}\quad
\pfrac{\mmg_j}{\ppr_i} = \begin{cases}
0 & i \not \in \mathcal{S} \\
\left(\sum_{k \in \mathcal{S}^\star} \frac{1}{\delta_k^2}\right)^{-1}
\frac{1}{\delta_i\delta_j} & i \in \mathcal{S}.
\end{cases}
\end{equation}
Combining the cases and applying the chain rule gives the Jacobian for this
branch, which is rank-1 plus diagonal.

\subsection{Pairwise factors for Ising models}
The pairwise factor is a fundamental building block in factor graphs, allowing
to capture soft correlations between two binary variables.

\subsubsection{Deriving the marginal polytope}
In a naive, fully explicit parametrization, we would have two scores for each
binary variable (one for each state), and four scores for every joint
assignment. In this section, however, we show how to reduce this parametrization
to a problem with only three variables $\mmg_1, \mmg_2,$ and $\mmg_{12}$.
Denoting the binary variable states as $F$ and $T$, we have
\begin{equation}
\bm{D}\mg_M =
\begin{bmatrix}
\delta_1(\mg_M)_{1,F} \\
\delta_1(\mg_M)_{1,T} \\
\delta_2(\mg_M)_{2,F} \\
\delta_2(\mg_M)_{2,T} \\
\end{bmatrix}
=
\begin{bmatrix}
1 & 1 & 0 & 0 \\
0 & 0 & 1 & 1 \\
1 & 0 & 1 & 0 \\
0 & 1 & 0 & 1 \\
\end{bmatrix}
\p
\quad\text{and}\quad
\mg_N = \bm{I}\p.
\end{equation}
each element of $\bm{D}\mg_M$ and $\mg_N$
is a sum of elements of $\p$, hence non-negative.
Write $\p = (p_{FF}, p_{FT}, p_{TF}, p_{TT})$ corresponding to the four possible
joint assignments, and observe that
\begin{equation}
\delta_1 \left((\mg_M)_{1,F} + (\mg_M)_{1,T}\right)
= (p_{FF} + p_{FT}) + (p_{TF} + p_{TT}) = 1,
\end{equation}
and similarly $\delta_2 \left((\mg_M)_{2,F} + (\mg_M)_{2,T}\right) = 1.$
We may thus write, for simplicity
\begin{equation}
\mg_M = \left(\nicefrac{1}{\delta_1} - \mmg_1, \mmg_1,
\nicefrac{1}{\delta_2}-\mmg_2, \mmg_2\right)
\quad\text{such that}\quad
\bm{D}\mg_M = \left(1 - \delta_1 \mmg_1, \delta_1 \mmg_1, 1 - \delta_2 \mmg_2,
\mmg_2\right).
\end{equation}
Denote $p_{TT} \eqqcolon \mu_{12}$; we may eliminate $\p$ as:
\begin{equation}\label{eqn:pair_elim_p}
\begin{aligned}
p_{TF} &= \delta_1\mu_1 - \mu_{12}, \\
p_{FT} &= \delta_2\mu_2 - \mu_{12}, \\
p_{FF} &= 1 + \mu_{12} - \delta_1\mu_1 - \delta_2\mu_2.
\end{aligned}
\end{equation}
Considering $\p \geq 0$, this gives the constraints on $\mg$:
\begin{equation}
\begin{aligned}
\delta_1\mu_1 &\geq \mu_{12}, \\
\delta_1\mu_2 &\geq \mu_{12}, \\
\mu_{12} &\geq \delta_1\mu_1 + \delta_2\mu_2 - 1.
\end{aligned}
\end{equation}
In addition, we have the inherited constraints from the definition of $\mg$:
\begin{equation}
\begin{aligned}
0 \leq \delta_1\mu_1 &\leq 1 \\
0 \leq \delta_1\mu_2 &\leq 1 \\
0 \leq \mu_{12} &\leq 1
\end{aligned}
\end{equation}

Therefore, the standard pairwise factor may be reparametrized using the following
constraint set $(\delta_1 = \delta_2=1)$:
\begin{equation}
\Mpo_\text{pair} = \left\{ \mg \in \RR^3_+ \mid
\mmg_{12} \leq \mmg_1 \leq 1;
\mmg_{12} \leq \mmg_2 \leq 1;
\mmg_1 + \mmg_2 - 1 \leq \mmg_{12} \right\}.
\end{equation}
and the constraint set for the degree-adjusted QP is
\begin{equation}
\tilde\Mpo_\text{pair} = \left\{ \mg \in \RR^3_+ \mid
\mmg_{12} \leq \delta_1\mmg_1 \leq 1;
\mmg_{12}\leq \delta_2\mmg_2 \leq 1;
\delta_1 \mmg_1 + \delta_2 \mmg_2 - 1 \leq \mmg_{12}
\right\}.
\end{equation}

Assume we are given $[\pr_M; \pr_N]$, how to convert them to $(\ppr_1, \ppr_2,
\ppr_3)$ such that the solution to the degree-adjusted QP is the same? To answer
this, we compute the objective value as a function of $(\mmg_1, \mmg_2,
\mmg_{12})$. The objective is $\DP{\pr_M}{\mg_M} + \DP{\pr_N}{\mg_N}
-\frac{1}{2} \| \mg_M \|^2$. Substituting $\mg_M$, the first term is
\begin{equation}
\begin{aligned}
\DP{\pr_M}{\mg_M}
&= (\pr_M)_{1,F} \left(\frac{1}{\delta_1} - \mmg_1 \right)
+ (\pr_M)_{1,T} \mmg_1
+ (\pr_M)_{2,F} \left(\frac{1}{\delta_2} - \mmg_2 \right)
+ (\pr_M)_{2,T} \mmg_2 \\
&=
\big((\pr_M)_{1,T} - (\pr_M)_{1,F} \big) \mmg_1
+ \big((\pr_M)_{2,T} - (\pr_M)_{2,F} \big) \mmg_2
+ \text{const.}
\end{aligned}
\end{equation}

The regularizer becomes
\begin{equation}
\begin{aligned}
\frac{1}{2} \| \mg \|^2 &=
\frac{1}{2} \Bigg(
\left(\frac{1}{\delta_1} - \mmg_1 \right)^2
+ \mmg_1^2
+ \left(\frac{1}{\delta_2} - \mmg_2 \right)^2
+ \mmg_2^2
\Bigg) \\
&= \mmg_1^2 + \mmg_2^2 + \frac{\mmg_1}{\delta_1} + \frac{\mmg_2}{\delta_2}
+ \text{const.}
\end{aligned}
\end{equation}

Noting that $\mg_N = \p$ and using Equation~\ref{eqn:pair_elim_p}, the second
term becomes
\begin{equation}
\begin{aligned}
    \DP{\pr_N}{\mg_N} &=
(\pr_N)_{FF} ( 1 + \mmg_{12} - \delta_1 \mmg_1 - \delta_2 \mmg_2) +
(\pr_N)_{TF} (\delta_1 \mmg_1 - \mmg_{12}) +
(\pr_N)_{FT} (\delta_2 \mmg_2 - \mmg_{12}) +
(\pr_N)_{TT} (\mmg_{12}) \\
&= (\delta_1 (\pr_N)_{TF} - \delta_1 (\pr_N)_{FF}) \mmg_1
+ (\delta_2 (\pr_N)_{FT} - \delta_2 (\pr_N)_{FF}) \mmg_2
\\&+ ((\pr_N)_{FF} - (\pr_N)_{TF} - (\pr_N)_{FT} + (\pr_N)_{TT}) \mmg_{12}
+ \text{const.}
\end{aligned}
\end{equation}
Adding all terms leads to a polynomial with coefficients $1$ for $\mmg_1$ and
$\mmg_2$. Scaling by 2 and identifying the coefficients to align with $\ppr_1
\mmg_1 + \ppr_2 \mmg_2 + \ppr_{12} \mmg_{12} - \frac{1}{2} \big( \mmg_1^2 +
\mmg_2^2 \big)$ yields the answer:
\begin{equation}
\begin{aligned}
\ppr_{1} &= \hlf\Big(
(\pr_M)_{1,T} - (\pr_M)_{1,F} + \nicefrac{1}{\delta_1}
+ \delta_1 \big((\pr_N)_{TF} - (\pr_N)_{FF} \big) \Big) \\
\ppr_{2} &= \hlf\Big(
(\pr_M)_{2,T} - (\pr_M)_{2,F} + \nicefrac{1}{\delta_2}
+ \delta_2 \big((\pr_N)_{FT} - (\pr_N)_{FF} \big) \Big) \\
\ppr_{12} &= \hlf\big(
 (\pr_N)_{FF}
-(\pr_N)_{FT}
-(\pr_N)_{TF}
+(\pr_N)_{TT} \big).
\end{aligned}
\end{equation}
\subsubsection{Closed-form solution}
The optimization problem we tackle is
\begin{align}
\mathrm{minimize}\quad& \hlf (\ppr_1 - \mmg_1)^2 + \hlf(\ppr_2 - \mmg_2)^2 -
\ppr_{12} \mmg_{12} \\
\mathrm{subject\,to}\quad&
0 \leq \delta_1\mmg_1 \leq 1; \quad
0 \leq \delta_2\mmg_2 \leq 1; \quad
0 \leq \mmg_{12}; \label{eqn:pair_box}\\
&\delta_1\mmg_1 \geq \mmg_{12}; \quad
\delta_2\mmg_2 \geq \mmg_{12};  \label{eqn:pair_lower_12}\\
& \mmg_{12} \geq \delta_1 \mmg_1 + \delta_2 \mmg_2 - 1 \label{eqn:pair_conj_bound}.
\end{align}

If $\ppr_{12} < 0$, we can make a change of variable to obtain an equivalent
problem with $\ppr_{12} \geq 0$: set $\mmg'_1 = \mmg_1, \mmg'_2 = \frac{1}{\delta_2} -
\mmg_2$ and $\mmg'_{12} = \delta_1 \mmg_1 - \mmg_{12}$; we can show that
wherever $\mg'$ is feasible so is $\mg$ by inspecting the constraints.
The box constraints on $\mmg'_1$ are unchanged, and on $\mmg'_2$ they are
simply flipped. The constraint $0 \leq \mmg'_{12}$ is equivalent to $\delta_1 \mmg_1 \geq
\mmg_{12}$. The constraint $\delta_1\mmg'_1 \geq \mmg'_{12}$ yields
$\mmg_{12} \geq 0$.
The constraint $\delta_2\mmg'_2 \geq \mmg'_{12}$ becomes
$\delta_2 (\delta_2^{-1} - \mmg_2) \geq \delta_1 \mmg_1 - \mmg_{12}$,
equivalent to the final constraint. And finally,
$\mmg'_{12} \geq \delta_1 \mmg'_1 + \delta_2 \mmg'_2 - 1$ is equivalent to
$\mmg_{12} \leq \delta_2 \mmg_2$. The feasible set is thus preserved by this
change of variable. Setting $\ppr'_1 = \ppr_1 + \delta_1\ppr_{12}, \ppr'_2 =
\frac{1}{\delta_2} - \ppr_2,$ and $
\ppr'_{12} = -\ppr_{12}$, we reach an equivalent problem (same objective value
and constraints)  from which we can easily recover the original solution.

We can thus focus on the case $\ppr_{12} \geq 0$.

Note that the objective is linear in $\mmg_{12}$ so
the largest feasible $\mmg_{12}$ is optimal.  This value can be shown to be:
\begin{equation}
\mmg_{12} = \min(\delta_1 \mmg_1, \delta_2 \mmg_2)
\end{equation}
Indeed, any larger one would violate at least one
constraint in Equation~\ref{eqn:pair_lower_12}. As the minimum of two
non-negative numbers, it is non-negative itself, and we can show that
it satisfies Equation~\ref{eqn:pair_conj_bound} by assuming $\delta_1 \mmg_1
\geq \delta_2 \mmg_2$, so $\mmg_{12} = \delta_2 \mmg_2$. Plugging into the
constraint yields $1 \geq \delta_1 \mmg_1$, which is
true under the upper bound in Equation~\ref{eqn:pair_box}.  (The other case is
also verified, by symmetry.)

Therefore, the lower bounds on
$\mmg_{12}$ are always inactive, and we are left with:
\begin{equation}\label{eqn:pair_simplified}
\begin{aligned}
\mathrm{minimize}\quad& \hlf (\ppr_1 - \mmg_1)^2 + \hlf(\ppr_2 - \mmg_2)^2 -
\ppr_{12} \mmg_{12} \\
\mathrm{subject\,to}\quad&
0\leq \delta_1\mmg_1 \leq 1;
\quad 0 \leq \delta_2\mmg_2 \leq 1\\
&\delta_1\mmg_1 \geq \mmg_{12}; \quad
\delta_2\mmg_2 \geq \mmg_{12}; \\
\end{aligned}
\end{equation}

\begin{proposition}\label{prop:pair}%
The problem  in Equation~\ref{eqn:pair_simplified} with
$\ppr_{12} \geq 0$ has the solution:
\begin{equation*}
\setlength{\arraycolsep}{5pt}
\left\{\!\!\begin{array}{l l l}
\textcolor{gray}{(\mmg_1=)} &
\textcolor{gray}{(\mmg_2=)} & \\
\clip_{[0, \delta_1^{-1}]}(\ppr_1),   &
\clip_{[0, \delta_2^{-1}]}(\ppr_2 + \delta_2 \ppr_{12}),  &
\text{if }\delta_1 \ppr_1 > \delta_2 \ppr_2 + \delta_2^2 \ppr_{12};\\
\clip_{[0, \delta_1^{-1}]}(\ppr_1 + \delta_1 \ppr_{12}), &
\clip_{[0, \delta_2^{-1}]}(\ppr_2), &
\text{if }\delta_2 \ppr_2 > \delta_1 \ppr_1 + \delta_1^2 \ppr_{12}; \\
\displaystyle
\clip_{[0, 1]}\left(
\frac{\delta_1\delta_2^2 \ppr_1 + \delta_1^2\delta_2 \ppr_2 +
\delta_1^2\delta_2^2\ppr_{12}}%
{\delta_1^2 + \delta_2^2}
\right)\nicefrac{}{\delta_1}, &
\displaystyle
\clip_{[0, 1]}\left(
\frac{\delta_1\delta_2^2 \ppr_1 + \delta_1^2\delta_2 \ppr_2 +
\delta_1^2\delta_2^2\ppr_{12}}%
{\delta_1^2 + \delta_2^2}
\right)\nicefrac{}{\delta_2},&\text{otherwise.}
\end{array}\right.
\end{equation*}
\end{proposition}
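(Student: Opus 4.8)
The plan is to eliminate $\mmg_{12}$ and then reduce \eqnref{eqn:pair_simplified} to a pair of one-dimensional box-constrained quadratics. Since $\ppr_{12}\ge 0$ and $\mmg_{12}$ enters the objective only linearly, subject to the upper bounds $\mmg_{12}\le\delta_1\mmg_1$ and $\mmg_{12}\le\delta_2\mmg_2$, the optimal value is $\mmg_{12}^\star=\min(\delta_1\mmg_1,\delta_2\mmg_2)$, exactly as argued just before the proposition. Substituting this back, the problem becomes the minimization of $\Phi(\mmg_1,\mmg_2)\coloneqq \hlf(\ppr_1-\mmg_1)^2+\hlf(\ppr_2-\mmg_2)^2-\ppr_{12}\min(\delta_1\mmg_1,\delta_2\mmg_2)$ over the box $0\le\delta_1\mmg_1\le 1$, $0\le\delta_2\mmg_2\le 1$. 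Because $-\ppr_{12}\min(\delta_1\mmg_1,\delta_2\mmg_2)=\ppr_{12}\max(-\delta_1\mmg_1,-\delta_2\mmg_2)$ is a nonnegative multiple of a pointwise maximum of linear functions, $\Phi$ is convex; hence the minimizer is unique and characterized by first-order optimality.

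Next I would split the box along the ridge $\delta_1\mmg_1=\delta_2\mmg_2$ into the two closed half-spaces on which $\min$ is smooth. On $\{\delta_1\mmg_1\ge\delta_2\mmg_2\}$ we have $\min=\delta_2\mmg_2$, so $\Phi$ is separable and its box-constrained minimizer is the clip of the unconstrained one, namely $\mmg_1=\clip_{[0,\delta_1^{-1}]}(\ppr_1)$ and $\mmg_2=\clip_{[0,\delta_2^{-1}]}(\ppr_2+\delta_2\ppr_{12})$. By symmetry, on $\{\delta_2\mmg_2\ge\delta_1\mmg_1\}$ the roles swap, giving $\mmg_1=\clip_{[0,\delta_1^{-1}]}(\ppr_1+\delta_1\ppr_{12})$ and $\mmg_2=\clip_{[0,\delta_2^{-1}]}(\ppr_2)$. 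For each candidate I would check membership in the half-space that defined it; using the unconstrained values, the membership test reduces precisely to $\delta_1\ppr_1\ge\delta_2\ppr_2+\delta_2^2\ppr_{12}$ for the first case and $\delta_2\ppr_2\ge\delta_1\ppr_1+\delta_1^2\ppr_{12}$ for the second. Adding these inequalities yields $0\ge(\delta_1^2+\delta_2^2)\ppr_{12}$, so under $\ppr_{12}\ge 0$ they are mutually exclusive, and convexity guarantees that whenever a candidate lies in its own half-space, the box-constrained stationarity it satisfies is exactly the global optimality condition.

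It remains to treat the case where neither condition holds, in which the optimum must lie on the ridge $\delta_1\mmg_1=\delta_2\mmg_2$. Parametrizing with $t\coloneqq\delta_1\mmg_1=\delta_2\mmg_2$, i.e. $\mmg_1=t/\delta_1$ and $\mmg_2=t/\delta_2$, the objective becomes the scalar quadratic $\hlf(\ppr_1-t/\delta_1)^2+\hlf(\ppr_2-t/\delta_2)^2-\ppr_{12}t$ with $t\in[0,1]$. Setting the derivative to zero and solving gives $t=(\delta_1\delta_2^2\ppr_1+\delta_1^2\delta_2\ppr_2+\delta_1^2\delta_2^2\ppr_{12})/(\delta_1^2+\delta_2^2)$, which after clipping to $[0,1]$ and dividing by $\delta_1$ and $\delta_2$ recovers the third case. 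Since the two half-spaces exhaust the box, the three cases cover all possibilities, concluding the argument.

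The main obstacle I anticipate is the interaction between clipping and the region-membership test: the stated conditions are phrased in terms of the raw scores $\ppr$, while the candidate minimizers involve clipped quantities, so I must verify that the $\ppr$-level inequalities correctly certify half-space membership even when a box bound is active. I expect to resolve this by combining convexity with a sifting-style argument that treats $\delta_1\mmg_1=\delta_2\mmg_2$ as the potentially active constraint, and by using that the $\clip$ operation is monotone and hence preserves the ordering underlying the membership test. This is the step demanding the most care; the remaining derivations (stationarity, clipping of separable problems, and the scalar solve on the ridge) are routine.
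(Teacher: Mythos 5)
Your proposal is correct, and it reaches the proposition by a route genuinely different from the paper's. The paper keeps all three variables of Equation~\ref{eqn:pair_simplified}, writes the full Lagrangian with multipliers $(\bm{\alpha}, \lbd, \bm{\nu})$, and runs a complementary-slackness analysis on the three cases $\delta_1\mmg_1 > \delta_2\mmg_2$, $\delta_1\mmg_1 < \delta_2\mmg_2$, $\delta_1\mmg_1 = \delta_2\mmg_2$: in the first case slackness forces $\lambda_1 = \nu_2 = \alpha_1 = 0$, hence $\alpha_2 = \ppr_{12}$, which simultaneously produces the clipped solution and shows by contradiction that this case can only occur when $\delta_1\ppr_1 > \delta_2\ppr_2 + \delta_2^2\ppr_{12}$. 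You instead eliminate $\mmg_{12} = \min(\delta_1\mmg_1, \delta_2\mmg_2)$ up front, obtaining a convex piecewise-quadratic objective in $(\mmg_1, \mmg_2)$ alone, and certify candidates without multiplier bookkeeping: on the half-space $\delta_1\mmg_1 \geq \delta_2\mmg_2$ your objective agrees with the separable quadratic whose box minimizer is the clipped candidate, and since that separable quadratic minorizes the true objective everywhere on the box (because $\min(\delta_1\mmg_1,\delta_2\mmg_2) \leq \delta_2\mmg_2$ and $\ppr_{12} \geq 0$), landing inside the half-space certifies global optimality directly. Your approach buys a transparent optimality certificate and generalizes cleanly; the paper's KKT derivation makes the necessity of each raw-score condition and the exact solution formulas fall out mechanically, in the same style as the other factor derivations of \suppref{specialized}. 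The one step you must make airtight is the one you flag yourself: the raw-score conditions and half-space membership of the clipped candidates are equivalent only in one direction. Monotonicity of $\clip$ gives that $\delta_1\ppr_1 \geq \delta_2\ppr_2 + \delta_2^2\ppr_{12}$ implies membership (enough for cases 1 and 2), while for the ``otherwise'' case you need the contrapositive with strict inequalities --- if the optimum lay strictly inside a half-space it would equal that half-space's clipped candidate, whose strict membership forces the strict raw inequality --- so that failure of both raw tests pushes the optimum onto the ridge, where your scalar solve in $t = \delta_1\mmg_1 = \delta_2\mmg_2$ matches the third branch exactly. With that lemma spelled out, together with your observation that the two strict raw conditions are mutually exclusive when $\ppr_{12} \geq 0$, the argument is complete.
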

\begin{proof}
If $\ppr_{12} = 0$, the problem separates and we get
$\mmg^\star_1 = \clip_{[0, \delta_1^{-1}]}(\ppr_1)$ and
$\mmg^\star_2 = \clip_{[0, \delta_2^{-1}]}(\ppr_2)$.

The Lagrangian is
\begin{equation}
\begin{aligned}
    L(\mg, \bm{\alpha}, \lbd, \bm{\nu}) =&
\hlf (\mmg_1 - \ppr_1) ^2 + \hlf (\mmg_2 - \ppr_2)^2 - \mmg_{12} \ppr_{12}
+ \alpha_1 (\mmg_{12} - \delta_1\mmg_1)
+ \alpha_2 (\mmg_{12} - \delta_2\mmg_2) \\
&- \lambda_1 \mmg_1
- \lambda_2 \mmg_2
+ \nu_1 (\delta_1\mmg_1 - 1)
+ \nu_2 (\delta_2\mmg_2 - 1)
\end{aligned}
\end{equation}
and the KKT conditions are:
\begin{align}
(\nabla_{\mmg_i} \mathcal{L} \setto 0) &&
\mmg_i    &= \ppr_i + \delta_i \alpha_i + \lambda_i - \delta_i \nu_i &
i \in \{1, 2\}\\
(\nabla_{\mmg_{12}} \mathcal{L} \setto 0)&&
\alpha_1 + \alpha_2 &= \ppr_{12}  & \\
\text{(complementary slackness)} &&
\lambda_1 \mmg_1 &= 0 & i\in\{1,2\}  \\
&&
\alpha_i (\mmg_{12} - \delta_i\mmg_i) &= 0 & i \in\{1,2\}\\
&&
\nu_i (\delta_i\mmg_i - 1) &= 0   & i \in \{1,2\}\\
\text{(primal feas.)}&&
\mmg_{12} &\leq \delta_i \mmg_i   & i \in \{1,2\}\\
&&
0 &\leq \delta_i \mmg_i \leq 1    & i \in \{1,2\} \\
\text{(dual feas.)} &&
\bm{\alpha},\lbd,\bm{\nu} &\geq 0 &
\end{align}
We consider three cases.
\begin{enumerate}
\item $\delta_1 \mmg_1 > \delta_2 \mmg_2$.

Considering the slacknesses gives
\begin{align}
\delta_1 \mmg_1 > 0 &\implies \lambda_1 = 0; \\
\delta_2 \mmg_2 < 1 &\implies \nu_2 = 0; \\
\mmg_{12} = \delta_2\mmg_2 < \delta_1 \mmg_1 &\implies \alpha_1 = 0 \implies
\alpha_2 = \ppr_{12}.
\end{align}
Plugging into the first two conditions gives
\begin{equation}
\mmg_1 = \ppr_1 - \delta_1 \nu_1; \qquad
\mmg_2 = \ppr_2 + \delta_2 \ppr_{12} + \lambda_2.
\end{equation}
Note that $\nu_1, \lambda_2 \geq 0$, so $\mmg_1 \leq \ppr_1$ and
$\mmg_2 \geq \ppr_2 + \delta_2 \ppr_{12}$.
Were it the case that $\delta_1 \ppr_1 \leq \delta_2 \ppr_2 + \delta_2^2
\ppr_{12}$, we'd have
\begin{equation}
    \delta_1 \mu_1 \leq \delta_1 \ppr_1 \leq \delta_2 \ppr_2 + \delta_2^2
\ppr_{12} \leq \delta_2 \mmg_2
\end{equation}
which contradicts our assumption. Therefore, we must have
\begin{equation}
\delta_1 \ppr_1 > \delta_2 \ppr_2 + \delta_2^2.
\end{equation}
If $\mmg_1 < \frac{1}{\delta_1}$ then $\nu_1 = 0$,  and if $\mmg_2 > 0$ then
$\lambda_2 = 0$. Thus the solution has the form
\begin{equation}
\mmg_1 =
\clip_{[0, \delta_1^{-1}]}(\ppr_1), \qquad
\mmg_2 = \max(0, \ppr_2 + \delta_2 \ppr_{12}).
\end{equation}
\item $\delta_1 \mmg_1 < \delta_2 \mmg_2$.

By symmetry to case 1, we must have
\begin{equation}
\delta_2 \ppr_2 > \delta_1 \ppr_1 + \delta_1^2
\end{equation}
and the solution
\begin{equation}
\mmg_1 = \max(0, \ppr_1 + \delta_1 \ppr_{12}), \qquad
\mmg_2 = \clip_{[0, \delta_2^{-1}]}(\ppr_2).
\end{equation}
\item $\delta_1 \mmg_1 = \delta_2 \mmg_2$.

In this case, $\mmg_{12} = \delta_1 \mmg_1 = \delta_2 \mmg_2$ and the problem
reduces to
\begin{equation}
\begin{aligned}
\mathrm{minimize}\quad& \hlf \left(\frac{\mmg_{12}}{\delta_1} - \ppr_1\right)^2 +
\hlf\left(\frac{\mmg_{12}}{\delta_2} - \ppr_2\right)^2 -
\ppr_{12} \mmg_{12} \\
\mathrm{subject\,to}\quad&  0\leq \mmg_{12} \leq 1.
\end{aligned}
\end{equation}
Setting the gradient to 0 yields
\begin{equation}
    \frac{\mmg_{12}}{\delta_1^2} -
    \frac{\ppr_{1}}{\delta_1} +
    \frac{\mmg_{12}}{\delta_2^2} -
    \frac{\ppr_{2}}{\delta_2} - \ppr_{12} = 0
\end{equation}
leading to the solution
\begin{equation}
\mmg_{12} = \clip_{[0, 1]}\Bigg[ \left(\frac{1}{\delta_1^2} +
\frac{1}{\delta_2^2}\right)^{-1} \left( \frac{\ppr_1}{\delta_1} +
\frac{\ppr_2}{\delta_2} + \ppr_{12} \right) \Bigg].
\end{equation}
which, after some manipulation, takes the desired form.
\end{enumerate}
\end{proof}
\subsubsection{Gradient computation}
The Jacobian of this projection is rather straightforward, albeit involving a
lot of branching. Denoting by $\bm{J}_\text{pair} \coloneqq
\pfrac{F_\text{pair}}{\pr}$, if $\ppr_{12} \geq 0$ we can differentiate the
expressions in Proposition~\ref{prop:pair} to get:
\begin{equation}
\bm{J}_\text{pair} = \begin{cases}
\diag(\iv{0 < \delta_i \mu_i < 1}) \cdot
\begin{bmatrix} 1 & 0 & 0 \\
0 & 1 & \delta_2 \\
\end{bmatrix},& \delta_1 \mmg_1 > \delta_2 \mmg_2 \\
\diag(\iv{0 < \delta_i \mu_i < 1}) \cdot
\begin{bmatrix}
1 & 0 & \delta_1 \\
0 & 1 & 0 \\
\end{bmatrix},& \delta_1 \mmg_1 < \delta_2 \mmg_2 \\
\frac{\iv{0 < \mu_{12} < 1}}{\delta_1^2 + \delta_2^2}
\begin{bmatrix}
\delta_2^2 & \delta_1\delta_2 & \delta_1 \delta_2^2 \\
\delta_1 \delta_2 & \delta_1^2 & \delta_1^2 \delta_2 \\
\end{bmatrix},& \delta_1 \mmg_1 = \delta_2 \mmg_2 \\
\end{cases}
\qquad\qquad\text{(if } \ppr_{12} \geq 0\text{)}
\end{equation}

If $\ppr_{12} < 0$, we must make a change of variable. We construct the modified potentials
$\pr' = (\ppr_1 + \delta_1 \ppr_{12}, \nicefrac{1}{\delta_2} - \ppr_2,
-\ppr_{12})$. This transformation has Jacobian
\begin{equation}
    \pfrac{\pr'}{\pr} = \begin{bmatrix}
1 & 0 & \delta_1 \\
0 & -1 & 0 \\
0 & 0 & -1 \\
\end{bmatrix}
\end{equation}
Then, we solve \wrt $\mg'$ defined as
$\mg' = (\mmg_1, \delta_2^{-1} - \mmg_2, \delta_1 \mmg_1 -\mmg_{12})$. We
discard $\mmg'_{12}$ and map
back to a solution to the original problem with $\mg = (\mmg'_1,
\nicefrac{1}{\delta_2} - \mmg_2')$, giving
\begin{equation}
    \pfrac{\mg}{\mg'} = \begin{bmatrix}
1 & 0 \\
0 & -1 \\
\end{bmatrix}
\end{equation}
Therefore, applying the chain rule, we have
\begin{equation}
\bm{J}_\text{pair} =
\pfrac{\mg}{\mg'}
\pfrac{F_\text{pair}}{\pr'}
\pfrac{\pr'}{\pr}
\end{equation}
which, after evaluating and commuting, gives the expression
(branching using the intermediate solution $\mg'$):
\begin{equation}
\bm{J}_\text{pair} = \begin{cases}
\diag(\iv{0 < \delta_i \mu_i' < 1}) \cdot
\begin{bmatrix} 1 & 0 & \delta_1 \\
0 & 1 & \delta_2 \\
\end{bmatrix},& \delta_1 \mmg'_1 > \delta_2 \mmg'_2 \\
\diag(\iv{0 < \delta_i \mmg'_i < 1}) \cdot
\begin{bmatrix}
1 & 0 & 0 \\
0 & 1 & 0 \\
\end{bmatrix},& \delta_1 \mmg'_1 < \delta_2 \mmg'_2 \\
\frac{\iv{0 < \mmg'_{12} < 1}}{\delta_1^2 + \delta_2^2}
\begin{bmatrix}
\delta_2^2 & -\delta_1\delta_2 & 0 \\
-\delta_1 \delta_2 & \delta_1^2 & 0 \\
\end{bmatrix},& \delta_1 \mmg'_1 = \delta_2 \mmg'_2 \\
\end{cases}
\qquad\qquad\text{(if } \ppr_{12} < 0\text{)}
\end{equation}

\section{Experimental details} \label{supp:experimental}
\subsection{Computing infrastructure}
Our infrastructure consists of 4 machines with the specifications shown in
Table~\ref{table:computing_infrastructure}. The machines were used
interchangeably, and all experiments were executed in a single GPU.
We did not observe large differences in the execution time of our models across different machines.
 Furthermore, all of our models fit in a single GPU.

\begin{table}[!htb]
    \small
    \begin{center}
    \begin{tabular}{l ll}
        \toprule
        \sc \# & \sc GPU & \sc CPU  \\
        \midrule
        % hermes
        1.   & 4 $\times$ Titan Xp - 12GB           & 16 $\times$ AMD Ryzen 1950X @ 3.40GHz - 128GB \\
        % athena
        2.   & 4 $\times$ GTX 1080 Ti - 12GB        & 8 $\times$ Intel i7-9800X @ 3.80GHz - 128GB \\
        % zeus
        3.   & 3 $\times$ RTX 2080 Ti - 12GB        & 12 $\times$ AMD Ryzen 2920X @ 3.50GHz - 128GB \\
        % hera
        4.   & 3 $\times$ RTX 2080 Ti - 12GB        & 12 $\times$ AMD Ryzen 2920X @ 3.50GHz - 128GB \\
        \bottomrule
    \end{tabular}
    \end{center}
    \caption{Computing infrastructure.}
    \label{table:computing_infrastructure}
\end{table}
\subsection{ListOps}
\paragraph{Dataset.} Starting with the ListOps dataset,
following \citet{caio-acl}
we convert the constituent structures to dependency trees and remove the
sequences longer than 100 tokens. We put aside a subset of the training data for
validation purposes, leading to a train/validation/test split of
70446/10000/8933 sequences.

\paragraph{Network and optimization settings.}
We use an embedding size and hidden layer size of 50.
The BiLSTM uses a hidden and output size of 25 (so that its
concatenated output has dimension 50).
Like \citet{caio-acl}, we optimize using Adam
with a learning rate of 0.0001. We use a batch size of 64 and no dropout. We
monitor tagging $F_1$ score on the validation set and decay the learning rate by
a factor of .9 when there is no improvement.

\paragraph{\lpsmap settings.} For the \smap baseline, we perform 10
iterations of the active set method. For \lpsmap, we use $\gamma=0.5$, perform
10 outer ADMM iterations, and 10 inner active set iterations,
warm-started from the previous solution.
We use a primal and dual convergence criterion of $\epsilon_p = \epsilon_d =
10^{-6}$. In the backward pass, we perform 100 power iterations.

\subsection{Natural Language Inference}
\paragraph{Network and optimization settings.}
We use 300-dimensional GloVe embeddings, kept frozen (not updated during
training.) We use a dimension of 100 for all other hidden layers, and ReLU
non-linearities.
We use a batch size of 128, dropout of .33, and tune the Adam learning rate
among $0.001 \cdot 2^k$ for $ k \in \{ -3, -2, -1, 0, 1 \}$.

\paragraph{\lpsmap settings.} We use exactly the same configuration as for the
ListOps task above.

\subsection{Multilabel}
\paragraph{Datasets.}
The \emph{bibtex} dataset comes with a given test split. For the
\emph{bookmarks} dataset we leave out a random test set. The dimensions and
statistics of the data are reported in Table~\ref{tab:mldata}.
\begin{table}
\centering
\caption{Multilabel dataset statistics.\label{tab:mldata}}
\begin{tabular}{l r r r r r r r}
\toprule
& samples & train & test & features & labels & density & cardinality \\
\midrule
\emph{bibtex} & 7395 & 4880 & 2515 & 1836 & 159 & 0.015 & 2.402 \\
\emph{bookmarks} & 87856 & 70284 & 17572 & 2150 & 208 & 0.010 & 2.028 \\
\bottomrule
\end{tabular}
\end{table}

\paragraph{Network and optimization settings.}
We use two 300-dimensional hidden layers with ReLU non-linearities..
We use a batch size of 32, no dropout, and an Adam learning rate of 0.001.

\paragraph{\lpsmap settings.}
For both LP-MAP and \lpsmap, we employ the same ADMM optimization settings.
For \emph{bibtex}, we use 100 iterations of ADMM, while for the larger
\emph{bookmarks} we use only 10. We use $\gamma=0.1$, the default value in \adq.
We use a primal and dual convergence criterion of $\epsilon_p = \epsilon_d =
10^{-6}$.
(As pairwise factors have closed-form solutions, the active set algorithm is not
used.)

\section{Code Samples}\label{supp:code}
We include here some self-contrained example scripts demonstrating the use of
\lpsmap for two of the models used in this paper. Up-to-date versions of these
scripts are available at \url{https://github.com/deep-spin/lp-sparsemap/tree/master/examples}.
\begin{listing}[h]
\begin{minted}{python}
import torch
from lpsmap import TorchFactorGraph, Xor, AtMostOne

def main():
    m, n = 3, 5

    eta = torch.randn(m, n, requires_grad=True)

    fg = TorchFactorGraph()
    u = fg.variable_from(eta)

    for i in range(m):
        fg.add(Xor(u[i, :]))

    for j in range(n):
        fg.add(AtMostOne(u[:, j]))  # some columns may be 0

    fg.solve()
    print(u.value)

    u.value[0, -1].backward()
    print(x.grad)

if __name__ == '__main__':
    main()
\end{minted}
\caption{Linear assignment problem using \lpsmap with fine-grained constraints.
(Figure~\ref{fig:match} right).}
\end{listing}

\begin{listing}[h]
\begin{minted}{python}
import torch
from lpsmap import TorchFactorGraph, DepTree, Budget


def main(n=5, constrain=False):

    print(f"n={n}, constrain={constrain}")

    torch.manual_seed(4)

    x = torch.randn(n, n, requires_grad=True)

    fg = TorchFactorGraph()
    u = fg.variable_from(x)
    fg.add(DepTree(u, packed=True, projective=True))

    if constrain:
        for k in range(n):

            # don't constrain the diagonal (root arc)
            ix = list(range(k)) + list(range(k + 1, n))

            fg.add(Budget(u[ix, k], budget=2))

    fg.solve()
    print(u.value)

    u.value[1, -1].backward()
    print(x.grad)


if __name__ == '__main__':
    main(constrain=False)
    main(constrain=True)
\end{minted}
\caption{Full code for constrained dependency parsing problem
(Figure~\ref{fig:valency}).}
\end{listing}

\end{document}